\documentclass{article}
\usepackage{siunitx}
\usepackage[table]{xcolor}
\newcolumntype{M}{p{2.1cm}} % Method column width
\definecolor{foca}{RGB}{255,235,170}
\definecolor{pi0blue}{RGB}{31,119,180}
\definecolor{highlightblue}{RGB}{230, 246, 253}
\DeclareRobustCommand{\hlblue}[1]{{\sethlcolor{highlightblue}\hl{#1}}}
\definecolor{vinred}{RGB}{251,3,1}
\usepackage{pgfplots}
\pgfplotsset{compat=1.18}
\usetikzlibrary{spy}
\usetikzlibrary{patterns}
\usepgfplotslibrary{groupplots}
\usepackage{multirow}

\pgfplotsset{width=7cm,compat=1.8}
\usepackage{pgfplotstable}
\usepackage{microtype}
\usepackage{graphicx}
\usepackage{subcaption}
\usepackage{booktabs} % for professional tables
\usepackage{xcolor}
\usepackage{soul}
\usepackage{float}
\usepackage{amsmath}
\usepackage{amsfonts}
\usepackage{booktabs}
\usepackage[most]{tcolorbox}
\usepackage{enumitem}
\usepackage{graphicx}
\usepackage{hyperref}

\usepackage[accepted]{icml2026}

\usepackage{amsmath}
\usepackage{amssymb}
\usepackage{mathtools}
\usepackage{amsthm}

\usepackage[capitalize,noabbrev]{cleveref}

\theoremstyle{plain}
\newtheorem{theorem}{Theorem}[section]
\newtheorem{proposition}[theorem]{Proposition}
\newtheorem{lemma}[theorem]{Lemma}

\theoremstyle{definition}
\newtheorem{definition}[theorem]{Definition}
\newtheorem{assumption}[theorem]{Assumption}
\theoremstyle{remark}
\newtheorem{remark}[theorem]{Remark}

\newcommand{\R}{\mathbb{R}}
\newcommand{\E}{\mathbb{E}}
\newcommand{\iid}{\mathrm{i.i.d.}}
\newcommand{\ot}{\mathbf{o}_{t}}

\newcommand{\hhid}{\mathbf{h}_{t}}
\newcommand{\rexp}{\mathbf{r}_{t}^{\mathrm{exp}}}
\newcommand{\rexpout}{\tilde{\mathbf{r}}_{t}^{\mathrm{exp}}}
\newcommand{\rimp}{\mathbf{r}_{t}^{\mathrm{imp}}}
\newcommand{\rimpout}{\tilde{\mathbf{r}}_{t}^{\mathrm{imp}}}

\usepackage[textsize=tiny]{todonotes}

\icmltitlerunning{FOCA-VLA}

\begin{document}
\twocolumn[
\vspace{-0.15in}
  \icmltitle{\textsc{FOCA}: Future-Oriented Conditioning for Data-Efficient Vision-Language-Action Adaptation}
  % FOCUS-IL VLA \\
  %   VinRobotics}

  % It is OKAY to include author information, even for blind submissions: the
  % style file will automatically remove it for you unless you've provided
  % the [accepted] option to the icml2026 package.

  % List of affiliations: The first argument should be a (short) identifier you
  % will use later to specify author affiliations Academic affiliations
  % should list Department, University, City, Region, Country Industry
  % affiliations should list Company, City, Region, Country

  % You can specify symbols, otherwise they are numbered in order. Ideally, you
  % should not use this facility. Affiliations will be numbered in order of
  % appearance and this is the preferred way.
  \icmlsetsymbol{equal}{*}
  \icmlsetsymbol{foca}{$\dagger$}
\vspace{-0.15in}
  \begin{icmlauthorlist}
  \small{
    \icmlauthor{Duc Minh Nguyen}{equal,vinr,vinuni}
    \icmlauthor{Nghiem Tuong Diep}{equal,vinr,vinuni}
    \icmlauthor{Binh Gia Nguyen}{equal,vinr,vinuni}
    \icmlauthor{Trong-Bao Ho}{vinr}
    \icmlauthor{Doanh Le}{vinuni}
    \icmlauthor{Tan Q. Nguyen}{vinr}
    \vspace{0.01in}
    \icmlauthor{Thien-Loc Ha}{vinr}
    \icmlauthor{Nhiem Tran}{vinr}
    \icmlauthor{Bao Thach}{vinr,utal}
    \icmlauthor{Nhat X. Tran}{vinr}
    \icmlauthor{Tuan A. Tran}{dfki}
    \icmlauthor{Artur Habuda}{tud}
    \icmlauthor{Philip Lund Møller}{tud}
    \icmlauthor{Tran Nguyen Le}{tud}
    \icmlauthor{Daniel Sonntag}{dfki,oldenburg}
    \icmlauthor{Mathias Niepert}{uni-stuttgart,mpi-is}
    \icmlauthor{Khoa D. Doan}{vinuni}
    \icmlauthor{Vu Duong}{vinuni}
    \icmlauthor{Hung Ngo}{vinr}
    \icmlauthor{Minh N. Vu}{vinr,vinuni}
    \vspace{0.01in}
    \icmlauthor{Duy M. H. Nguyen}{foca,dfki,uni-stuttgart,mpi-is}
    \icmlauthor{An Thai Le}{foca,vinr,vinuni}
    \icmlauthor{Ngo Anh Vien}{foca,vinr,vinuni}}
  \end{icmlauthorlist}
  % \printAffiliationsAndNotice{\icmlEqualContribution}
  \icmlaffiliation{vinuni}{Center for AI Research, VinUniversity, Vietnam;}
  \icmlaffiliation{vinr}{VinRobotics, Vietnam;}
  \icmlaffiliation{utal}{University of Utah, USA;}
   \icmlaffiliation{dfki}{German Research Center for Artificial Intelligence (DFKI);}
   \icmlaffiliation{tud}{Technical University of Denmark, Denmark;}
  \icmlaffiliation{oldenburg}{University of Oldenburg, Germany;}
  \icmlaffiliation{uni-stuttgart}{University of Stuttgart, Germany;}
  \icmlaffiliation{mpi-is}{Max Planck Research School for Intelligent Systems (IMPRS-IS), Germany}
  \icmlcorrespondingauthor{Ngo Anh Vien}{v.vienna@vinrobotics.net}
  % You may provide any keywords that you find helpful for describing your
  % paper; these are used to populate the "keywords" metadata in the PDF but
  % will not be shown in the document
  \icmlkeywords{Machine Learning, ICML}
  \vskip 0.2in
]
  \printAffiliationsAndNotice{\icmlEqualContribution. $\dagger$: Senior Authors (Scientific Advisor: Duy M. H. Nguyen; Core Developers: An Thai Le, Ngo Anh Vien).}
% this must go after the closing bracket ] following \twocolumn[ ...

% This command actually creates the footnote in the first column listing the
% affiliations and the copyright notice. The command takes one argument, which
% is text to display at the start of the footnote. The \icmlEqualContribution
% command is standard text for equal contribution. Remove it (just {}) if you
% do not need this facility.

% Use ONE of the following lines. DO NOT remove the command.
% If you have no special notice, KEEP empty braces:
% \printAffiliationsAndNotice{}  % no special notice (required even if empty)
% Or, if applicable, use the standard equal contribution text:
% \printAffiliationsAndNotice{\icmlEqualContribution}

\vspace{-0.2in}
\begin{abstract}
Vision–Language–Action (VLA) models enable general-purpose robotic control via large-scale multimodal pretraining, yet their effectiveness under few-shot imitation learning remains limited. We conduct a systematic stress test of state-of-the-art VLA models and show that performance degrades sharply as demonstrations are reduced, revealing a key weakness of existing adaptation strategies. To address this, we introduce FOCA, a future-oriented conditioning framework for data-efficient VLA adaptation. FOCA combines \textit{explicit prediction of task-grounded future interaction} embeddings with \textit{implicit alignment to future goal observations}, enabling long-horizon reasoning in latent space without pixel-level prediction. This formulation naturally \textit{supports action-free} co-training with synthetic videos from video world models and can be interpreted as learning a future-conditioned value-like representation. Extensive experiments demonstrate FOCA achieves 95.7\% success with 20 demonstrations on LIBERO, improves 7–12\% on RoboCasa, and delivers up to 26\% absolute gains on real robots, establishing a new state of the art in few-shot VLA adaptation. Our code is available at this \href{https://focavla.github.io/}{link}.
% The code and checkpoints are available at \href{https://focavla.github.io/}{https://focavla.github.io}.
% Extensive experiments demonstrate substantial gains in data efficiency: on LIBERO, FOCA achieves 95.7\% success with only 20 demonstrations, outperforming Pi-Zero trained with  40-50 demonstrations, and on RoboCasa improves success rates by 7–12\% across tasks on both $\pi_{0}$ and GR00T N1.5, establishing a new state of the art in few-shot VLA imitation learning. On real-robot benchmarks, FOCA also consistently outperforms these VLA models across diverse manipulation tasks and demonstration budgets, achieving up to a 26\% absolute improvement in success rate.
% On real-robot evaluations, FOCA consistently outperforms strong VLA baselines across diverse manipulation tasks and varying demonstration budgets, achieving up to a 26\% absolute improvement in task success.
\end{abstract}

\vspace{-0.2in}
\addtocontents{toc}{\protect\setcounter{tocdepth}{-1}}
\section{Introduction}
\label{sec:intro}
\vspace{-0.05in}
Vision--Language--Action (VLA) models extend large vision--language models to generate robot actions conditioned on visual observations and language instructions, enabling scalable robotic control across tasks, embodiments, and environments. Leveraging large-scale multimodal pretraining, recent VLA systems - spanning web-knowledge transfer to robotics and open generalist robot policies - have demonstrated impressive generalization across diverse manipulation settings and platforms~\citep{zitkovich2023rt,kim2024openvla,qu2025spatialvla,octo_2024,2025pi0,2025pi05,bjorck2025gr00t,team2025gemini,team2025gemini15}. Alongside these rapidly advancing model families, recent analyses have begun to clarify key design choices that influence the performance of generalist VLA policies~\citep{liu2025towards}.

Despite these advances, the data efficiency of VLA models during task adaptation remains insufficiently understood. In practice, collecting task-specific robot demonstrations is costly, and deployment often involves continual distribution shifts - new object instances, changed viewpoints, or minor workspace reconfigurations - that make large demonstration budgets impractical. Yet many evaluations of modern VLAs still adapt with substantially more than a handful of trajectories, leaving open whether pretrained VLA representations reliably translate into \emph{few-shot} adaptability. In particular, the behavior of state-of-the-art VLA models under few-shot imitation learning (e.g., adapting with only 10--30 demonstrations) has not been systematically stress tested. Motivated by this gap, we evaluate strong pretrained VLA models under severely limited demonstration budgets and observe substantial performance degradation, highlighting the need for more data-efficient adaptation mechanisms.

Our key observation is that few-shot adaptation is limited not only by the number of action-labeled trajectories, but also by how much \emph{learning signal} each trajectory provides. Standard behavioral cloning fine-tuning provides one supervised action target per time step; however, each demonstration also contains rich \emph{future observations} that reveal how the scene should evolve when the task is executed successfully. Existing adaptation largely treats demonstrations as independent (observation, action) pairs and does not explicitly leverage the long-horizon structure visible in future frames. These future observations can provide dense, task-aligned supervision about long-horizon task progress, beyond local action matching. Moreover, future-conditioned learning has long been a principled way to capture long-horizon structure, as in goal-/future-conditioned value functions and successor-style representations~\citep{schaul2015uvfa,dayan1993sr}. We therefore propose \textsc{FOCA}, a \emph{future-oriented conditioning} approach for data-efficient VLA adaptation that explicitly exploits future observations to shape representations and improve long-horizon decision-making.

% In contrast to prior works that predict pixel-level future frames~\cite{zhao2025cot,hu2025videoprediction}, \textsc{FOCA} operates entirely in latent embedding space, avoiding the computational cost and redundant scene modeling inherent to dense reconstruction. Further, instead of modeling the full scene, \textsc{FOCA} focuses on \emph{task-grounded dynamic regions} corresponding to robot-object interactions inferred from the current observation and instruction. Specifically, we introduce a small set of representative tokens that aggregate joint vision-language embeddings and feed them into a lightweight decoder that predicts latent embeddings for future interaction regions conditioned on a positional embedding. This decoder conditions the policy on anticipated outcomes while remaining invariant to static or task-irrelevant content. This design is conceptually aligned with a broader trend in self-supervised learning that emphasizes \textit{prediction in representation space rather than pixel-space generation}~\citep{oord2018cpc,assran2023ijepa}.

In contrast to prior approaches that predict pixel-level future frames~\citep{zhao2025cot,hu2025videoprediction}, \textsc{FOCA} operates entirely in latent embedding space, avoiding the computational cost and redundancy of dense scene reconstruction. Moreover, rather than modeling full scenes, \textsc{FOCA} focuses supervision on task-grounded dynamic regions corresponding to robot--object interactions inferred from the current observation and instruction. We achieve this by introducing a small set of representative tokens that summarize joint vision--language context and are decoded to predict future latent embeddings of interaction regions. This design conditions the policy on anticipated outcomes while remaining invariant to static or task-irrelevant content, aligning with recent advances in self-supervised learning that emphasize prediction in representation space over pixel-space generation~\citep{oord2018cpc,assran2023ijepa}.

Beyond explicit prediction, \textsc{FOCA} introduces an implicit future-conditioning objective that provides an action-free \emph{auxiliary} supervision signal during adaptation. Instead of predicting future states explicitly, \textsc{FOCA} aligns task-grounded interaction tokens extracted from the current observation with vision embeddings of future goal frames sampled later in the same episode, while contrasting them against embeddings from other tasks. This formulation allows \textsc{FOCA} to directly leverage \emph{synthetic rollouts generated by video world models} (e.g., DreamGen~\citep{jang2025dreamgen}) as additional future-structured supervision, without requiring pseudo-actions~\citep{ye2025latent} or inverse dynamics~\citep{baker2022video}. As a result, the policy can be rapidly adapted to new scenarios by modifying prompts and generating corresponding videos.

% Conceptually, this implicit alignment encourages the model to internalize long-horizon task structure and goal progression in representation space. Under geometric future sampling, we show that the objective can be interpreted as estimating goal-conditioned discounted occupancies, i.e., a value-like notion of reachability, connecting FOCA to contrastive formulations of goal-conditioned reinforcement learning and successor representations~\citep{watkins1992qlearning,dayan1993sr,schaul2015uvfa}. This perspective explains why future-conditioning provides effective long-horizon learning signals during imitation-based adaptation, despite operating purely on representations rather than action~\citep{eysenbach2021clearning,zheng2023stabilizing}.
Conceptually, the implicit alignment objective encourages the model to encode long-horizon task structure and goal progression directly in representation space. Under geometric future sampling, this objective can be interpreted as estimating goal-conditioned discounted occupancies, a value-like notion of reachability, thereby connecting \textsc{FOCA} to contrastive formulations of goal-conditioned reinforcement learning and successor representations~\citep{watkins1992qlearning,dayan1993sr}. This perspective explains why future-oriented conditioning can provide effective long-horizon supervision during imitation-based adaptation, even though the implicit auxiliary objective itself does not require action labels~\citep{eysenbach2021clearning}.

% Finally, alongside future goal learning, we study the effect of virtual-view generation as an auxiliary training signal and show that it improves spatial consistency and robustness, especially for geometry-sensitive manipulation tasks.
% we find that incorporating virtual-view supervision further improves spatial consistency and robustness, particularly for tasks requiring precise, geometry-conditioned actions.

\noindent\textbf{Contributions.} We make the following contributions:
\begin{itemize}
\setlength{\itemsep}{0pt}
\setlength{\parskip}{3pt}
\setlength{\parsep}{3pt}
\vspace{-0.15in}
    \item \textbf{Few-shot stress test for VLA adaptation.} We provide a systematic evaluation of state-of-the-art VLA models under few-shot imitation learning, revealing substantial degradation when adapting from only 10--30 demonstrations.
    \item \textbf{Future-oriented conditioning for data-efficient adaptation.} We introduce \textsc{FOCA}, which augments VLA adaptation with explicit and implicit future-conditioning objectives in latent space, focusing on task-grounded interaction tokens rather than dense pixel prediction. The implicit objective naturally supports action-free auxiliary supervision from synthetic videos.
    \item \textbf{Strong empirical results across simulation and real robots.} We run experiments across a diverse set of 9 simulated benchmarks, a simulated humanoid and 3 real-robot setups with an Aloha robot arm, observing consistent improvements in few-shot adaptation over strong VLA baselines and adaptation methods.
\end{itemize}
\vspace{-0.15in}
\section{Related Works}
\label{sec:related}
\vspace{-0.05in}
\paragraph{VLA Models and Generalist Robot Policies.}
Recent work on VLA models unifies perception, language understanding, and control to enable general-purpose robotic manipulation from images and natural language. Existing approaches largely fall into two paradigms. Autoregressive VLAs (e.g., RT-1/2~\cite{brohan2022rt1,zitkovich2023rt}, OpenVLA~\cite{kim2024openvla}, Gemini~\cite{team2025gemini}) generate actions sequentially using large language model backbones, achieving strong generalization through large-scale imitation learning. Diffusion-based VLAs (e.g., $\pi_0$~\cite{2025pi0}, Octo~\cite{octo_2024}, GR00T~\cite{bjorck2025gr00t}) instead synthesize continuous action sequences via diffusion or flow-matching, offering improved robustness and parallel action generation. Despite their success, these models are typically trained and adapted using substantial demonstration budgets, leaving their data efficiency under few-shot adaptation largely unexplored.

FOCA complements these lines of work by addressing data-efficient VLA adaptation. Rather than proposing a new policy architecture, FOCA injects future-oriented learning signals into pretrained VLAs during fine-tuning, enabling effective few-shot transfer across tasks and robot embodiments with significantly fewer demonstrations.

% , as well as analyses of which design choices matter when building generalist robot policies~\citep{liu2025towards}.

% Across both formulations, these models demonstrate strong generalization across tasks and robot embodiments, with diffusion-based policies additionally benefiting from more efficient inference through parallel action generation compared to autoregressive decoding. However, they are typically evaluated under relatively data-rich training regimes. 

% Our work focuses on low-resource adaptation to examine how effectively VLA models transfer to new robot-specific tasks and embodiments under limited supervision. We validate this setting on two representative state-of-the-art VLA models, $\pi_{0}$ and GR00T-N1.5, as well as parameter-efficient fine-tuning techniques developed for LLM, enabling a focused assessment of data-efficient adaptation strategies for modern VLA systems.
\begin{figure*}[!hbt]
\vspace{-0.12in}
\centering
\includegraphics[width=0.9\linewidth]{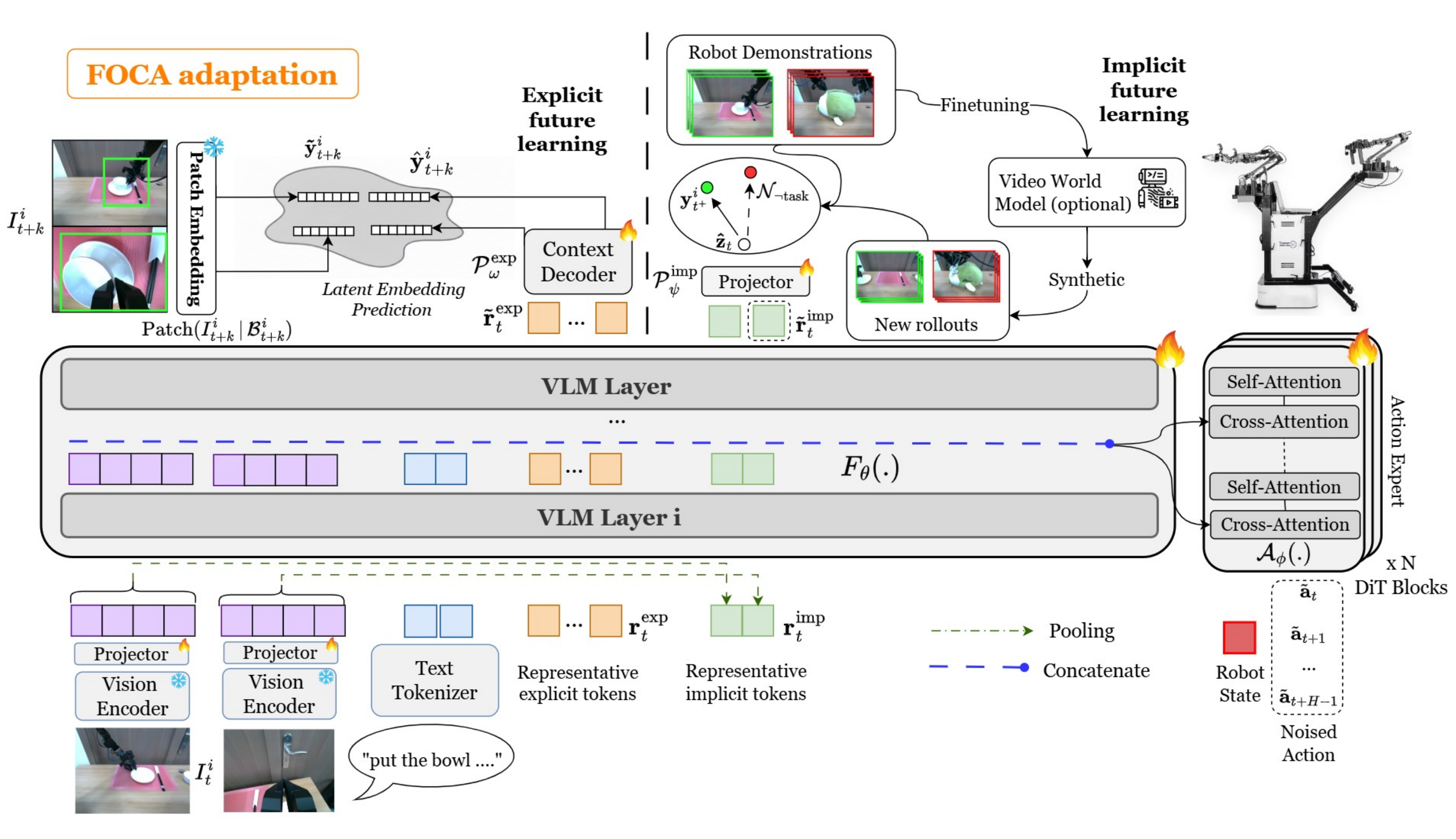}
\vspace{-0.1in}
\caption{FOCA augments VLA adaptation by injecting future-oriented conditioning into a vision–language model $\mathcal{F}_\theta(\cdot)$ and a diffusion-based action policy $\mathcal{A}_\phi(\cdot)$. It jointly learns explicit latent prediction of task-grounded future interaction (bounding boxes $\mathcal{B}_{t^{+}}^{i}$) embeddings via learnable $\rexp$ tokens and implicit alignment to future goal observations via $\rimp$ tokens, enabling data-efficient, action-free adaptation and seamless integration with synthetic videos from video world models.}
\vspace{-0.15in}
\label{fig:method_overview}
\end{figure*}
\vspace{-0.05in}
\paragraph{Few-shot Imitation Learning and Data-Efficient Adaptation.}
% Beyond large-scale pretraining, a practical bottleneck is adapting a generalist policy to a new environment or task with only a small number of demonstrations.
% Classic behavior cloning can be brittle under compounding error, motivating more expressive sequence models and generative formulations for visuomotor imitation, including Action Chunking with Transformers (ACT)~\citep{zhao2023act} and diffusion-based policy learning such as Diffusion Policy~\citep{chi2023diffusionpolicy}.
Adapting a robot policy to new tasks with few demonstrations remains a central challenge in robotics~\cite{duan2017one}. Classic behavior cloning is brittle under compounding error, motivating expressive visuomotor imitation models such as Action Chunking with Transformers (ACT)~\citep{zhao2023act} and diffusion-based policies~\citep{chi2023diffusionpolicy}. While effective within their training distributions, these models are typically trained from scratch or with limited pretraining, which restricts their ability to generalize to unseen environments and often necessitates substantial task-specific data for adaptation.

At the same time, to mitigate the computational and data cost of adapting pretrained VLA models in low-resource settings~\cite{kim2024openvla,mitra2025mechanistic}, parameter-efficient fine-tuning (PEFT) approaches such as LoRA~\citep{hu2021lora} and DoRA~\citep{liu2024dora} have been explored. ControlVLA~\citep{li2025controlvla} in another direction introduces object-centric conditioning via a ControlNet-style interface~\citep{zhang2023controlnet}.
% while RICL~\citep{sridhar2025ricl} leverages retrieval-based in-context learning to improve performance on new tasks.
In contrast, FOCA explicitly incorporates \emph{future-oriented} auxiliary objectives via explicit prediction and implicit alignment to increase the learning signal per demonstration and enable more data-efficient adaptation, including the use of action-free video data.

% adapting large pretrained backbones via full fine-tuning can be computationally expensive and can risk catastrophic forgetting, motivating \emph{parameter-efficient fine-tuning} (PEFT) approaches such as LoRA~\citep{hu2021lora} and DoRA~\citep{liu2024dora}.
% In the VLA setting, recent work also develops specialized adaptation interfaces: ControlVLA~\citep{li2025controlvla} introduces object-centric conditioning through a ControlNet-style mechanism~\citep{zhang2023controlnet}, while RICL~\citep{sridhar2025ricl} targets retrieval-based in-context adaptation, enabling improvement on new tasks by selecting relevant demonstrations at test time with minimal or no parameter updates.
% FOCA differs from both PEFT-only and retrieval-only paradigms by explicitly optimizing \emph{future-oriented} auxiliary objectives (explicit prediction and implicit alignment), aiming to increase the learning signal per demonstration and to better exploit additional sources of ``future'' information (e.g., videos without actions).

\vspace{-0.1in}
\paragraph{Robot Knowledge Prediction and Contrastive Objectives.} 

Prior VLA models incorporate auxiliary prediction objectives, such as chain-of-thought reasoning~\cite{huang2025thinkact}, trajectory generation~\cite{zheng2025tracevla}, or future-state forecasting~\cite{team2025gemini15}, to improve spatial and temporal understanding. In particular, pixel-level future frame prediction has been explored as a form of self-supervision~\cite{seo2023multi,zhao2025cot}, but dense image reconstruction is often redundant and difficult to optimize, especially in low-data regimes. FOCA instead predicts future information directly in latent space and focuses supervision on task-grounded interaction regions, avoiding full-scene reconstruction while preserving control-relevant structure.

Contrastive learning has also been widely used in robotics to improve transfer and generalization, including temporal~\cite{sermanet2018time}, multimodal~\cite{rana2023contrastive,ma2024contrastive}, and trajectory-level objectives~\cite{lee2025class}. However, most existing methods are designed for CLIP-style models or relatively small policies, and do not readily scale to large pretrained VLA models. FOCA addresses this by introducing a compact set of representative tokens that summarize vision–language context and serve as an efficient interface for future alignment within the VLM. While closely related to FLARE~\cite{zheng2025flare}, which applies future alignment during diffusion-policy pretraining, FOCA injects future-oriented objectives directly into the VLM during downstream adaptation and yields additional gains even on GR00T-N1.5~\cite{bjorck2025gr00t}, whose diffusion policy already incorporates FLARE-style pretraining. We present in the Appendix more details of the discussion on those related works.

\vspace{-0.05in}
\section{Methods}
\label{sec:method}
\vspace{-0.05in}

\subsection{Problem Definition}
\vspace{-0.05in}
We formulate \textsc{FOCA} as a plug-and-play framework that augments diffusion-based VLA policies, such as $\pi_{0}$ and \textsc{GR00T-N1.5}, for few-shot imitation learning. An overview is shown in Figure~\ref{fig:method_overview}.

Let $\mathcal{D}=\{(\tau_m,l_m)\}_{m=1}^{M}$ denote a dataset of episodes paired with task instructions. Each episode $\tau=\{(\mathbf{o}_t,\mathbf{a}_t)\}_{t=1}^{T}$ consists of multi-view observations $\mathbf{o}_t=[I_t^{1},\ldots,I_t^{N}]$ and action chunks $\mathbf{a}_t=(a_t,\ldots,a_{t+k})$. Given an instruction $l$ and observation $\mathbf{o}_t$, the pretrained vision--language model (VLM) $\mathcal{F}_\theta$ produces a hidden representation:
\setlength{\abovedisplayskip}{3pt}
\setlength{\belowdisplayskip}{3pt}
\begin{equation}
    \hhid=\mathcal{F}_\theta(l,\mathbf{o}_t).
\end{equation}

\textsc{FOCA} augments the VLM by introducing two \emph{types of auxiliary tokens}: an explicit future token set $\rexp$ and an implicit future token set $\rimp$. These tokens are concatenated with $(l,\mathbf{o}_t)$ and jointly processed by the VLM:
\begin{equation}
    \hhid,\,\rexpout,\,\rimpout
    = \mathcal{F}_\theta(l,\mathbf{o}_t,\rexp,\,\rimp).
\end{equation}
The token outputs are mapped by lightweight heads. The explicit head $\mathcal{P}^{\mathrm{exp}}_{\omega}(\cdot)$ produces interaction-focused embeddings from $\rexpout$ for predicting future gripper--object interactions, while the implicit head $\mathcal{P}^{\mathrm{imp}}_{\psi}(\cdot)$ maps $\rimpout$ to a contrastive embedding used for alignment with future goal observations across time and episodes.

\vspace{-0.1in}
\paragraph{Action Generation with Future-Aware Representations.}
Given the future-aware representations from the VLM, actions are generated using a denoising diffusion Transformer $\mathcal{A}_{\phi}(\cdot)$. At diffusion (flow-matching) time $\rho\in[0,1]$, the policy is conditioned on the robot state $\mathbf{s}_{t}$, a noised action chunk $\tilde{\mathbf{a}}_{t,\rho}$, and the joint vision--language embeddings $\{\hhid,\,\rexpout,\,\rimpout\}$:
\begin{equation}
\hat{\mathbf{v}}_{t,\rho}
= \mathcal{A}_{\phi}\!\left(\mathbf{s}_{t}, \tilde{\mathbf{a}}_{t,\rho}, \rho \,\middle|\,
\hhid,\,\rexpout,\,\rimpout\right),
\end{equation}
where $\hat{\mathbf{v}}_{t,\rho}$ denotes the predicted denoising direction/velocity for the action chunk. At inference time, we start from Gaussian noise and integrate (or iteratively denoise) to obtain the action chunk $\mathbf{a}_t$.

The conditioning mechanism depends on the underlying VLA architecture. \textsc{GR00T-N1.5} injects the final VLM embedding into $\mathcal{A}_{\phi}$ via cross-attention at every Transformer layer, whereas $\pi_{0}$ conditions each diffusion layer using intermediate VLM features through layer-wise cross-attention. We next introduce the future-oriented training objectives.

\vspace{-0.05in}
\subsection{Explicit Future Latent Prediction}
For manipulation, successful execution is largely determined by the future evolution of interactions between the robot’s gripper and task-relevant objects specified by the instruction. This motivates forecasting future representations of \emph{object-centric interaction regions} where the gripper engages with language-grounded objects, providing a direct and data-efficient learning signal for action generation.

\vspace{-0.05in}
\paragraph{Object-Centric Region Construction.}
Given a view $I_{t^{+}}^{i}$ at time $t^+>t$ within the same episode and instruction $l$, we use an off-the-shelf grounding model $\mathcal{G}(\cdot)$ to localize task-relevant objects and the robot gripper:
\[
\{\mathcal{B}_{t^{+},q}^{i}\}_{q=1}^{K} = \mathcal{G}(I_{t^{+}}^{i}, l),
\]
where each $\mathcal{B}_{t^{+},q}^{i}$ is a bounding box corresponding to a language-grounded object or the gripper. We define a single \emph{object-centric interaction region} as the minimal enclosing box:
\[
\mathcal{B}_{t^{+}}^{i}
=
\mathrm{Union}\big(\{\mathcal{B}_{t^{+},q}^{i}\}_{q=1}^{K}\big),
\]
which captures the spatial extent of robot--object interactions and serves as the target region for explicit future latent prediction.

\vspace{-0.15in}
\paragraph{Latent-space Future Interaction Prediction.}
At each timestep $t$, we initialize a set of explicit prediction tokens $\rexp \in \mathbb{R}^{n_{\mathrm{e}}\times d}$ as learnable parameters (optionally allocating $n_{\mathrm{e}}/N$ tokens per view) and append them together with the visual tokens from $\ot$ and language tokens from $l$ as input to the VLM. We map the explicit token outputs $\rexpout$ to a prediction space via a lightweight predictor $\mathcal{P}^{\mathrm{exp}}_{\omega}(\cdot)$ to produce a future region embedding for view $i$ at time $t^+$:
\begin{equation}
    \hat{\mathbf{y}}_{t^{+}}^{i}
    = \mathcal{P}^{\mathrm{exp}}_{\omega}\!\left(\rexpout, \mathrm{PE}(\mathcal{B}_{t^{+}}^{i})\right)
    \in \mathbb{R}^{N_{p} \times d_{e}},
\end{equation}
where $N_p$ denotes the number of patch tokens used to represent the interaction region (e.g., via cropping/resizing to a fixed patch grid), $\mathrm{PE}(\cdot)$ is a positional encoding of $\mathcal{B}_{t^{+}}^{i}$ (e.g., normalized box coordinates), and $\mathcal{P}^{\mathrm{exp}}_{\omega}$ can be implemented as a small Transformer.

To form the prediction target, we extract object-centric patch features from $I_{t^{+}}^{i}$ restricted to $\mathcal{B}_{t^{+}}^{i}$ using a frozen vision encoder $\mathcal{V}(\cdot)$ inside the VLM:
\begin{equation}
\label{eq:future_target}
    \tilde{\mathbf{y}}_{t^{+}}^{i}
    = \mathrm{Patch}(I_{t^{+}}^{i}\,|\,\mathcal{B}_{t^{+}}^{i})
    \in \mathbb{R}^{N_{p} \times d_{e}}.
\end{equation}
where $\mathrm{Patch}(.)$ denotes a patch layer inside the vision encoder $\mathcal{V}(.)$~\cite{zhai2023sigmoid}, extracting image local features and further restricted to tokens inside $\mathcal{B}_{t^{+}}^{i}$, so the target captures only interaction-relevant content. Finally, we formulate the predicted embedding to the target in the \textit{latent space} \textit{across views} as:
\begin{equation}
\label{eq:exp_future}
\mathcal{L}_{\mathrm{exp}}
=
\mathbb{E}_{t,t^+,i}
\left[
\left\|
\hat{\mathbf{y}}_{t^+}^{i}
-
\tilde{\mathbf{y}}_{t^+}^{i}
\right\|_2^2
\right].
\end{equation}

\vspace{-0.05in}
\subsection{Implicit Future Alignment}
While explicit prediction encourages the model to anticipate \emph{how} object-centric interactions will evolve, the implicit future alignment provides a complementary signal that captures \emph{where} the task is heading at a representation level, encouraging long-horizon task semantics and goal reachability.

Given multi-view observation $\ot = \{I_{t}^{i}\}_{i=1}^{N}$, we initialize implicit tokens $\rimp \in \mathbb{R}^{N \times d}$ by pooling features from the frozen vision encoder $\mathcal{V}(.)$ followed by another shared learnable projection layer $\mathcal{W}^{\mathrm{imp}}$ as:
\setlength{\abovedisplayskip}{4pt}
\setlength{\belowdisplayskip}{4pt}
\begin{equation} \label{eq:context_rimp}
    \rimp = \mathrm{Agg}\left(\{\mathcal{W}^{\mathrm{imp}}(\mathrm{Pool}(\mathcal{V}(I_{t}^{i})))\}_{i=1}^{N}\right)
\end{equation}
where $\mathrm{Agg}(.)$ denotes the concatenation operator and $\mathrm{Pool}(.)$ denotes for mean pooling layer.
% ({\color{red} Nghiem: It should be mean pooling.}). 

The output $\rimpout$ of $\rimp$ after the VLM model is mapped to a contrastive embedding space via a projection head $\mathcal{P}_{\psi}^{\mathrm{imp}}(.)$:
\begin{equation} \label{eq:context_z}
    \{\mathbf{\hat{z}}_{t}\}_{i} =  \mathcal{P}_{\psi}^{\mathrm{imp}}\left(\rimpout\right) \in \mathbb{R}^{d_{e}}
\end{equation}
Similarly to in equation \eqref{eq:future_target}, we also define the future goal embeddings at timestep $t^{+}$ within the same episode as:
\begin{equation} \label{eq:context_y}
    \mathbf{y}_{t^{+}}^{i} = \mathrm{Pool}(\mathcal{V}(I_{t^{+}}^{i})\,|\,\mathcal{B}_{t^{+}}^{i}) \in \mathbb{R}^{d_{e}},
\end{equation}
that extract global embedding vision using $\mathrm{Pool}(.)$ function and restricted inside the region $\mathcal{B}_{t^{+}}^{i}$.  Our goal is to make the representative token $\mathbf{r}^{\mathrm{imp}}_{\mathrm{out}}$ encode future-aware representations while remaining discriminative against semantically similar but task-mismatched episodes ($l' \neq l$). Accordingly, we construct the negative set from other minibatch episodes with different task descriptions:
% Our goal is to equip a representative token $\rimpout$ aware of the latent future frames while being robust to other similar concepts sampled from other tasks having different language descriptions $ l' \neq l$. To this end, we construct the negative set as other episodes in the same minibatch that correspond to different task descriptions:
\begin{equation} \label{eq:mismatch_pool}
    \mathcal{N}_{\neg\mathrm{task}} = \left\{\mathbf{y}_{t'}^{j} | (\mathbf{o_{t'}}, l_{m'}) \in \tau_{m'}, m' \neq m,\, l_{m'} \neq l_{m}\right\},
\end{equation}
where all embeddings $\mathbf{y}_{t'}^{j}$ are computed using the same object-centric aggregation operator over the corresponding region $\mathcal{B}_{t'}^{j}\ (t' > t)$. Finally, we optimize the InfoNCE-style loss~\cite{oord2018cpc}:
\begin{equation}
\label{eq:iml_future}
\mathcal{L}_{\mathrm{imp}}
=
-\mathbb{E}_{i}
\log
\frac{
\exp\!\left(
s(\mathbf{z}_t^i, \mathbf{y}_{t^+}^{i}) / \lambda
\right)
}{
\sum\limits_{\mathbf{y} \in \{\mathbf{y}_{t^+}^{i}\} \cup \mathcal{N}_{\neg \mathrm{task}}}
\exp\!\left(
s(\mathbf{z}_t^i, \mathbf{y}) / \lambda
\right)
},
\end{equation}
where $s(.;,)$ is cosine similarity, $\lambda$ is a temperature.  We sample the future offset $k=t^+ - t$ from a (truncated) geometric distribution, matching the discounted reachability interpretation in Sec.~\ref{subsec:theory}.

\vspace{-0.05in}
\subsection{Co-training Future Prediction with Denoising Diffusion Transformer}

\paragraph{Structured Token Isolation between Explicit and Implicit Tasks.}
To enforce functional disentanglement between explicit and implicit future modeling, we introduce a \emph{structured token-isolation mask} within the VLM attention layers~\citep{zhao2025cot}. Specifically, we block direct information exchange between the explicit prediction tokens $\rexp$ and the implicit alignment tokens $\rimp$, while allowing both to attend to shared vision and language tokens:
$\mathrm{Attn}(\rexp \leftrightarrow \rimp)=0$, and
$\mathrm{Attn}(\rexp \leftrightarrow \text{vision/text})$,
$\mathrm{Attn}(\rimp \leftrightarrow \text{vision/text})$ are unconstrained.
The isolation constraint is implemented differently depending on the attention structure of the VLA backbone (bidirectional vs.\ causal), ensuring the two token groups learn complementary roles without supervisory leakage.

\vspace{-0.2in}
\paragraph{Co-training across Objectives.}
We sample a flow-matching time $\rho\in[0,1]$ and noise $\epsilon\sim\mathcal{N}(\mathbf{0},\mathbf{I})$, and form a noised action chunk:
\begin{equation}
    \tilde{\mathbf{a}}_{t,\rho} = \rho\, \mathbf{a}_{t} + (1 - \rho)\, \epsilon.
\end{equation}
For this linear interpolation, the target velocity is $(\mathbf{a}_t - \epsilon)$. We train the diffusion policy to match this target via the flow-matching loss:
% \begin{equation}
%     \mathcal{L}_{\mathrm{fm}}
%     =
%     \mathbb{E}_{t,\rho,\epsilon}
%     \left[
%     \left\|
%     \mathcal{A}_{\phi}\!\left(\mathbf{s}_{t}, \tilde{\mathbf{a}}_{t,\rho}, \rho \,\middle|\,
%     \hhid, \rexpout, \rimpout \right)
%     - (\mathbf{a}_{t}-\epsilon)
%     \right\|^{2}
%     \right].
%     \label{eq:denoised_diffusion}
% \end{equation}
\begin{align}
\mathcal{L}_{\mathrm{fm}}
&=
\mathbb{E}_{t,\rho,\epsilon}
\left[
\left\|
\mathcal{A}_{\phi}\!\left(
\mathbf{s}_{t}, \tilde{\mathbf{a}}_{t,\rho}, \rho \,\middle|\,
\hhid, \rexpout, \rimpout
\right)
\right.\right. \nonumber \\
& \quad\quad\quad\quad \left.\left.
- (\mathbf{a}_{t}-\epsilon)
\right\|^{2}
\right].
\label{eq:denoised_diffusion}
\end{align}
In summary, we co-train three objectives:
$\mathcal{L}_{\mathrm{total}} = \mathcal{L}_{\mathrm{fm}} + \mathcal{L}_{\mathrm{exp}} + \mathcal{L}_{\mathrm{imp}}.$ For action-free videos (e.g., synthetic rollouts), we omit $\mathcal{L}_{\mathrm{fm}}$ and optimize only $\mathcal{L}_{\mathrm{exp}}$ and $\mathcal{L}_{\mathrm{imp}}$. During inference, the learned tokens are used solely to condition action prediction; no future prediction is performed.

% Part 1 =====================================================================
\begin{table*}[t]
\centering
% \vspace{-0.05in}
\caption{Performance comparison of FOCA vs SOTA VLA models under different data scales.}
\vspace{-0.1in}
\label{tab:foca_fewshot}
\begin{minipage}[t]{0.5\textwidth}
\centering
\subcaption{\scriptsize
 Sensitivity to few-shot imitation learning rate}
\scriptsize
\begin{tabular}{lp{0.4cm}p{0.4cm}p{0.4cm}p{0.4cm}|p{0.4cm}p{0.4cm}p{0.4cm}p{0.4cm}|p{0.4cm}p{0.4cm}p{0.4cm}p{0.4cm}}
\toprule
& \multicolumn{12}{c}{Demonstration Ratio} \\
\cmidrule(lr){2-13}
Method
& \multicolumn{4}{c}{100\%}
& \multicolumn{4}{c}{40\%}
& \multicolumn{4}{c}{10\%} \\
\cmidrule(lr){2-5} \cmidrule(lr){6-9} \cmidrule(lr){10-13}
& Avg & 10 & Obj. & Spa.
& Avg & 10 & Obj. & Spa.
& Avg & 10 & Obj. & Spa. \\
\midrule
$\pi_{0}$ & 94.6 & 90.0 & 98.2 & 94.6
        & 89.9 & 82.0 & 95.2 & 89.6
        & 77.6 & 59.0 & 80.6 & 83.4 \\
Groot-N1.5 & 94.6 & \textbf{92.8} & 98.4 & 94.4
           & 91.4 & 84.5 & 98.9 & 90.6
           & 78.2 & 62.6 & 85.7 & 85.7 \\
EO-1 & 94.1 & 91.4 & 96.6 & 89.8
     & 91.0 & \textbf{88.4} & 96.0 & 86.8
     & 82.2 & 65.0 & 89.6 & 83.0 \\
 SmolVLA & 92.5 & 82 & 99 & 93
     & 90.3 & 80 & 96.0 & 90.0
     & 77.3 & 51.3 & 86 & 81.0 \\
\midrule
\rowcolor{foca}
\textbf{FOCA}
& \textbf{96.6} & 92.4 & \textbf{99.8} & \textbf{97.0}
& \textbf{94.0} & 88.0 & \textbf{99.6} & \textbf{93.6}
& \textbf{85.3} & \textbf{69.4} & \textbf{90.4} & \textbf{89.4} \\
\bottomrule
\end{tabular}
\end{minipage}
\hfill
\begin{minipage}[t]{0.3\textwidth}
\subcaption{\scriptsize  Comparison with general and task-specific PEFT methods for VLA adaptation (average success) in Libero.}
 \scriptsize
\begin{tabular}{lccc}
\toprule
& \multicolumn{3}{c}{Demonstration Ratio} \\
\cmidrule(lr){2-4}
PEFT methods & 100\% & 40\% & 10\% \\
\midrule
 & Avg & Avg & Avg \\
\midrule

Control-VLA & 95.6 & 91.3 & 78.4 \\
LoRA ($r{=}64$) & 94.2 & 90.2 & 78.2 \\
DoRA ($r{=}64$) & 94.7   & 92   & 78.6   \\
% RiCL            & --   & --   & --   \\
\midrule
\rowcolor{foca}
\textbf{FOCA} & \textbf{96.6} & \textbf{94.0} & \textbf{85.3} \\
\bottomrule
\end{tabular}
\end{minipage}
\vspace{-0.05in}
\end{table*}

% Part 2 ====================================================
\vspace{-0.15in}
\begin{figure*}[t]
\centering
\begin{minipage}[t]{0.5\textwidth}
\vspace{-0.15in}
\centering
\scriptsize
\subcaption{\scriptsize FOCA vs wide range of VLA models when using full 100\% data}
\begin{tabular}{p{1.5cm}p{0.6cm}p{0.6cm}p{0.6cm}p{0.6cm}p{0.6cm}}
\toprule
Method & {Avg} & {10} & {Goal} & {Object} & {Spatial} \\
\midrule
Diff. Policy        & 72.4 & 50.5 & 68.3 & 92.5 & 78.3 \\ [.1em]
Octo                   & 75.1 & 51.1 & 84.6 & 85.7 & 78.9 \\ [.1em]
Open-VLA               & 76.5 & 53.7 & 79.2 & 88.4 & 84.7 \\[.1em]
Spatial-VLA            & 78.1 & 55.5 & 78.6 & 89.9 & 88.2 \\[.1em]
CoT-VLA                & 69.0 & 87.6 & 91.6 & 87.5 & 81.1 \\[.1em]
DreamVLA               & 92.6 & 89.5 & 89.5 & 94.0 & \textbf{97.5} \\[.1em]
Groot-N1.0             & 93.9 & 90.6 & 93.0 & 97.6 & 94.4 \\[.1em]
Groot-N1.5             & 94.6 & \textbf{92.8} & 92.8 & 98.4 & 94.4 \\[.1em]
EO-1                    & 94.1 & 91.4 & \textbf{98.6} & 96.6 & 89.8 \\[.1em]
Think-Act               & 84.4 & 70.9 & 87.1 & 91.4 & 88.3 \\[.1em]
SmolVLA                & 92.5 & 82.0 & 96.0 & 99.0 & 93.0 \\[.1em]
$\pi_{0}$ Fast           & 85.5 & 60.2 & 88.6 & 96.8 & 96.4 \\[.1em]
$\pi_{0}$                & 94.6 & 90.0 & 95.4 & 98.2 & 94.6 \\[.1em]
\midrule
\rowcolor{foca}
\textbf{FOCA (Ours)} & \textbf{96.6} & \underline{92.4} & \underline{97.4} & \textbf{99.8} & \underline{97.0} \\
\bottomrule
\end{tabular}
\end{minipage}
\hfill
\begin{minipage}[t]{0.24\textwidth}
\vspace{0pt}
    \subcaptionbox{\scriptsize put soup and cheese box in basket}
      {\includegraphics[width=\linewidth]{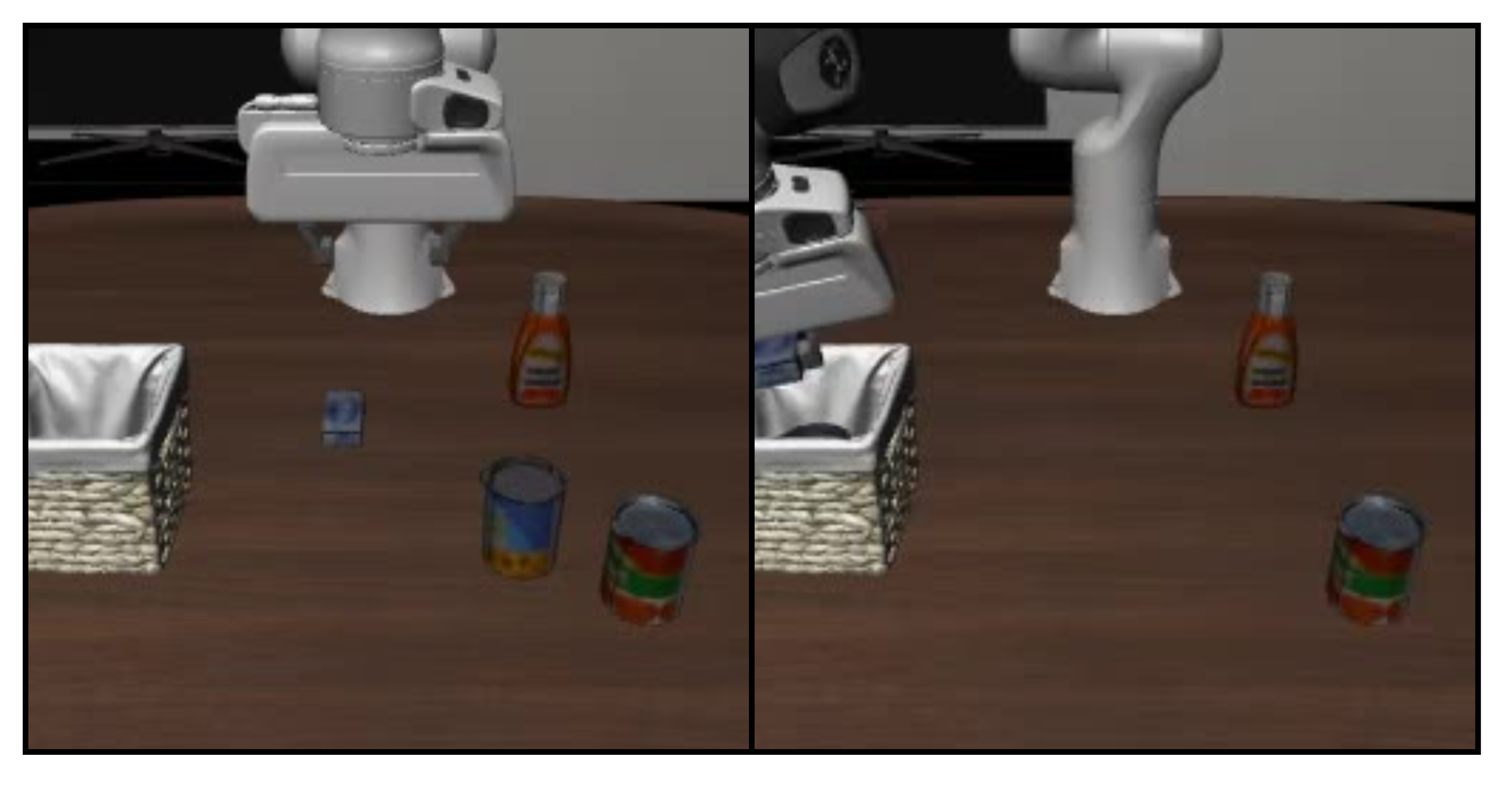}}
    \subcaptionbox{\scriptsize RoboCasa-TurnOnMicrowave \\}
      {\includegraphics[width=\linewidth]{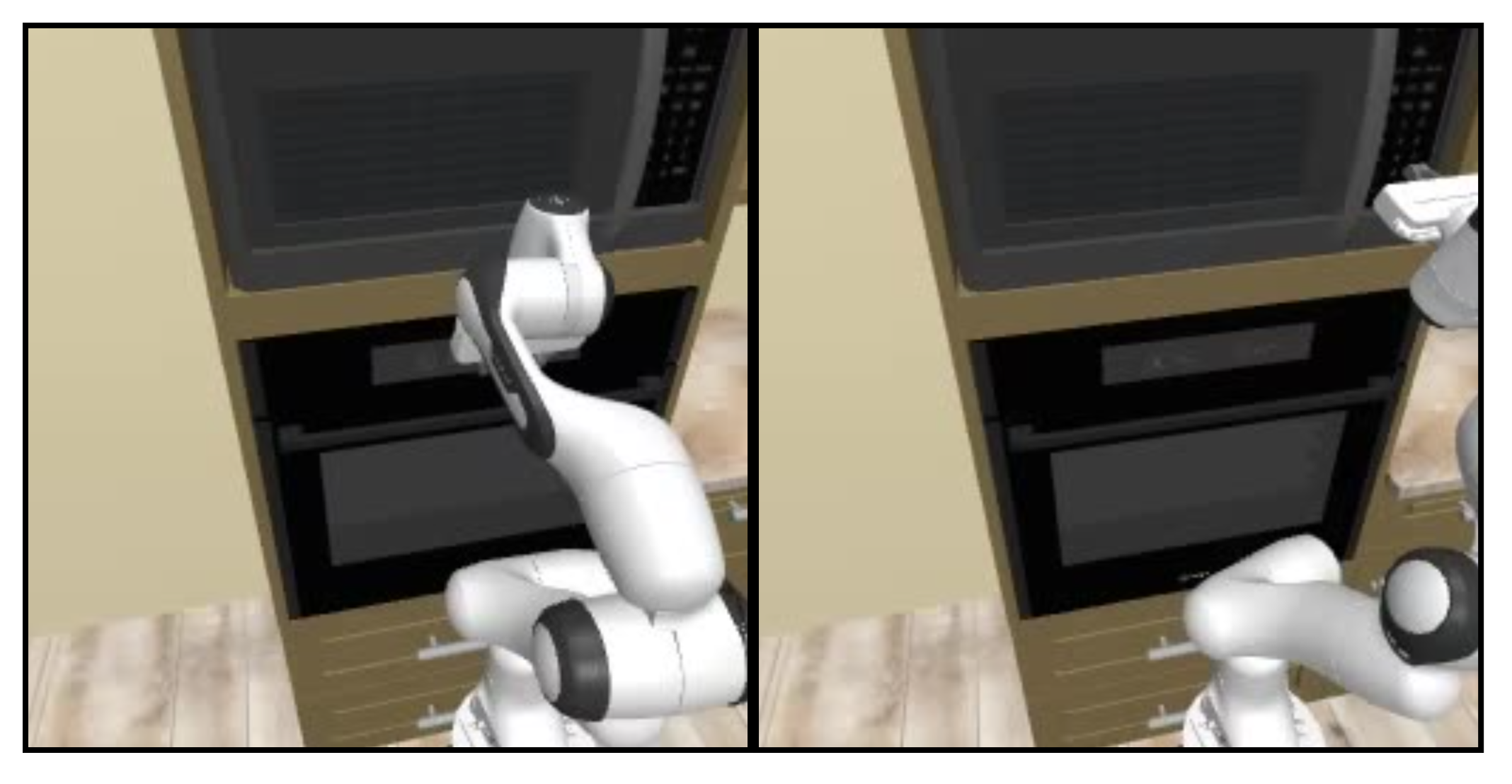}}

      % \subcaptionbox{\scriptsize Tie Shoelaces}
      % {\includegraphics[width=\linewidth]{icml/figures/aloha_shoe.png}}
\end{minipage}
\begin{minipage}[t]{0.24\textwidth}
\vspace{0pt}

    \subcaptionbox{\scriptsize pick up \& place milk in basket}
      {\includegraphics[width=\linewidth]{icml/figures/libero_soup.png}}

    \subcaptionbox{\scriptsize RoboCasa-CoffeeSetupMug}
      {\includegraphics[width=\linewidth]{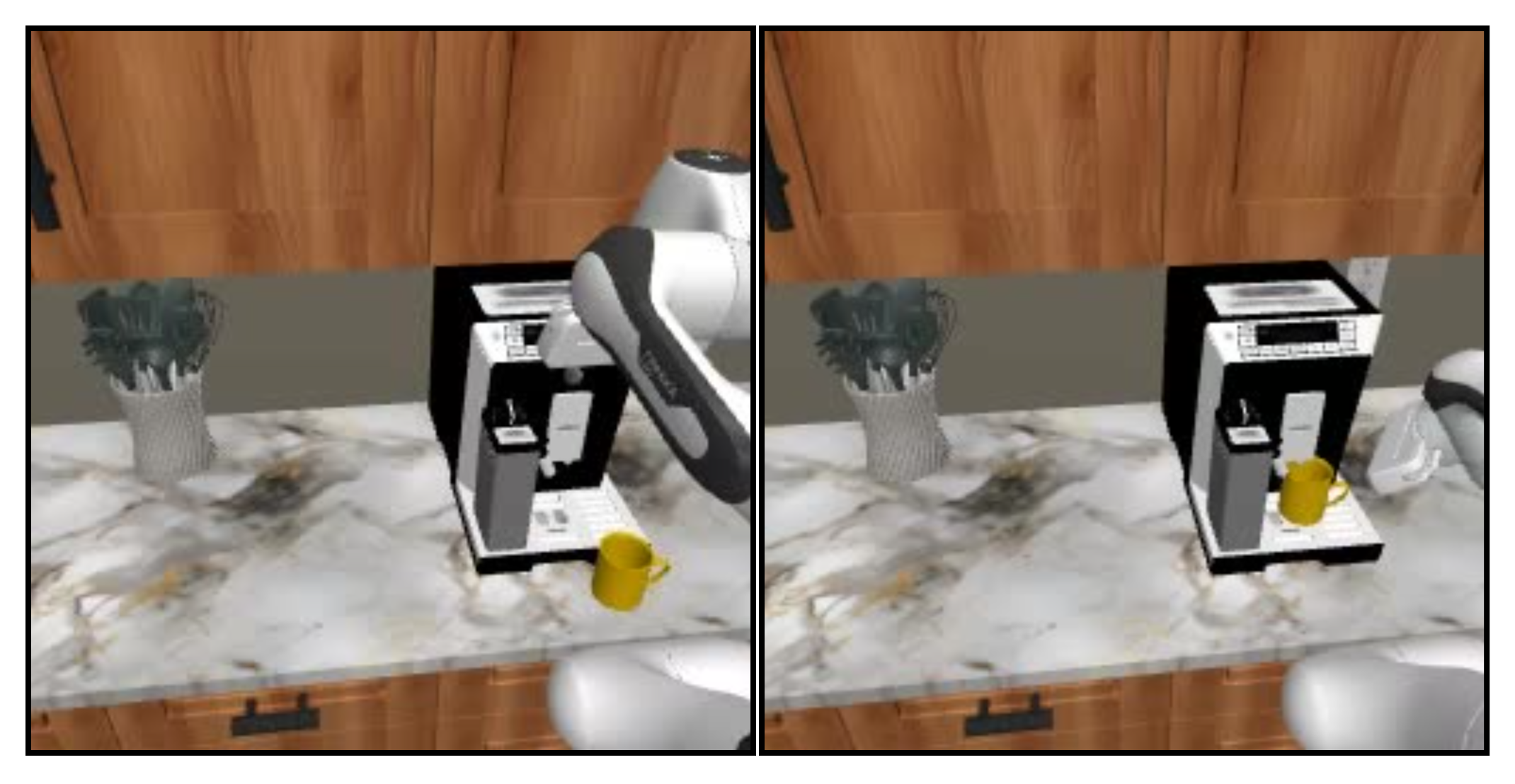}}

      % \subcaptionbox{\scriptsize Open Bag}
      % {\includegraphics[width=\linewidth]{icml/figures/aloha_zip.png}}
\end{minipage}
\caption{FOCA vs SoTA VLA models on full data scale. 10, Goal, Object, and Spatial denote the Libero-10, Libero-Goal, Libero-Object, and Libero-Spatial benchmark suites, respectively.}
\vspace{-.5cm}

\label{fig:foca_full_sota}
\end{figure*}
\vspace{0.05in}
\section{Analysis of Future-Conditioned Objectives}
\subsection{Action-Free Learning from Synthetic Rollouts} \label{subsec:action_free}
We next highlight a key advantage of \textsc{FOCA} stemming from its implicit future-alignment objective. By relaxing the requirement for precisely localized interaction regions, the objective $\mathcal{L}_{\mathrm{imp}}$ can be directly applied to synthetic videos generated by video world models such as DreamGen~\cite{jang2025dreamgen}, without relying on pseudo-action labels or inverse-dynamics modeling~\cite{ye2025latent}. To exploit this property, we adopt a two-stage training strategy: (i) we first fine-tune the video world model using an existing few-shot demonstration, then generate diverse, prompt-conditioned synthetic rollouts to optimize $\mathcal{L}_{\mathrm{imp}}(.)$. This \textit{action-free supervision} shapes FOCA’s representations to predict future task goals, enabling effective adaptation with limited real-world data. 
% {\color{red} Nghiem: I think the reviewer can ask whether training all VLM component using only implicit loss at this stage can harm the unified VLA architecture because it shifts the output distribution of VLM, result in a disconnection between VLM and action expert. Should we provide carefully how we train in stage 1 and explain why or give an ablation study ?}.
(ii) We then co-train on real robot demonstrations by jointly optimizing the full objective
$
\mathcal{L}_{\mathrm{total}}
$
This curriculum enables \textsc{FOCA} to absorb future-structured knowledge from large-scale, action-free video while retaining precise control through explicit prediction and action supervision, yielding strong data-efficient adaptation (\textbf{Q2. in experiment.}). 

\vspace{-0.05in}
\subsection{Implicit Future Alignment as Goal-Conditioned Discounted Occupancy}
\label{subsec:theory}

FOCA’s implicit objective $\mathcal{L}_{\mathrm{imp}}$ (Eq.~\eqref{eq:iml_future} performs \emph{future-conditioned contrastive learning}
between a \emph{context} token from the current input and a \emph{future} interaction/goal embedding sampled later in the same episode.
Concretely, let $s_t$ denote the VLA input (multi-view observation plus language) at time $t$.
FOCA forms the implicit context token $x_t := z_t$ (Eq.~\eqref{eq:context_z}) and a future interaction embedding
$g_{t+k}:=y_{t+k}=h(s_{t+k})$ (Eq.~\eqref{eq:context_y}), where $h(\cdot)$ denotes the region-restricted pooling over dynamic interaction regions.
Negatives are drawn from the task-mismatched pool $\mathcal{N}_{\neg\mathrm{task}}$ (Eq.~\eqref{eq:mismatch_pool}, which we model as a proposal
distribution $p_{\mathrm{neg}}(g)$.
We show that $\mathcal{L}_{\mathrm{imp}}$ learns a value-like \emph{reachability} score corresponding to discounted goal occupancy under demonstrations
(proofs in Appendix~\ref{app:proofs}).

\vspace{-0.1in}
\paragraph{Discounted goal occupancy induced by geometric sampling.}
We sample a future offset $k\sim\mathrm{Geom}(1-\gamma)$ on $\{1,2,\ldots\}$ and treat $(x_t,g_{t+k})$ as a positive pair.
This induces the goal-conditioned \emph{discounted occupancy} under the demonstration distribution:
\begin{equation}
\label{eq:occupancy-main}
\rho^\mu(g \mid x_t)
~:=~
(1-\gamma)\sum_{k=1}^\infty \gamma^{k-1}\Pr(g_{t+k}=g \mid x_t),
\end{equation}
where the probability is taken over trajectories in the offline dataset and the goal extractor $h$.

\begin{theorem}[Implicit alignment estimates a goal-conditioned value (log occupancy ratio)]
\label{thm:infonce-q}
Assume negatives are sampled from a proposal $p_{\mathrm{neg}}(g)$ independent of $x_t$
(instantiated in FOCA via $\mathcal{N}_{\neg\mathrm{task}}$), and $\mathcal{L}_{\mathrm{imp}}$ is optimized to the population optimum.
Then the optimal score satisfies
\begin{equation}
\label{eq:logratio}
f_\theta(x_t,g)
=
\log \rho^\mu(g\mid x_t) - \log p_{\mathrm{neg}}(g) + c(x_t),
\end{equation}
for some $c(x_t)$ independent of $g$.
\end{theorem}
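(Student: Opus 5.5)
We will follow the standard population analysis of InfoNCE, as in contrastive goal-conditioned RL. Write the score as $f_\theta(x_t,g) := s(\mathbf{z}_t,\mathbf{y})/\lambda$, viewed as a function of the context $x_t$ and a candidate goal $g$.

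\textbf{Step 1: the data-generating process.} Fix $x_t$. One positive $g^+$ is drawn from the geometric-offset mixture $\rho^\mu(\cdot\mid x_t)$. By Eq.~\eqref{eq:occupancy-main}, drawing $k\sim\mathrm{Geom}(1-\gamma)$ and then $g_{t+k}$ yields exactly this marginal. The $K-1$ negatives are drawn i.i.d.\ from $p_{\mathrm{neg}}$, independently of $x_t$ by assumption. The candidates are placed in a uniformly random order. Eq.~\eqref{eq:iml_future} is then the cross-entropy of a $K$-way classifier that must identify which index holds the positive, with logits $f_\theta(x_t,g_j)$. Its population value is an expectation over $x_t$ of this conditional cross-entropy.

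\textbf{Step 2: the Bayes posterior.} Given the set $\{g_1,\dots,g_K\}$ and $x_t$, the posterior that index $j$ is the positive is
\[
\frac{\rho^\mu(g_j\mid x_t)\prod_{l\neq j}p_{\mathrm{neg}}(g_l)}{\sum_{j'}\rho^\mu(g_{j'}\mid x_t)\prod_{l\neq j'}p_{\mathrm{neg}}(g_l)}
=\frac{\rho^\mu(g_j\mid x_t)/p_{\mathrm{neg}}(g_j)}{\sum_{j'}\rho^\mu(g_{j'}\mid x_t)/p_{\mathrm{neg}}(g_{j'})}.
\]

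\textbf{Step 3: identifying the minimizer.} Expected cross-entropy is minimized, over all measurable classifiers, exactly when the softmax of the logits equals this posterior almost surely; this is Gibbs' inequality. Softmax is invariant only to additive shifts common to all candidates, and such shifts may depend on $x_t$ and on the candidate set. So for a.e.\ candidate set we need
\[
f_\theta(x_t,g_j)=\log\rho^\mu(g_j\mid x_t)-\log p_{\mathrm{neg}}(g_j)+c(x_t,\{g\}).
\]
The score is a function of $(x_t,g_j)$ alone, so the shift cannot depend on the other candidates. To show this, compare two candidate sets sharing $g_1$ and differing elsewhere. The difference of the two identities shows the shift is the same, so it depends only on $x_t$. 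This gives Eq.~\eqref{eq:logratio}.

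\textbf{Main obstacle: support and realizability.} Two conditions must be stated. First, the log-ratio is only defined where $p_{\mathrm{neg}}>0$ and $\rho^\mu>0$. We need $\rho^\mu(\cdot\mid x_t)\ll p_{\mathrm{neg}}$, and the identity holds $p_{\mathrm{neg}}$-a.e. This is delicate because $\mathcal{N}_{\neg\mathrm{task}}$ excludes same-task goals. We will state absolute continuity as an implicit modeling assumption, or restrict the claim to the support of $p_{\mathrm{neg}}$. Second, "population optimum" must mean the function class (cosine similarity over a temperature, bounded by $1/\lambda$) contains the target. We will assume realizability, or interpret $f_\theta$ as an unconstrained critic, and remark that boundedness only yields the result up to approximation.
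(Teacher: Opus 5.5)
Your proposal is correct and follows essentially the same route as the paper's proof: the one-positive-among-negatives model with a uniformly random positive index, a Bayes posterior proportional to $\rho^\mu(g_j\mid x_t)/p_{\mathrm{neg}}(g_j)$, cross-entropy minimized exactly at that posterior, and geometric offset sampling to identify $p^+$ with $\rho^\mu$. Your Step 3 argument, that the softmax shift cannot depend on the other candidates, makes your proof slightly more complete than the paper's, which only shows the log-ratio form is \emph{sufficient} for optimality even though the theorem claims every optimal score has that form; your support and realizability caveats also sharpen the paper's measure-theoretic note.
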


\begin{proposition}[Explicit future prediction estimates an occupancy mean]
\label{prop:exp-mean}
Let $\widehat g_\theta(x_t)$ denote FOCA's explicit prediction of the future interaction embedding and suppose
$\mathcal{L}_{\mathrm{exp}}$ is the squared-error loss in Eq.~\eqref{eq:exp_future}.
Under the same geometric sampling as in Eq.~\eqref{eq:occupancy-main}, any population-optimal predictor satisfies
\begin{equation}
\widehat g^*(x_t)=\E_{g\sim\rho^\mu(\cdot\mid x_t)}[g],
\end{equation}
i.e., explicit prediction estimates the conditional mean under the goal-occupancy distribution.
\end{proposition}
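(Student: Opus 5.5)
The plan is to reduce $\mathcal{L}_{\mathrm{exp}}$ to a pointwise conditional least-squares problem and then identify the conditional law of the target with the occupancy $\rho^\mu(\cdot\mid x_t)$.

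\textbf{Step 1: the target law is the occupancy.} Under the sampling scheme, $x_t$ is drawn from the data and the offset $k\sim\mathrm{Geom}(1-\gamma)$ on $\{1,2,\ldots\}$ is drawn independently. The target is $g=g_{t+k}$, the region-restricted embedding $h(s_{t+k})$, which plays the role of $\tilde{\mathbf{y}}^i_{t^+}$ in Eq.~\eqref{eq:exp_future}; the view index $i$ is absorbed into the conditioning. Condition on $x_t$ and sum over $k$ by the law of total probability. Since $\Pr(k)=(1-\gamma)\gamma^{k-1}$, this gives
\[
\Pr(g\mid x_t)=\sum_{k\ge1}(1-\gamma)\gamma^{k-1}\Pr(g_{t+k}=g\mid x_t)=\rho^\mu(g\mid x_t),
\]
which is exactly Eq.~\eqref{eq:occupancy-main}. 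For continuous embeddings, the same identity holds with densities or measures in place of point probabilities.

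\textbf{Step 2: reduce to a pointwise problem.} The predictor $\widehat g_\theta$ depends on the input only through $x_t$, via the output tokens $\rexpout$. Apply the tower property:
\[
\mathcal{L}_{\mathrm{exp}}=\E_{x_t}\Big[\E_{g\sim\rho^\mu(\cdot\mid x_t)}\|\widehat g_\theta(x_t)-g\|_2^2\Big].
\]
Over an unrestricted (population) function class, the outer expectation is minimized by minimizing the inner term separately for each $x_t$, up to a null set.

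\textbf{Step 3: bias--variance identity.} Write $m(x_t)=\E_{\rho^\mu}[g]$. For any $v$,
\[
\E\|v-g\|^2=\|v-m(x_t)\|^2+\E\|g-m(x_t)\|^2.
\]
The cross term vanishes because $\E[g-m(x_t)]=0$. The right-hand side is uniquely minimized at $v=m(x_t)$. Hence any population-optimal predictor satisfies $\widehat g^*(x_t)=m(x_t)$ for $\mu$-almost every $x_t$, which is the claim.

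\textbf{Main obstacle.} Step 1 is the delicate part. Three modelling points must be made precise:
\begin{itemize}
\item The actual sampler is a \emph{truncated} geometric distribution within finite episodes. One either assumes untruncated sampling, as the proposition states, or absorbs the termination state into $h$ so that the occupancy definition still applies.
\item The target depends on the view $i$ and the region $\mathcal{B}^i_{t^+}$. These must be treated as measurable functions of $s_{t+k}$, with any randomness of the grounding model $\mathcal{G}$ folded into $\Pr$.
\item The mean must exist, so we assume $\E\|g\|^2<\infty$, which holds since the embeddings are bounded.
\end{itemize}
The predictor's extra input $\mathrm{PE}(\mathcal{B}^i_{t^+})$ leaks information about the future. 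We handle it in one of two ways. We can include it in the conditioning variable, so the result reads as a conditional mean given $x_t$ and the box. Alternatively, we state the proposition for the box-marginalized predictor, as in the statement, and note that the same argument applies verbatim.
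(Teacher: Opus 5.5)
Your proposal is correct and follows essentially the same route as the paper. The paper fixes $x$ and expands the conditional squared error into a quadratic whose unique minimizer is $\E[g^+\mid x]$, which is the same computation as your bias--variance identity; it then invokes the geometric-sampling lemma (Lemma~\ref{lem:geometric}, your Step~1) to identify the law of $g^+\mid x$ with $\rho^\mu(\cdot\mid x)$, while your extra caveats (truncated sampling in finite episodes, and the predictor's access to $\mathrm{PE}(\mathcal{B}^i_{t^+})$, which would shift the optimum to $\E[g\mid x_t,\mathcal{B}^i_{t^+}]$) are sound observations the paper does not address but which do not affect the argument for the statement as written.
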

\vspace{-0.15in}
\paragraph{Implications for FOCA.}
Eq.~\eqref{eq:logratio} shows that the similarity score in Eq.~\eqref{eq:iml_future} acts as a value-like reachability signal in representation space.
Because it depends only on observation/language tokens and goal extractor $h$, $\mathcal{L}_{\mathrm{imp}}$ naturally supports
action-free training on world-model videos (Sec.~\ref{subsec:action_free}).
Meanwhile, Proposition~\ref{prop:exp-mean} clarifies that $\mathcal{L}_{\mathrm{exp}}$ provides a complementary signal by
predicting a concrete summary (occupancy mean) of future interaction embeddings, which is directly useful for control.
See Appendix~\ref{app:proofs} for assumptions, proofs, and more extensions (fixed-horizon futures, robustness bounds, and factorization analyses).
\vspace{-0.05in}
\section{Experiments}
\label{sec:experiment}
\vspace{-0.05in}
\subsection{Main results}
\vspace{-0.05in}
In this section, we present the main experimental results of FOCA in both simulated and real-world environments (Figure~\ref{fig:foca_full_sota}-right and Section~\ref{sec:visualization} Appendix). For details about implementation, other robot setups, we refer readers to the Sections~\ref{sec:implement},~\ref{sec:real_robot} in the Appendix. To systematically evaluate the proposed framework, we design experiments to address the following questions.
% Part 4 newwwwwwwwwwwwwww
% Part 4 =======Robocasa5tasks=============================================
\begin{figure*}[!htb]
\vspace{-0.1in}
    \resizebox{0.3\textwidth}{!}{%
        % ==================== 30 demos
    \begin{subfigure}{0.45\textwidth}
    \begin{tikzpicture}[spy using outlines=	{rectangle, magnification=2, connect spies}]
      \begin{axis}[
        ybar, axis on top,
        height=5cm, width=10cm,
        bar width=0.2cm,
        ymajorgrids, tick align=inside,
        major grid style={draw=white},
        enlarge y limits={value=.1,upper},
        ymin=0, ymax=100,
        axis x line*=bottom,
        axis y line*=left,
        y axis line style={opacity=100},
        tickwidth=0pt,
        enlarge x limits=0.15,
        legend style={
            at={(0.5,1.05)},
            anchor=south,
            legend columns=2,
            font=\footnotesize,
            /tikz/every even column/.append style={column sep=0.1cm}
        },
        ylabel={Success rate (\%)},
        symbolic x coords={
           Average,PnP from Cab to Counter,PnP from Counter to Cab, Microwave Turn-on},
       xtick=data,
       xticklabels={},
       nodes near coords={
            \tiny
            \pgfmathprintnumber[precision=0]{\pgfplotspointmeta} 
        },
    ]
    % GR00T-N1.0
    \addplot [draw=none, fill=darkgray!50] coordinates {
      (Average,12.6)
      (PnP from Cab to Counter, 1) 
      (PnP from Counter to Cab,2)
      (Microwave Turn-on,55.6)
      };
   % Smol-VLA
   \addplot [draw=none, fill=orange!50] coordinates {
      (Average, 15.6)
      (PnP from Cab to Counter, 4) 
      (PnP from Counter to Cab, 2)
      (Microwave Turn-on, 64)
      };
    % $\pi_{0}$
   \addplot [draw=none, fill=vinred!50] coordinates {
      (Average, 13.2)
      (PnP from Cab to Counter, 8) 
      (PnP from Counter to Cab, 16)
      (Microwave Turn-on, 36)
      };   
     % FOCA based $\pi_{0}$
   \addplot [draw=none, fill=vinred] coordinates {
      (Average, 22)
      (PnP from Cab to Counter, 10) 
      (PnP from Counter to Cab, 18)
      (Microwave Turn-on, 74)
      };  
  % GR00T-N1.5
   \addplot [draw=none, fill=pi0blue!50] coordinates {
      (Average, 17.6)
      (PnP from Cab to Counter, 12) 
      (PnP from Counter to Cab, 18)
      (Microwave Turn-on, 50)
      }; 
  % FOCA based GR00T-N1.5
   \addplot [draw=none, fill=pi0blue] coordinates {
      (Average, 24.8)
      (PnP from Cab to Counter, 16) 
      (PnP from Counter to Cab, 14)
      (Microwave Turn-on, 80)
      }; 
    \legend{GR00T-N1.0,Smol-VLA,$\pi_{0}$, FOCA ($\pi_{0}$ based), GR00T-N1.5, FOCA (GR00T-N1.5 based)}
    \coordinate (spypoint) at ([xshift=7.5pt,yshift=3pt] axis cs:Average,20);
   \coordinate (magnifyglass) at ([xshift=12pt,yshift=0pt] axis cs:PnP from Cab to Counter,70);
  \end{axis}
  \spy [red!50!black, width=2.2cm, height=1.5cm] on (spypoint) in node[fill=none] at (magnifyglass);
  \node at (3.4,3.1) {\textcolor{green!50!black}{\textbf{\scriptsize $\uparrow 9\%$}}};
  \node at (4.5,3.1) {\textcolor{green!50!black}{\textbf{\scriptsize $\uparrow 7.2\%$}}};

  \end{tikzpicture}
    \label{fig:foca_vla_30demos}
     % \vspace{-1.5em}
    \end{subfigure}
    }
    \hspace{1cm}
    % *******************************************************************************
        \resizebox{0.3\textwidth}{!}{%
     \begin{subfigure}{0.4\textwidth}
          \begin{tikzpicture}
              \begin{axis}[
                ybar, axis on top,
                height=5cm, width=\linewidth,
                bar width=.2cm,
                ymajorgrids, tick align=inside,
                major grid style={draw=none},
                enlarge y limits={value=.1,upper},
                ymin=0, ymax=100,
                axis x line*=bottom,
                axis y line*=left,
                y axis line style={opacity=100},
                tickwidth=0pt,
                enlarge x limits=0.15,
                legend style={
                    at={(0.5,0.85)},
                    anchor=south,
                    align=left,
                    font=\scriptsize,
                    legend columns=2,
                    /tikz/every even column/.append style={column sep=1pt}
                },
                ylabel={},
                symbolic x coords={Place Parts (Humanoid), Place Parts (VinR-H3), Open Bag (Aloha), Set Table (Aloha), Tie Shoelaces (Aloha)},
                xtick=data,
                xticklabel style={
                    align=center,      % center-align the text
                    text width=1.5cm,    % wrap text in a width of 2cm
                    font=\scriptsize,  % adjust font size
                },       
                nodes near coords={
                \tiny
                \pgfmathprintnumber[precision=0]{\pgfplotspointmeta}
               },
            ]
            \addplot [draw=none, fill=pi0blue!50] coordinates {
              (Place Parts (VinR-H3), 20)
              (Open Bag (Aloha), 20)
              (Set Table (Aloha), 10)
              (Tie Shoelaces (Aloha), 50)
              };
            \addplot [draw=none, fill=vinred!50] coordinates {
              (Place Parts (VinR-H3), 24)
              (Open Bag (Aloha), 40)
              (Set Table (Aloha), 20)
              (Tie Shoelaces (Aloha), 55)
              };
            \addplot [draw=none, fill=pi0blue] coordinates {
              (Place Parts (VinR-H3), 58)
              (Open Bag (Aloha), 25)
              (Set Table (Aloha), 10)
              (Tie Shoelaces (Aloha), 70)
              };
            \addplot [draw=none, fill=vinred] coordinates {
              (Place Parts (VinR-H3), 84)
              (Open Bag (Aloha), 45)
              (Set Table (Aloha), 45)
              (Tie Shoelaces (Aloha), 95)
              };
            \legend{Base-VLA(40\%),FOCA(40\%),Base-VLA(100\%),FOCA(100\%)}
          \end{axis}
        \end{tikzpicture}
        \caption{}
        \label{fig:real_robot}
    \end{subfigure}
    }
        \resizebox{0.3\textwidth}{!}{%
     \begin{subfigure}{0.35\textwidth}
     \vspace{0pt}
    \scriptsize
    \setlength{\tabcolsep}{1.8pt}
    % \vspace{-0.2in}
    \begin{tabular}{llccccc}
    \toprule
    \textbf{Data} & \textbf{Method} & \textbf{Avg} & \textbf{10} & \textbf{Go.} & \textbf{Obj.} & \textbf{Spa.} \\
    \midrule

    \multirow{3}{*}{{40\%}}
    & FOCA expli. only  
        & \textbf{93.3} & 86.8 & 93.4 & 98.6 & 94.2 \\

    & FOCA expli.(full imgs)
        & 90.9 & 84.0 & 92.6 & 97.2 & 90.0 \\

    & FOCA expli.(obj-cen)
        & 91.4 & 84.4 & 93.6 & 96.2 & 91.4 \\

    \midrule

    \multirow{3}{*}{{10\%}}
    & FOCAL expli, 8 toks
        & 79.2 & 60.0 & 89.4 & 83.2 & 84.2 \\

    & FOCAL expli, 18 toks
        & 76.4 & 59.6 & 84.2 & 78.8 & 82.8 \\

    & FOCAL expli, 2 toks
        & 76.6 & 58.6 & 88.0 & 78.6 & 81.2 \\

    \bottomrule
    
    \end{tabular}
    \subcaption{Ablation of \textbf{explicit future prediction}. We compare \hlblue{latent object-centric prediction} (default) with \hlblue{pixel-level} prediction on \hlblue{full images} and object-centric regions, and study the effect of the number of representative tokens (8 (default), 18 vs.\ 2).}
    \vspace{0.05in}

\end{subfigure}}
    \\
    % *******************************************************************************    
  % Part 4 ==================== 100 demo
      \resizebox{0.3\textwidth}{!}{%
    \begin{subfigure}{.45\textwidth}
      \hspace{0cm}
    \begin{tikzpicture}[spy using outlines=	{rectangle, magnification=2, connect spies}]
      \begin{axis}[
        ybar, axis on top,
        height=5cm, width=10cm,
        bar width=0.2cm,
        ymajorgrids, tick align=inside,
        major grid style={draw=white},
        enlarge y limits={value=.1,upper},
        ymin=0, ymax=100,
        axis x line*=bottom,
        axis y line*=left,
        y axis line style={opacity=100},
        tickwidth=0pt,
        enlarge x limits=0.15,
        legend style={
            at={(0.5,-0.2)},
            anchor=north,
            legend columns=-1,
            /tikz/every even column/.append style={column sep=0.5cm}
        },
        ylabel={Success rate (\%)},
        symbolic x coords={
           Average,PnP from Cab to Counter,PnP from Counter to Cab,Microwave Turn-on,},
        xtick=data,
        xticklabel style={
            align=center,      % center-align the text
            text width=2cm,    % wrap text in a width of 2cm
            font=\scriptsize,  % adjust font size
        },       
        nodes near coords={
        \tiny
        \pgfmathprintnumber[precision=0]{\pgfplotspointmeta}
       },
    ]
    % GR00T-N1.0
    \addplot [draw=none, fill=darkgray!50] coordinates {
      (Average,20.44)
      (PnP from Cab to Counter, 4) 
      (PnP from Counter to Cab,7)
      (Microwave Turn-on,73.5)
      };
   % Smol-VLA
   \addplot [draw=none, fill=orange!50] coordinates {
      (Average, 22)
      (PnP from Cab to Counter, 6) 
      (PnP from Counter to Cab, 6) 
      (Microwave Turn-on, 78)
      };
    % $\pi_{0}$
   \addplot [draw=none, fill=vinred!50] coordinates {
      (Average, 22.4)
      (PnP from Cab to Counter, 10) 
      (PnP from Counter to Cab, 14)
      (Microwave Turn-on, 78)
      };   
     % FOCA based $\pi_{0}$
   \addplot [draw=none, fill=vinred] coordinates {
      (Average, 26.8)
      (PnP from Cab to Counter, 18) 
      (PnP from Counter to Cab, 22)
      (Microwave Turn-on, 62)
      };  
  % GR00T-N1.5
   \addplot [draw=none, fill=pi0blue!50] coordinates {
      (Average, 24)
      (PnP from Cab to Counter, 22) 
      (PnP from Counter to Cab, 36)
      (Microwave Turn-on, 36)
      }; 
  % FOCA based GR00T-N1.5
   \addplot [draw=none, fill=pi0blue] coordinates {
      (Average, 34.4)
      (PnP from Cab to Counter, 30) 
      (PnP from Counter to Cab, 36)
      (Microwave Turn-on, 72)
      }; 
    \coordinate (spypoint) at ([xshift=7.5pt,yshift=12pt] axis cs:Average,20);
   \coordinate (magnifyglass) at ([xshift=12pt,yshift=0pt] axis cs:PnP from Cab to Counter,70);
    % \legend{First Fix,Second Fix,Third Fix}
  \end{axis}
    \spy [red!50!black, width=2.2cm, height=1.5cm] on (spypoint) in node[fill=none] at (magnifyglass);
  \node at (3.4,2.5) {\textcolor{green!50!black}{\textbf{\scriptsize $\uparrow 4.4\%$}}};
  \node at (4.5,3.1) {\textcolor{green!50!black}{\textbf{\scriptsize $\uparrow 10.4\%$}}};
  \end{tikzpicture}
    \caption{}
    \label{fig:foca_vla_100demos}
    \end{subfigure}}
% ************************************************************************************************************************
    \resizebox{0.3\textwidth}{!}{%
    \begin{subfigure}{0.4\textwidth}
              \begin{tikzpicture}
              \hspace{1.6cm}
              \begin{axis}[
                ybar, axis on top,
                height=5cm, width=\linewidth,
                bar width=.3cm,
                group style={group bars=3, horizontal sep=10pt},
                ymajorgrids, tick align=inside,
                major grid style={draw=none},
                enlarge y limits={value=.1,upper},
                ymin=75, ymax=100,
                axis x line*=bottom,
                axis y line*=left,
                y axis line style={opacity=100},
                tickwidth=0pt,
                enlarge x limits=0.25,
                legend style={
                    at={(0.5,1.05)},
                    anchor=south,
                    align=left,
                    font=\scriptsize,
                    legend columns=2,
                    /tikz/every even column/.append style={column sep=0.1cm}
                },
                ylabel={},
                symbolic x coords={100\%, 40\%, 10\%},
                xtick=data,
                xticklabel style={
                    align=center,      % center-align the text
                    text width=2cm,    % wrap text in a width of 2cm
                    font=\scriptsize,  % adjust font size
                },       
                nodes near coords={
                \tiny
                \pgfmathprintnumber[precision=0]{\pgfplotspointmeta}
               },
            ]
            \addplot [draw=pi0blue, line width=0.3pt, fill=none] coordinates {
              (100\%,94.6)
              (40\%, 89.9) 
              (10\%, 77.6)
              };
            \addplot [draw=pi0blue, fill=pi0blue] coordinates {
              (100\%,96.6)
              (40\%, 94.0) 
              (10\%, 85.3)
              };
            \addplot [draw=pi0blue, line width=0.3pt, fill=pi0blue, pattern=north west lines, pattern color=pi0blue] coordinates {
              (100\%,96.2)
              (40\%, 92.6) 
              (10\%, 79.2)
              };
            \addplot [draw=pi0blue, line width=0.3pt, fill=pi0blue, pattern=dots, pattern color=pi0blue] coordinates {
              (100\%, 95.8)
              (40\%, 93.0) 
              (10\%, 83.6)
              };
            \legend{$\pi_{0}$,FOCA,FOCA - Explicit, FOCA - Implicit}
          \end{axis}
          \end{tikzpicture}
            \caption{}
            \label{fig:foca_abl1_implvsexpl}
    \end{subfigure}}
    \hspace{0.8cm}
        % ==========================
    \resizebox{0.3\textwidth}{!}{%
            \begin{subfigure}{0.37\textwidth}
            \scriptsize
            \setlength{\tabcolsep}{1.5pt}
            \begin{tabular}{p{3.5cm} ccccc}
            \toprule
            \textbf{Method} & \textbf{Avg} & \textbf{10} & \textbf{Go.} & \textbf{Obj.} & \textbf{Spa.} \\
            \midrule
        
            FOCA impli. only - single frame
                & \textbf{83.6} & 65.6 & 91 & 90 & 87.6 \\
        
            FOCA impli. only - multi-frames
        
                & 83.3 & 64.8 & 92 & 89.5 & 86.9 \\
        
            FOCA impli. with full images
                & 81.7 & 62.4 & 90.8 & 87.6 & 85.8 \\
        
            Contrast - image-text vs. robot state using output VLM 
        
                & 78.2 & 59.4 & 86.2 & 85.4 & 81.6 \\
        
            Contrast - multi-frames vs. text 
                & 78.3 & 59 & 87 & 83.2 & 84 \\
        
            \bottomrule
            \end{tabular}
            \caption{Ablation study on \textbf{implicit future prediction} (on 10\% data Libero) with: (i) (default) \hlblue{single future frame} alignment ($t$+10), (ii) using \hlblue{multi-frames} in future ($t$+10,$t$+20,$t$+30) (second row), (iii) using \hlblue{full embedding} images rather than object centric (third row), (iv) \hlblue{other contrastive strategies} developed for small-scale models~\cite{rana2023contrastive,ma2024contrastive}.
        }
        \label{tab:foca_ablation_implicit}
        \end{subfigure}}
    \vspace{-0.05in}
    \caption{(a) FOCA’s performance on \hlblue{four real-robot tasks} using a humanoid (VinR-H3, a simulated task) (GR00T N1.5 VLA, evaluation on 50 trials), and three other with bi-arm Aloha robot using $\pi_{0}$ VLA; (b) Ablation study on explicit future prediction ; (c) FOCA’s generalization performance across \hlblue{$\pi_{0}$} and \hlblue{GR00T N1.5 on Robocasa}, with 30 demos (\textbf{top}) and 100 demos (\textbf{bottom}) on most 5 challenge tasks; (d) $\pi_{0}$ vs FOCA, \hlblue{FOCA w explicit}, and \hlblue{FOCA w implicit} at 100\%, 40\% and 10\%; (b) and (e) be ablation study on explicit and implicit future prediction.}
    \vspace{-0.2in}
    \label{fig:foca_vla_res}
\end{figure*}
\vspace{-0.4cm}
\paragraph{Q1: Data efficiency of FOCA for VLA model adaptation.}

 We aim to evaluate the data efficiency and robustness of FOCA for VLA adaptation, with an emphasis on few-shot imitation learning. We study its (i) performance across varying demonstration budgets on LIBERO~\cite{liu2023libero}, corresponding to \texttt{10\%, 40\%,} and \texttt{100\%} of the training data (approximately \texttt{5}, \texttt{20}, and approximately \texttt{43} demonstrations per task, respectively); (ii) compare it against parameter-efficient fine-tuning methods, and assess (iii) its competitiveness with state-of-the-art VLA models under scarce and full supervision.

\begin{table}[t]

\centering
\caption{Performance comparison between FOCA variants and pseudo-actions learned via inverse generative modeling (IGM) from DreamGen-generated synthetic videos.}
\vspace{-0.1in}
\label{tab:foca_meets_video}

\resizebox{0.49\textwidth}{!}{%
% \begin{minipage}[t]{0.55\textwidth}
% \vspace{0pt}
\scriptsize
% %%%%% Original
% \begin{tabular}{p{1.2cm}p{0.25cm}p{0.25cm}p{0.25cm}p{0.25cm}|p{0.25cm}p{0.25cm}p{0.25cm}p{0.25cm}|p{0.25cm}p{0.25cm}p{0.25cm}p{0.25cm}}
% \toprule
% & \multicolumn{12}{c}{Demonstration Ratio} \\
% \cmidrule(lr){2-13}
% Method
% & \multicolumn{4}{c}{100\%}
% & \multicolumn{4}{c}{40\%}
% & \multicolumn{4}{c}{10\%} \\
% \cmidrule(lr){2-5} \cmidrule(lr){6-9} \cmidrule(lr){10-13}
% & Avg & Lib.10 & Obj. & Spa.
% & Avg & Lib.10 & Obj. & Spa.
% & Avg & Lib.10 & Obj. & Spa. \\
% \midrule
% $\pi_{0}$ & 94.6 & 90.0 & 98.2 & 94.6
%         & 89.9 & 82.0 & 95.2 & 89.6
%         & 77.6 & 59.0 & 80.6 & 83.4 \\
% IGM & 94.3 & 90 & 98.2 & 93.6
%            & 90.2 & 82.8 & 94.8 & 87.8
%            & 76.8 & 61.8 & 79.8 & 81.4 \\
% \midrule
% \rowcolor{foca}
% \textbf{FOCA(Impli.)} & 98.5 & 91 & 98.4 & 96.6
%      & 93.0 & 87 & 97.6 & 92.9
%      & 83.6 & 65.6 & 90 & 87.6 \\
% \rowcolor{foca}
%  \textbf{FOCA (DreamG)} & \textbf{96.7} & \textbf{92.6} & \textbf{99.2} & \textbf{98}
%      & \textbf{95.7} & \textbf{89.4} & \textbf{98.6} & \textbf{98.0}
%      & \textbf{86.4} & \textbf{73.2} & \textbf{91.8} & \textbf{89.2} \\
% \bottomrule
% \end{tabular}}
% %%%%% Original

%%%%% Add goal
\setlength{\tabcolsep}{1pt}
\begin{tabular}{lccccc|ccccc|ccccc}
% {p{1.2cm}p{0.25cm}p{0.25cm}p{0.25cm}p{0.25cm}p{0.25cm}|p{0.25cm}p{0.25cm}p{0.25cm}p{0.25cm}p{0.25cm}|p{0.25cm}p{0.25cm}p{0.25cm}p{0.25cm}p{0.25cm}}
\toprule
& \multicolumn{15}{c}{Demonstration Ratio} \\
\cmidrule(lr){2-16}
Method
& \multicolumn{5}{c}{100\%}
& \multicolumn{5}{c}{40\%}
& \multicolumn{5}{c}{10\%} \\
\cmidrule(lr){2-6} \cmidrule(lr){7-11} \cmidrule(lr){12-16}
& Avg & 10 & Go. & Obj. & Spa.
& Avg & 10 & Go. & Obj. & Spa.
& Avg & 10 & Go. & Obj. & Spa. \\
\midrule
$\pi_{0}$ & 94.6 & 90.0 & 95.4 & 98.2 & 94.6
        & 89.9 & 82.0 & 92.6 & 95.2 & 89.6
        & 77.6 & 59.0 & 87.4 & 80.6 & 83.4 \\
IGM & 94.3 & 90.0 & 95.2 & 98.2 & 93.6
           & 90.2 & 82.8 & 95.6 & 94.8 & 87.8
           & 76.8 & 61.8 & 84.0 & 79.8 & 81.4 \\
\midrule
\rowcolor{foca}
\textbf{FOCA(Imp.)} & 95.8 & 91.0 & 97.2 & 98.4 & 96.6
     & 93.0 & 87.0 & 94.4 & 97.6 & 92.9
     & 83.6 & 65.6 & 91.0 & 90.0 & 87.6 \\
\rowcolor{foca}
 \textbf{FOCA(D.G.)} & \textbf{96.7} & \textbf{92.6} & \textbf{96.8} & \textbf{99.2} & \textbf{98.0}
     & \textbf{95.7} & \textbf{89.4} & \textbf{96.6} & \textbf{98.6} & \textbf{98.0}
     & \textbf{86.4} & \textbf{73.2} & \textbf{91.4} & \textbf{91.8} & \textbf{89.2} \\
\bottomrule
\end{tabular}
}
%%%%% 

% \end{minipage}
% \hfill
% \begin{minipage}[t]{0.42\textwidth}
% \vspace{0pt}
%     \subcaption{\scriptsize Ablation study on \textbf{explicit future prediction} with: (i) (default) latent prediction within object centric (first row);  (ii) using full images and predicting at pixel level (second row); (iii) using object centric regions and pixel level (third row). We also investigate the \textbf{\# representative tokens} with options 18 tokens (fifth row) and only 2 tokens (last row)}
%     \vspace{-0.6em}
%     \scriptsize
%     \setlength{\tabcolsep}{.5pt}
%     \begin{tabular}{p{0.6cm} p{3.0cm} ccccc}
%     \toprule
%     \textbf{Data} & \textbf{Method} & \textbf{Avg} & \textbf{Libero-10} & \textbf{Goal} & \textbf{Object} & \textbf{Spatial} \\
%     \midrule

%     \multirow{3}{*}{{40\%}}
%     & FOCA expli. only  
%         & \textbf{93.25} & 86.8 & 93.4 & 98.6 & 94.2 \\

%     & FOCA expli.(full images)
%         & 90.9 & 84.0 & 92.6 & 97.2 & 90.0 \\

%     & FOCA expli.(object-centric)
%         & 91.4 & 84.4 & 93.6 & 96.2 & 91.4 \\

%     \midrule

%     \multirow{3}{*}{{10\%}}
%     & FOCAL expli. only, 8 tokens
%         & 79.2 & 60.0 & 89.4 & 83.2 & 84.2 \\

%     & FOCAL expli. only, 18 tokens
%         & 76.35 & 59.6 & 84.2 & 78.8 & 82.8 \\

%     & FOCAL expli. only, 2 tokens
%         & 76.6 & 58.6 & 88.0 & 78.6 & 81.2 \\

%     \bottomrule
%     \end{tabular}

% \end
\vspace{-0.2in}
\end{table}

\textbf{$\rightarrow$ Baselines and evaluation protocol.}
We integrate FOCA into the $\pi_{0}$~\cite{2025pi0} vision–language–action (VLA) model and compare against a diverse set of strong pretrained VLA baselines adapted via standard imitation learning, including \textsc{GR00T-N1.5 }~\cite{bjorck2025gr00t}, \textsc{EO-1}~\cite{qu2025eo}, and \textsc{SmolVLA}~\cite{shukor2025smolvla}. These models represent state-of-the-art (SOTA) generalist policies across diverse architectural scales, with \textsc{SmolVLA} demonstrating competitive generalization despite its smaller backbone. All methods are evaluated under identical data splits, training budgets, and evaluation protocols.

To disentangle the effect of future-oriented conditioning from parameter efficiency, we additionally compare against representative parameter-efficient fine-tuning (PEFT) methods applied to the same $\pi_{0}$ backbone, including \textsc{LoRA}~\cite{hu2021lora}, \textsc{DoRA}~\cite{liu2024dora}, and \textsc{Control-VLA}~\cite{li2025controlvla}. These baselines capture a broad spectrum of recent PEFT strategies designed to stabilize or regularize VLA adaptation under limited supervision.

Finally, under the full-data setting (100\%), we benchmark FOCA against a wide range of state-of-the-art generalized robot policies and VLA systems reported in the literature, including \textsc{Diffusion Policy}~\cite{chi2023diffusionpolicy}, \textsc{Octo}~\cite{octo_2024}, \textsc{Open-VLA}~\cite{kim2024openvla}, \textsc{Spatial-VLA}~\cite{qu2025spatialvla}, \textsc{CoT-VLA}~\cite{zhao2025cot}, \textsc{DreamVLA}~\cite{zhang2025dreamvla}, \textsc{Think-Act}~\cite{huang2025thinkact}, and \textsc{$\pi_{0}$}-fast~\cite{pertsch2025fast}. Among these, \textsc{Think-Act}, \textsc{CoT-VLA}, and \textsc{DreamVLA} also incorporate future prediction objectives, but operate at the text reasoning or pixel level, and do not model task-grounded interaction representations in latent space. Reported results are taken from the original publications, except for \textsc{EO-1}, which we reimplement due to prohibitive default training time (40 hours), while FOCA and other methods converge within 20 hours under our setup.

$\rightarrow$ \textbf{Result.}
As shown in Table~\ref{tab:foca_fewshot} and Fig.~\ref{fig:foca_full_sota}(a), FOCA consistently improves data efficiency for VLA adaptation. With only \textbf{40\% of demonstrations}, \textbf{FOCA} achieves \textbf{94.0\%} success on LIBERO, closely matching \textbf{$\pi_{0}$ } trained with \textbf{100\% data (94.6\%)}, while in the 10\% regime it substantially reduces the few-shot gap, improving success \textbf{from 77.6\% to 85.3\%} and outperforming PEFT methods, which plateau around $\sim$78\%. FOCA also attains state-of-the-art performance under full supervision. These results suggest that \textit{few-shot VLA adaptation is limited more by the quality of supervision than by model capacity}; FOCA addresses this by using future observations to provide richer learning signals that improve long-horizon behavior.

\vspace{-0.15in}
\paragraph{Q2: Leveraging synthetic videos without action supervision.}
We investigate whether FOCA effectively leverages synthetic rollouts from video world models without access to action labels, and how this compares to approaches that rely on pseudo-actions inferred from video.

\noindent\textbf{$\rightarrow$ Baseline setup.}
For each data regime (10\%, 40\%, and 100\%), we first fine-tune the open-world model DreamGen~\cite{jang2025dreamgen} on the corresponding real demonstrations. Using the resulting checkpoint, we generate synthetic task-consistent videos conditioned on task descriptions and initial frames from training episodes. These videos are then used in two ways.
\vspace{-0.1in}
\begin{itemize}[itemsep=0pt]
\vspace{-0.05in}
    \item Inverse generative modeling (IGM)~\cite{baker2022video}: we train the IGM model on the synthetic videos to infer pseudo-actions, which are subsequently used to fine-tune the $\pi_{0}$ VLA model.
    \item \textsc{FOCA} variants: we consider two setups. (a) \textsc{FOCA + Implicit} co-trains the VLA model using action supervision on real data together with FOCA’s implicit future-alignment objective \textit{(we disable explicit future prediction)}. (b) \textsc{FOCA + DreamGen} first pretrains the VLA model using FOCA’s implicit objective on DreamGen-generated videos only, then continues training as in FOCA + Implicit using real demonstrations.
\end{itemize}
\vspace{-0.05in}
\noindent\textbf{$\rightarrow$ Results.}
As shown in Table~\ref{tab:foca_meets_video}, pseudo-action--based \textbf{IGM yields no consistent improvement} over $\pi_{0}$, matching performance only with full data and degrading under limited supervision. In contrast, \textbf{FOCA consistently improves} performance across all regimes: even with implicit alignment alone, FOCA outperforms both $\pi_{0}$ and IGM, and incorporating synthetic videos further amplifies these gains at \textbf{40\% data}, \textbf{FOCA + DreamGen (95.7\%)} surpasses $\pi_{0}$ trained with \textbf{100\% data (94.6\%)}. These results indicate that pseudo-action inference is brittle due to noise in recovered actions, while FOCA’s \textbf{future-oriented conditioning} leverages synthetic videos as representation-level supervision, remaining robust even when generated rollouts are imperfect. We present additional details about DreamGen's synthetic rollouts in the Appendix.

\vspace{0.05in}
\noindent\textbf{Q3: How does FOCA generalize across different VLA architectures and different real robotic embodiments?}

\noindent\textbf{FOCA on GR00T N1.5.} To assess whether FOCA generalizes beyond a single backbone, we integrate it with two architecturally distinct VLA models, $\pi_{0}$ and \textsc{GR00T-N1.5}, and evaluate on the \textsc{RoboCasa} benchmark under both 30- and 100-demonstration regimes across five challenging manipulation tasks. We additionally compare against \textsc{GR00T-N1.0} and \textsc{SmolVLA}. As shown in Fig.~\ref{fig:foca_vla_100demos}, FOCA consistently improves performance across all tasks and data scales, yielding \textbf{gains of 4.4 - 12\%} on $\mathbf{\pi_{0}}$ and \textbf{7.2 - 10.4\%} on \textsc{\textbf{GR00T-N1.5}}. These results demonstrate that FOCA generalizes effectively across VLA architectures. Full results are provided in the appendix (Figure~\ref{fig:foca_vla_res_full_appendix}).

% In the low-data regime (30 demos), FOCA yields substantial gains in average success rate for both backbones, with particularly significant improvements on spatially complex tasks such as Pick-and-Place from Cabinet to Counter and Microwave Turn-off. Specifically, \textbf{FOCA} improves \textbf{$\pi_{0}$ by up to +12\%} and \textbf{GR00T-N1.5 by +7.2\%} on these tasks, indicating strong generalization even when training data is scarce. When scaling to 100 demonstrations, FOCA continues to provide consistent performance gains across both architectures, further highlighting the gap between FOCA-augmented models and their vanilla counterparts. Overall, these results demonstrate that FOCA generalizes effectively across diverse VLA architectures and training regimes.
\vspace{-0.12in}
\paragraph{FOCA in real-world setups.}
% \noindent \textbf{Q5: How well does FOCA perform in real-world setups across different robotic embodiments?}
We further evaluate FOCA in three real-robot setups spanning two distinct robotic platforms - bimanual \textsc{Aloha robot}~\cite{aldaco2024aloha} and a simulated \textsc{humanoid platform} - and diverse manipulation tasks: \textit{Place Parts, Open Bag, Set Table, and Tie Shoelaces}. The details for robot setup are presented in detail in Sec.~\ref{sec:real_robot}, with task examples that can be visualized in Sec.~\ref{sec:visualization}, Appendix. The results are summarized in Figure~\ref{fig:real_robot}.

Across real-world robot platforms, FOCA consistently improves performance over base VLA models under both limited and full supervision. On the simulated \textbf{humanoid}, FOCA boosts GR00T-N1.5 from \textbf{58\% to 84\%} success rate on the Place Part task, demonstrating strong transfer to an industrial-scale robot with distinct kinematics and sensing.

On the \textbf{Aloha bimanual robot}, FOCA yields substantial gains across challenging household tasks using $\pi_{0}$ as the backbone. Notably, FOCA doubles success on \textit{Open Bag} under limited data \textit{(20\% $\rightarrow$ 40\%)} and improves full-data performance to 45\%. On the 
highly dexterous Tie Shoelaces task, it reaches 95\% success. Together, these results demonstrate that FOCA generalizes robustly across robot embodiments, task types, and real-world deployment conditions.

\vspace{-0.05in}
\subsection{Ablation study}
\vspace{-0.05in}
\textbf{Contribution of explicit and implicit components.}
% We ablate FOCA’s explicit and implicit modules on LIBERO across 100\%, 40\%, and 10\% data regimes (Fig.~\ref{fig:foca_abl1_implvsexpl}). Across all settings, full FOCA consistently outperforms both ablated variants and the base $\pi_{0}$, indicating that the two components are complementary. While gains are modest at full data, the performance gap widens as supervision decreases: at 10\% data, full FOCA achieves 85.3\%, substantially outperforming both FOCA–Explicit and FOCA–Implicit. This suggests that explicit interaction-centric future prediction and implicit goal-level alignment provide complementary learning signals, whose integration is particularly critical for robust long-horizon behavior in low-data regimes.
We ablate FOCA’s explicit and implicit modules on LIBERO across different data regimes (Fig.~\ref{fig:foca_abl1_implvsexpl}). Full FOCA consistently outperforms both ablations and the base $\pi_{0}$, with the gap widening as data decreases; at 10\% data, FOCA achieves 85.3\%, substantially outperforming either component alone. This confirms that explicit interaction-centric prediction and implicit goal-level alignment provide complementary signals that are critical in low-data settings.

\textbf{Explicit and implicit ablations.}
We ablate FOCA’s explicit and implicit future-conditioning designs under moderate and low-data regimes. For explicit one (Fig.\ref{fig:foca_vla_res}(b)), operating in a \textit{latent, object-centric space }consistently \textit{outperforms pixel-level alternatives}, showing that structured representations provide more robust supervision by suppressing task-irrelevant visual variability; performance is further sensitive to the number of representative tokens, with a moderate token count yielding the best results. For implicit alignment (Fig.~\ref{fig:foca_vla_res}(e)), focusing on a single, well-aligned future frame with object-centric representations yields strong performance, while extending alignment to \textit{multiple future frames achieves comparable results}. In contrast, \textit{alternative contrastive objectives are less effective}. Together, these ablations show that our design choices are critical for extracting long-horizon learning signals in few-shot settings.

\textbf{Token initialization strategies for implicit and explicit tokens.}
 We also compare four variants: the $\pi_{0}$ baseline, static (random) initialization for both token types, dynamic (adaptive) initialization for implicit tokens only, and dynamic initialization for both token types. Results from Table \ref{tab:init_stra_40} and \ref{tab:init_stra_100} show that dynamic initialization consistently improves performance, with the fully adaptive variant achieving the best average score, validating the importance of grounding both token types in visual features for effective future-oriented conditioning.

 \begin{table}[t]
\centering
\scriptsize
\caption{Ablation study on token initialization strategies for implicit and explicit tokens in FOCA on LIBERO at 40\% data scale.}
\label{tab:init_stra_40}
\resizebox{0.5\textwidth}{!}{%
\begin{tabular}{lccccc}
\toprule
\textbf{Method} & \textbf{10} & \textbf{Go.} & \textbf{Obj.} & \textbf{Spa.} & \textbf{Avg} \\
\midrule
$\pi_{0}$ Baseline                                        & 82   & 92.6 & 95.2 & 89.6 & 89.85 \\[.1em]
FOCA (ada. impli., rand. expli.)                 & 88   & 94.6 & 99.6 & 93.6 & 93.95 \\[.1em]
FOCA (both random)              & 84.2 & 92.6 & 97.8 & 92.2 & 91.7  \\[.1em]
\midrule
\rowcolor{foca}
\textbf{FOCA (both adaptive)} & \textbf{87.8} & \textbf{96} & \textbf{98.4} & \textbf{95} & \textbf{94.3} \\
\bottomrule
\end{tabular}}
\end{table}

\begin{table}[t]
\centering
\scriptsize
\caption{Ablation study on token initialization strategies for implicit and explicit tokens in FOCA on LIBERO at 100\% data scale.}
\label{tab:init_stra_100}
\resizebox{0.5\textwidth}{!}{%
\begin{tabular}{lccccc}
\toprule
\textbf{Method} & \textbf{10} & \textbf{Go.} & \textbf{Obj.} & \textbf{Spa.} & \textbf{Avg} \\
\midrule
$\pi_{0}$ Baseline                                        & 90   & 95.4 & 98.2 & 94.6 & 94.6 \\[.1em]
FOCA (ada. impli., rand. expli.)       & 92.4 & \textbf{97.4} & \textbf{99.8} & \textbf{97} & 96.6 \\[.1em]
\rowcolor{foca}
\textbf{FOCA ((both adaptive)} & \textbf{92.8} & 97.2 & 99.6 & \textbf{98.2} & \textbf{97} \\
\bottomrule
\end{tabular}}
\vspace{-0.2in}
\end{table}

\textbf{Comparison with re-implemented future-oriented baselines.} 
We also compare FOCA with representative future-oriented baselines, including DreamVLA-style \cite{zhang2025dreamvla} and FLARE \cite{zheng2025flare}, under the same policy backbones and fine-tuning settings, as shown in Tables~\ref{tab:pi0_dream_40}, \ref{tab:pi0_dream_100}, and~\ref{tab:groot_flare_robo}. For fair comparison, DreamVLA-style dynamic future conditioning is integrated into $\pi_{0}$, while FLARE is re-implemented following its original design due to the absence of publicly available fine-tuning code. Results show that future-oriented conditioning generally improves over the corresponding baselines across most tasks and benchmarks. However, FOCA consistently achieves the best overall performance in all settings. On LIBERO with $\pi_{0}$ at 40\% data scale, FOCA improves the average score from 92.1 (DreamVLA) and 91.0 (FLARE) to 94.0. Similarly, at 100\% data scale, FOCA outperforms DreamVLA-style conditioning by +1.0 average score. On RoboCasa with Gr00t-N1.5, FOCA further achieves 34.4 average success rate compared to 28.4 for FLARE. These results demonstrate the effectiveness and robustness of FOCA across different future-oriented baselines, policy architectures, and robotic manipulation benchmarks.

\begin{table}[t]
\centering
\scriptsize
\caption{Comparison of $\pi_{0}$ fine-tuned with DreamVLA-style future conditioning, FLARE, and FOCA on LIBERO at 40\% data scale.}
\label{tab:pi0_dream_40}
\resizebox{0.4\textwidth}{!}{%
\begin{tabular}{lccccc}
\toprule
\textbf{Method} & \textbf{10} & \textbf{Go.} & \textbf{Obj.} & \textbf{Spa.} & \textbf{Avg} \\
\midrule
$\pi_{0}$ Baseline       & 82   & 92.6 & 95.2 & 89.6 & 89.85 \\[.1em]
$\pi_{0}$ + DreamVLA     & 84.8 & 93.2 & 96.4 & 94   & 92.1  \\[.1em]
$\pi_{0}$ + FLARE        & 83.6 & 93   & 96.8 & 90.6 & 91   \\[.1em]
\midrule
\rowcolor{foca}
\textbf{FOCA}            & \textbf{88} & \textbf{94.6} & \textbf{99.6} & \textbf{93.6} & \textbf{94} \\
\bottomrule
\end{tabular}}
\end{table}

\vspace{0.5em}

\begin{table}[t]
\centering
\scriptsize
\caption{Comparison of $\pi_{0}$ fine-tuned with DreamVLA-style future conditioning and FOCA on LIBERO at 100\% data scale.}
\label{tab:pi0_dream_100}
\resizebox{0.4\textwidth}{!}{%
\begin{tabular}{lccccc}
\toprule
\textbf{Method} & \textbf{10} & \textbf{Go.} & \textbf{Obj.} & \textbf{Spa.} & \textbf{Avg} \\
\midrule
$\pi_{0}$ Baseline       & 90   & 95.4 & 98.2 & 94.6 & 94.6 \\[.1em]
$\pi_{0}$ + DreamVLA     & 90.8 & 96.4 & 98.8 & 96.2 & 95.6 \\[.1em]
\midrule
\rowcolor{foca}
\textbf{FOCA}            & \textbf{92.4} & \textbf{97.4} & \textbf{99.8} & \textbf{97} & \textbf{96.6} \\
\bottomrule
\end{tabular}}
\vspace{-0.15in}
\end{table}

% \begin{table}[H]
% \centering
% \scriptsize
% \caption{Comparison of Gr00t-N1.5 fine-tuned with FLARE and FOCA on RoboCasa at 100 demos.}
% \label{tab:groot_flare_robo}
% \resizebox{0.5\textwidth}{!}{%
% \begin{tabular}{lcccccc}
% \toprule
% \textbf{Method} & \textbf{Cab$\rightarrow$Counter} & \textbf{Counter$\rightarrow$Cab} & \textbf{Coffee Mug} & \textbf{Off Stove} & \textbf{On MW} & \textbf{Avg} \\
% \midrule
% Gr00t-N1.5 Baseline      & 28 & 40 & 12 & 14 & 36 & 26   \\[.1em]
% Gr00t-N1.5 + FLARE       & 24 & 34 & 20 & 14 & 50 & 28.4 \\[.1em]
% \midrule
% \rowcolor{foca}
% \textbf{Gr00t-N1.5 + FOCA} & \textbf{30} & \textbf{36} & \textbf{10} & \textbf{24} & \textbf{72} & \textbf{34.4} \\
% \bottomrule
% \end{tabular}}
% \end{table}
\begin{table}[h]
\vspace{-0.1in}
\centering
\scriptsize
\caption{Comparison of Gr00t-N1.5 fine-tuned with FLARE and FOCA on RoboCasa at 100 demos.}
\label{tab:groot_flare_robo}
\resizebox{0.5\textwidth}{!}{%
\begin{tabular}{lcccccc}
\toprule
\multirow{2}{*}{\textbf{Method}} 
& \multicolumn{2}{c}{\textbf{Pick-and-Place}} 
& \multirow{2}{*}{\shortstack{\textbf{Coffee}\\\textbf{Setup}}} 
& \multirow{2}{*}{\shortstack{\textbf{Off}\\\textbf{Stove}}} 
& \multirow{2}{*}{\shortstack{\textbf{On}\\\textbf{MW}}} 
& \multirow{2}{*}{\textbf{Avg}} \\
\cmidrule(lr){2-3}
& \textbf{Cab$\rightarrow$Counter} 
& \textbf{Counter$\rightarrow$Cab} 
& & & & \\
\midrule

Gr00t-N1.5 Baseline      
& 28 & 40 & 12 & 14 & 36 & 26 \\[.1em]

Gr00t-N1.5 + FLARE       
& 24 & 34 & 20 & 14 & 50 & 28.4 \\[.1em]

\midrule
\rowcolor{foca}
\textbf{Gr00t-N1.5 + FOCA} 
& \textbf{30} & \textbf{36} & \textbf{10} & \textbf{24} & \textbf{72} & \textbf{34.4} \\

\bottomrule
\end{tabular}}
\end{table}

\textbf{Contribution of DreamGen on FOCA on table setup tasks using bimanual Aloha robot.} 
We ablate the effectiveness of DreamGen on FOCA when apply to real-robot (bimanual Aloha) on table set up tasks. Our models are trained with the $\pi_{0}$ VLA architecture using either 40\% (120 demonstrations) or 100\%
(300 demonstrations) of the dataset. As shown in Table \ref{tab:dream_aloha}, incorporating FOCA improves performance over $\pi_{0}$, and combining FOCA with DreamGen yields the largest gains.

\begin{table}[H]
\centering
\small
\caption{Comparison of task success rates on real-robot.}
\label{tab:dream_aloha}
\resizebox{0.4\textwidth}{!}{%
\begin{tabular}{lcc>{\columncolor{foca}}c}
\toprule
 & $\pi_{0}$ & FOCA ($\pi_{0}$) & \textbf{FOCA + DreamGen} \\
\midrule
40\%  & 10\% & 20\% & 55\% \\
100\% & 10\% & 45\% & 60\% \\
\bottomrule
\end{tabular}}
\vspace{-0.2in}
\end{table}

\vspace{-0.1in}
\section{Conclusions}
\label{sec:conclusion}
\vspace{-0.05in}
We show that few-shot VLA adaptation is limited by the learning signal extracted per demonstration. FOCA addresses this by combining explicit and implicit future supervision in the latent space to enable data-efficient, action-free adaptation of pretrained VLA models. Extensive experiments across simulation and real robots demonstrate consistent gains, robustness to imperfect synthetic rollouts, and competitive generalization across architectures and embodiments, positioning future-conditioned representation learning as a promising route toward scalable robotic adaptation.

\section*{Impact Statement}
This paper presents work whose goal is to advance the field of Data Efficiency in Robotics and Vision-Language-Action models. There are many potential societal consequences of our work, none which we feel must be specifically highlighted here.

\section*{Acknowledgment}
Duy M. H. Nguyen and M. Niepert acknowledge funding by Deutsche Forschungsgemeinschaft (DFG,
German Research Foundation) under Germany’s Excellence Strategy - EXC 2075 – 390740016 and  the International Max Planck
Research School for Intelligent Systems (IMPRS-IS). Tuan Anh Tran, Duy M. H. Nguyen, and Daniel Sonntag are also supported by the No-IDLE project (BMFTR, 16IW23002), the MASTER project
(EU, 101093079), and the Endowed Chair of Applied Artificial Intelligence, Oldenburg University.

% \clearpage
% \newpage
\bibliography{icml/references}
\bibliographystyle{icml2026}

%%%%%%%%%%%%%%%%%%%%%%%%%%%%%%%%%%%%%%%%%%%%%%%%%%%%%%%%%%%%%%%%%%%%%%%%%%%%%%%
%%%%%%%%%%%%%%%%%%%%%%%%%%%%%%%%%%%%%%%%%%%%%%%%%%%%%%%%%%%%%%%%%%%%%%%%%%%%%%%
% APPENDIX
%%%%%%%%%%%%%%%%%%%%%%%%%%%%%%%%%%%%%%%%%%%%%%%%%%%%%%%%%%%%%%%%%%%%%%%%%%%%%%%
%%%%%%%%%%%%%%%%%%%%%%%%%%%%%%%%%%%%%%%%%%%%%%%%%%%%%%%%%%%%%%%%%%%%%%%%%%%%%%%
\newpage

\appendix
\onecolumn

\begin{center}
\textsc{\Large Supplementary Material for \\ \vspace{.2em} ``FOCA: Future-Oriented Conditioning for Data-Efficient Vision-Language-Action Adaptation''}
\end{center}
% Add appendix to the main TOC
\addcontentsline{toc}{section}{Supplementary Material}
{
  \hypersetup{linkcolor=black}
  \tableofcontents
  \addtocontents{toc}{\protect\setcounter{tocdepth}{2}}
}
\vspace{-0.1in}
\captionsetup{font=normalsize}
\newpage

\section{Limitations and Future Works}

\paragraph{Limitations.}
While FOCA improves few-shot adaptation by injecting future-oriented supervision, it does not explicitly model long-term symbolic reasoning or task decomposition. In complex, multi-stage tasks that require remembering completed subtasks and planning several steps ahead (e.g., scenarios with multiple similar objects such as bartender-style setups), the lack of an explicit memory or chain-of-thought mechanism can lead to ambiguity or confusion. FOCA captures long-horizon structure implicitly in representation space, but it does not generate intermediate reasoning traces or maintain a persistent task state, which may limit performance on tasks requiring explicit temporal abstraction and bookkeeping.

\paragraph{Future work.}
An important direction is to extend FOCA with explicit reasoning and memory components, such as lightweight task-state memories or structured chain-of-thought generation~\cite{van2026refinevla,li2026trm}, to better handle long-horizon, compositional tasks~\cite{hanyu2025slotvla,chung2026rethinking,vo2025clutter,torne2026mem}. Another promising avenue is to investigate FOCA’s applicability to autoregressive VLA architectures, including EO-1 and CoT-VLA, where future-oriented conditioning could be integrated at the token-generation level rather than within diffusion policies. More broadly, combining FOCA with structured reasoning, memory, and autoregressive control may offer a unified framework that bridges representation-level future conditioning with explicit planning and decision-making.

% \textcolor{red}{coordinator: Minh Duy}
\section{Additional Related Works.}
\paragraph{Synthetic Data and Viewpoint Augmentation.} 

A complementary line of work improves data efficiency by augmenting demonstrations with additional observations, including synthesizing trajectories under novel robot embodiments or camera viewpoints to improve robustness and transfer (e.g., RoVi-Aug~\citep{chen2024roviaug}). Relatedly, multi-view representation learning exploits auxiliary viewpoints during training to improve downstream control without requiring camera calibration at deployment~\citep{seo2023mvmwm}. More broadly, video and world models can generate plausible future rollouts that serve as auxiliary supervision or evaluation scaffolding, including robotics-oriented generators such as DreamGen~\citep{jang2025dreamgen} and large-scale world-model platforms such as Cosmos~\citep{agarwal2025cosmos,ali2025worldsimulation}. FOCA is compatible with these approaches but is motivated by a distinct observation: synthetic videos can provide useful future supervision even when action labels are unavailable.

Rather than introducing a new VLA backbone or pretraining recipe, FOCA operates as a fine-tuning–time adaptation mechanism that can be layered on top of existing VLA models and adaptation strategies, including parameter-efficient updates and object-centric conditioning~\citep{hu2021lora,liu2024dora,li2025controlvla,sridhar2025ricl}. By injecting future-oriented objectives in latent space over task-grounded interaction tokens, FOCA introduces long-horizon structure while remaining lightweight and effective in few-shot regimes. Crucially, its implicit future alignment naturally extends to action-free videos generated by world models, enabling the use of synthetic rollouts without inverse dynamics or pseudo-action labeling~\citep{agarwal2025cosmos}. Overall, FOCA bridges large-scale VLA pretraining and data-efficient downstream adaptation in a principled and modular manner.

\paragraph{Future Prediction in VLA models}
Recent VLA models incorporate auxiliary prediction objectives to improve spatial reasoning, including chain-of-thought prediction, trajectory generation, visual question answering, and future state forecasting~\cite{zheng2025tracevla,qu2025eo,team2025gemini15}. Among these directions, pixel-level future frame prediction has emerged as a particularly appealing form of self-supervision~\cite{seo2023multi,zhao2025cot}, but dense image reconstruction is often redundant and difficult to optimize, especially in low-data regimes. In contrast, FOCA predicts future information directly in latent embedding space and restricts supervision to task-grounded dynamic regions capturing gripper–object interactions, which can be efficiently identified using off-the-shelf visual grounding models (e.g., DINO~\cite{liu2024grounding}), thereby avoiding full-scene reconstruction while preserving the control-relevant inductive bias needed for effective robot adaptation.

\paragraph{Contrastive Learning in Robotic Manipulation}
Contrastive learning has been widely used to improve transfer and generalization in robotic learning, including time-contrastive objectives across multi-view observations~\cite{sermanet2018time}, multimodal alignment between vision, language, and actions~\cite{rana2023contrastive,ma2024contrastive}, and trajectory-level contrastive losses in latent diffusion spaces~\cite{lee2025class}. However, these methods are largely developed for CLIP-based models or relatively small policies such as ACT and Diffusion Policy~\cite{chi2023diffusionpolicy}. Directly applying them to large pretrained VLA models is challenging due to model scale and the domain gap between pretraining and downstream control. To address this, FOCA introduces a small set of representative tokens that summarize vision–language information via attention within the VLM and serve as a compact interface for contrastive future alignment. The most closely related work is FLARE~\cite{zheng2025flare}, which integrates future alignment during large-scale \textit{diffusion} pretraining; in contrast, FOCA applies future-oriented objectives directly \textit{within the VLM modules} during downstream adaptation. Furthermore, we show that FOCA further improves GR00T-N1.5~\cite{bjorck2025gr00t} performance, whose diffusion policy is already pretrained with FLARE, demonstrating that FOCA provides additional gains beyond diffusion-level pretraining alone.

\section{Visualizations}
\label{sec:visualization}
% \begin{itemize}
%     \item Provide more examples on the simulation and the real dataset used in our experiments.
%     \item Provide a figure of a real robot (Aloha and a humanoid) used in our experiments.
%     \item Provide a typical continous frames of a failed task (for baseline) while FOCA can fix it (See figure example in \cite{koo2025hamlet} and $\pi_{0}$ VLA).
%     \item Provide a sequence of frames generated from a hallucinated DreamGen model (trained with 10\%) against high-quality samples trained with DreamGen at a larger few-shot imitation ratio (40\%).
% \end{itemize}

This section provides qualitative visualizations to complement the quantitative results presented in the paper. We include representative examples from both simulated and real-world environments to illustrate task diversity, long-horizon structure, and failure modes addressed by FOCA.

\subsection{Rollout Visualizations on Simulated and Real-World Robots}
Figures~\ref{fig:libero_examples} and~\ref{fig:robocasa_examples} show example task executions from the LIBERO and RoboCasa benchmarks, respectively.
These environments span a wide range of manipulation challenges, including object placement, spatial reasoning, and multi-step goal completion under varying initial conditions. Figures~\ref{fig:aloha_examples} and~\ref{fig:h3_examples} present example rollouts collected on two real-world robotic platforms: the bimanual ALOHA robot and a simulated humanoid robot. These figures illustrate the visual complexity and embodiment-specific challenges present in real-world settings, including partial observability, viewpoint variation, and contact-rich manipulation.

\begin{figure*}[th]
    \centering
    \includegraphics[width=0.9\linewidth]{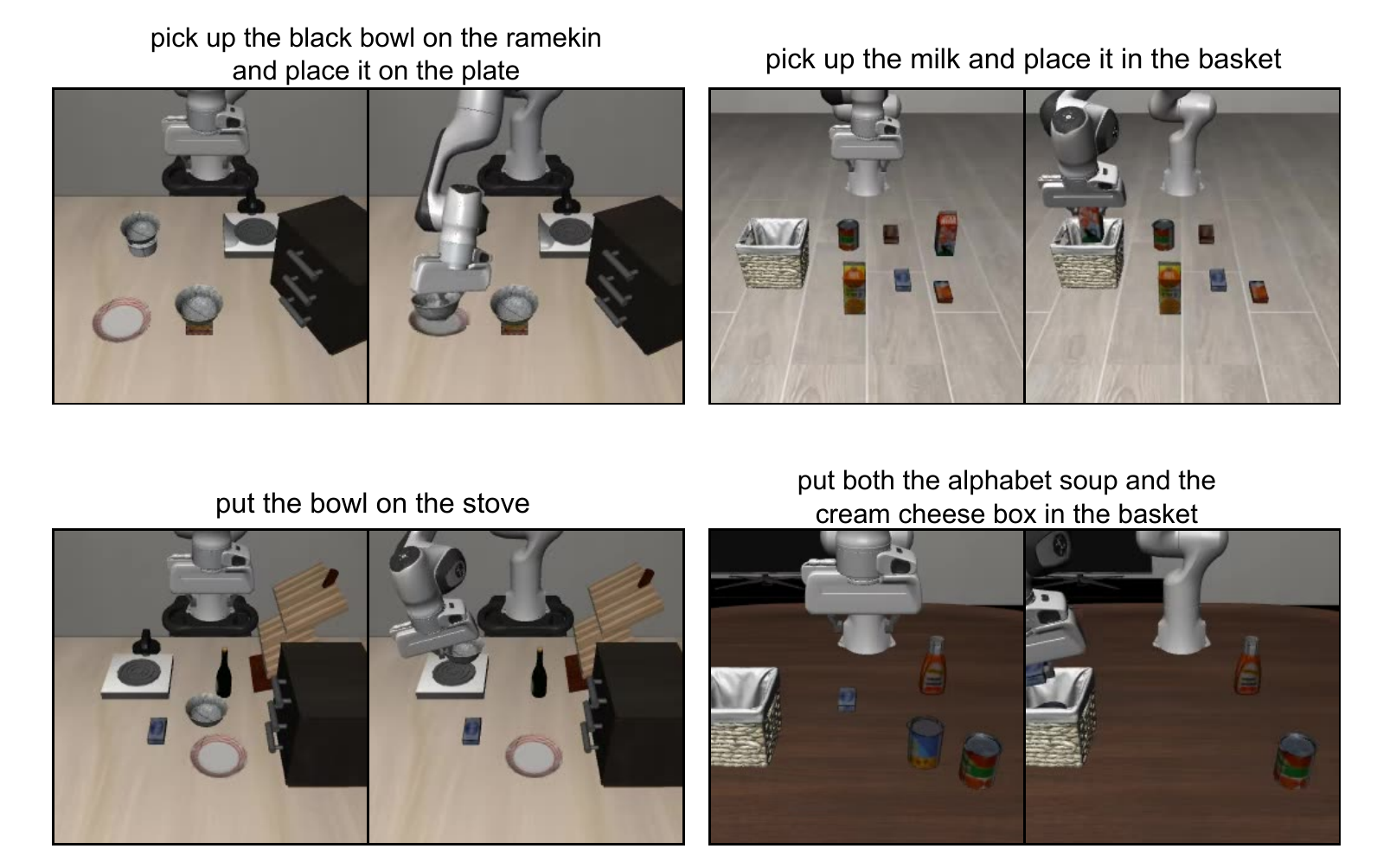}
    \caption{
   \textbf{Example rollouts from LIBERO simulation.} \\
} \label{fig:libero_examples}
\end{figure*}

\begin{figure*}[th]
    \centering
    \includegraphics[width=0.9\linewidth]{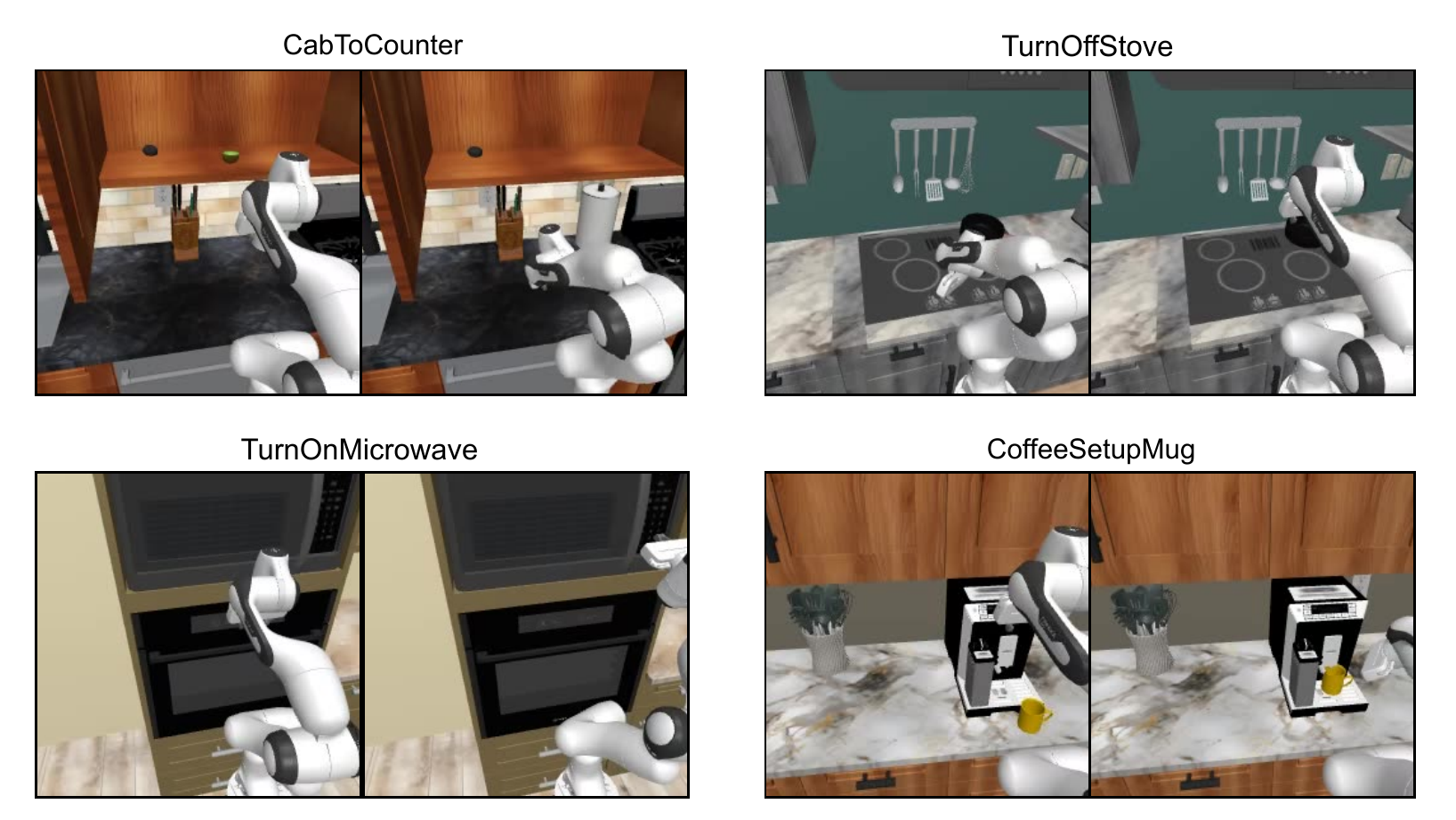}
    \caption{
   \textbf{Example rollouts from ROBOCASA simulation.} \\
} \label{fig:robocasa_examples}
\end{figure*}

% \begin{figure*}[th]
%     \centering
%     \includegraphics[width=\linewidth]{figures/aloha_examples.pdf}
%     \caption{
%    \textbf{Example rollouts from ALOHA real-world robot.} \\
% } \label{fig:aloha_examples}
% \end{figure*}

\begin{figure*}
    \centering
      % \begin{minipage}[t]{0.49\linewidth}
        % \centering
        \includegraphics[width=0.9\linewidth]{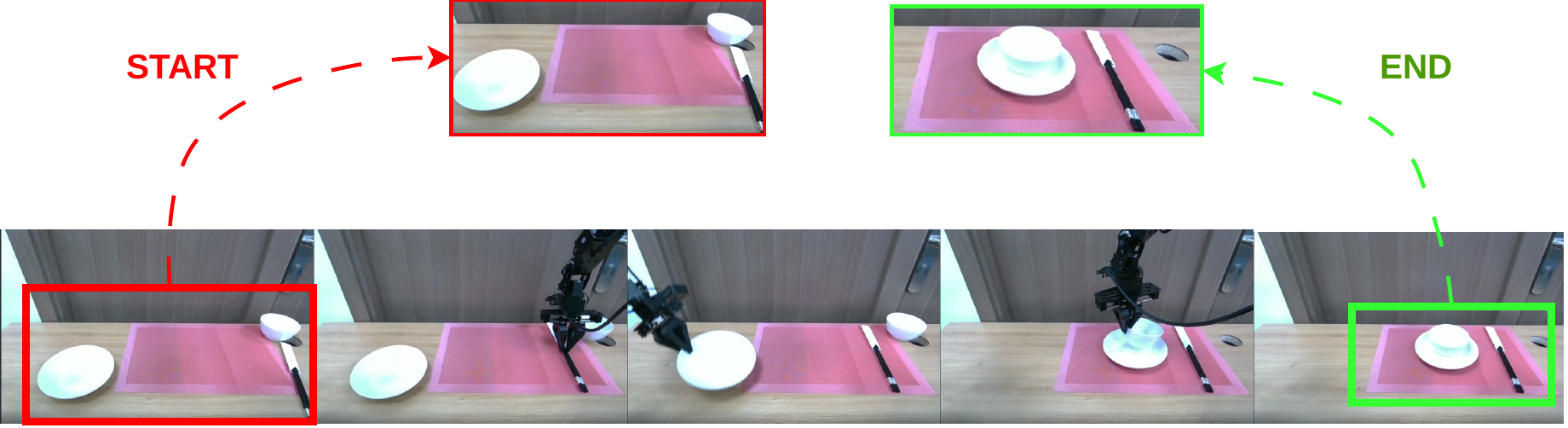}
        \caption*{(a) Set table }
      % \end{minipage}
    
      % \begin{minipage}[t]{0.49linewidth}
        % \centering
        \includegraphics[width=0.9\linewidth]{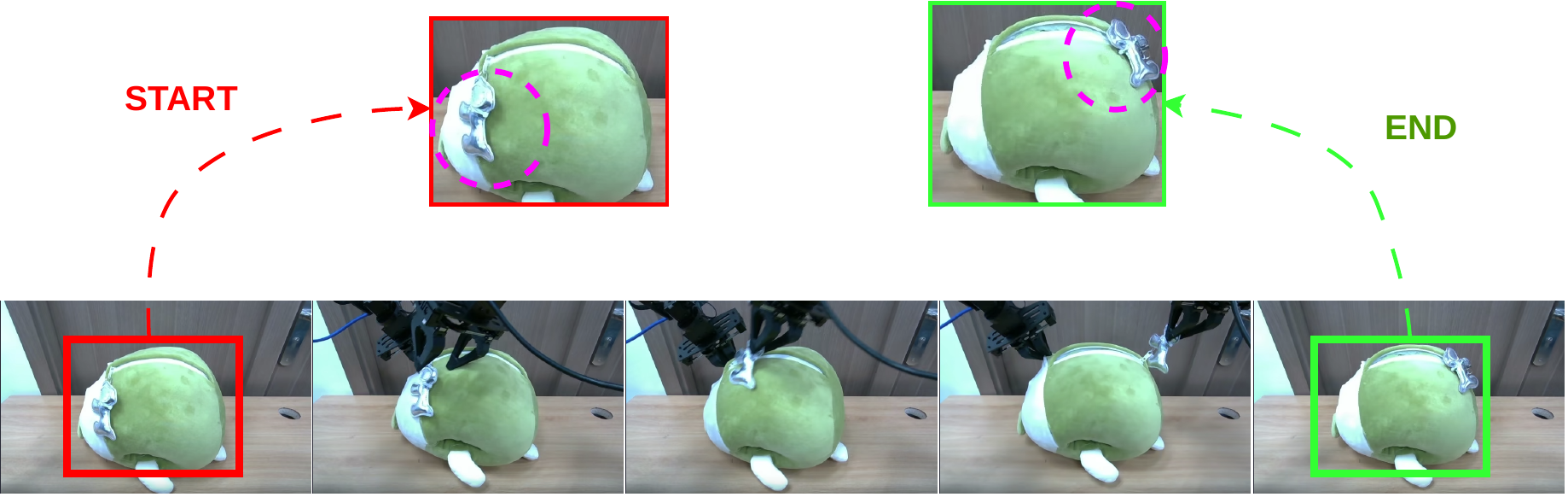}
        \caption*{(b) Open bag}
      % \end{minipage}
    
      % \begin{minipage}[t]{0.49\linewidth}
        % \centering
        \includegraphics[width=0.9\linewidth]{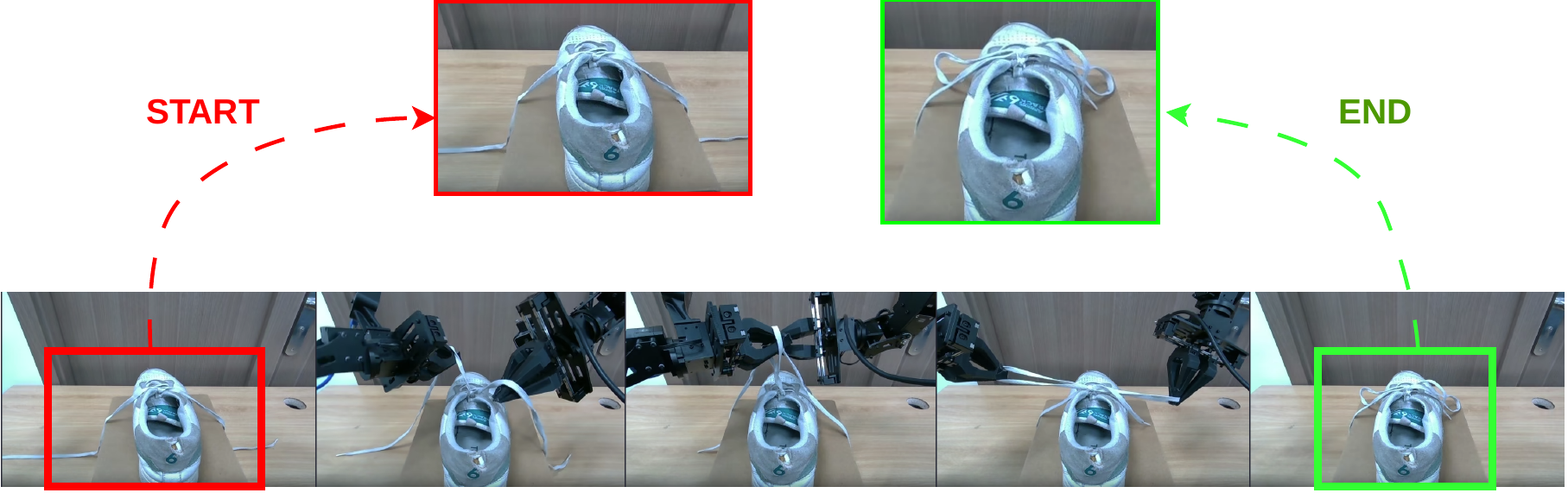}
        \caption*{(c) Tie shoelaces}
      % \end{minipage}

  \caption{\textbf{ Example rollouts from ALOHA real-world robot.}}
  \label{fig:aloha_examples}
\end{figure*}

\begin{figure*}[th]
    \centering
    \includegraphics[width=\linewidth]{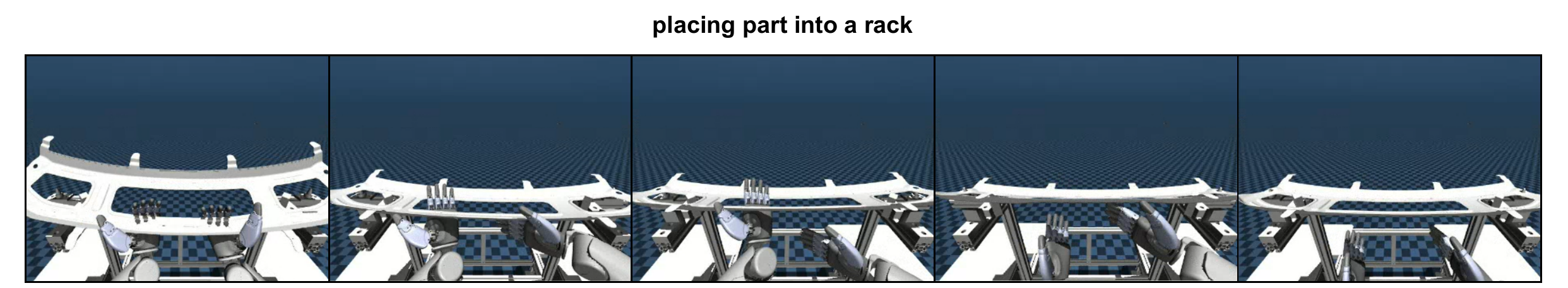}
    \caption{
   \textbf{Example episodes from Place Part performed by VinRobotics' simulated VR-H3 humanoid on Mujoco.} \\
} \label{fig:h3_examples}
\end{figure*}

\subsection{Failure correction via future conditioning}
To better understand the behavioral differences induced by future-oriented conditioning, we visualize continuous frame sequences for representative failure cases (Figure.~\ref{fig:failed_success}).
In particular, we compare baseline VLA policies that fail to complete the task with FOCA-enhanced policies that successfully recover and reach the goal under the same initial conditions.
These examples qualitatively demonstrate that FOCA improves long-horizon consistency and prevents early commitment to suboptimal interaction trajectories, consistent with its future-conditioned design.

\begin{figure*}[th]
    \centering
    \includegraphics[width=\linewidth]{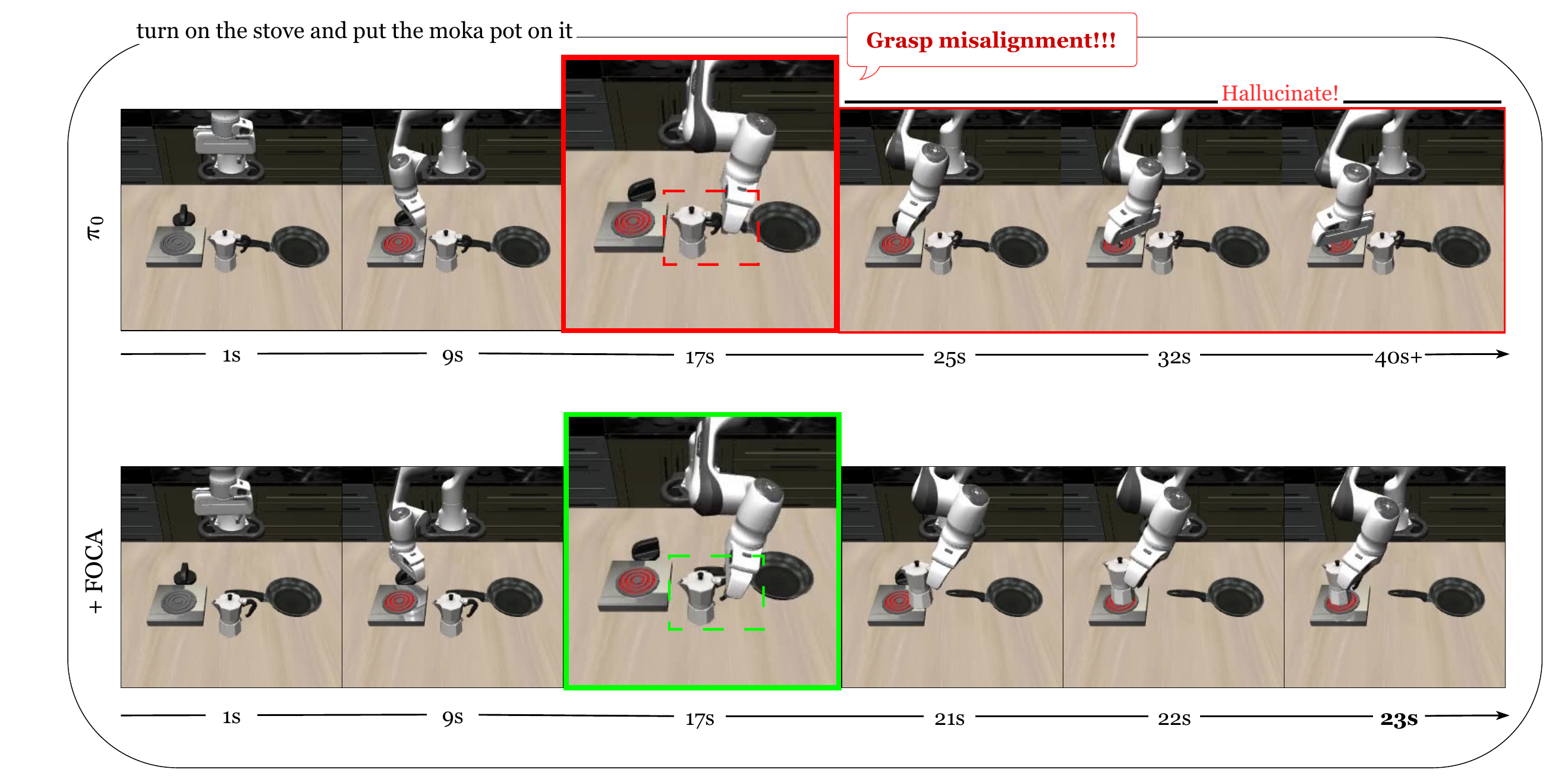}
    \caption{
   \textbf{Example rollouts where the baseline $\pi_0$ policy fails (top) and FOCA succeeds (bottom).} \\
} \label{fig:failed_success}
\end{figure*}

\subsection{DreamGen hallucination analysis}
Figure~\ref{fig:dreamgen_hallucinated_examples} illustrates sequences generated by the DreamGen video world model under different training regimes.
When trained with limited demonstrations (40\%), the generated videos exhibit hallucinations. In contrast, DreamGen trained with a larger few-shot imitation ratio (100\%) produces more coherent and physically plausible future trajectories.

\begin{figure*}[th]
    \centering
    \includegraphics[width=\linewidth]{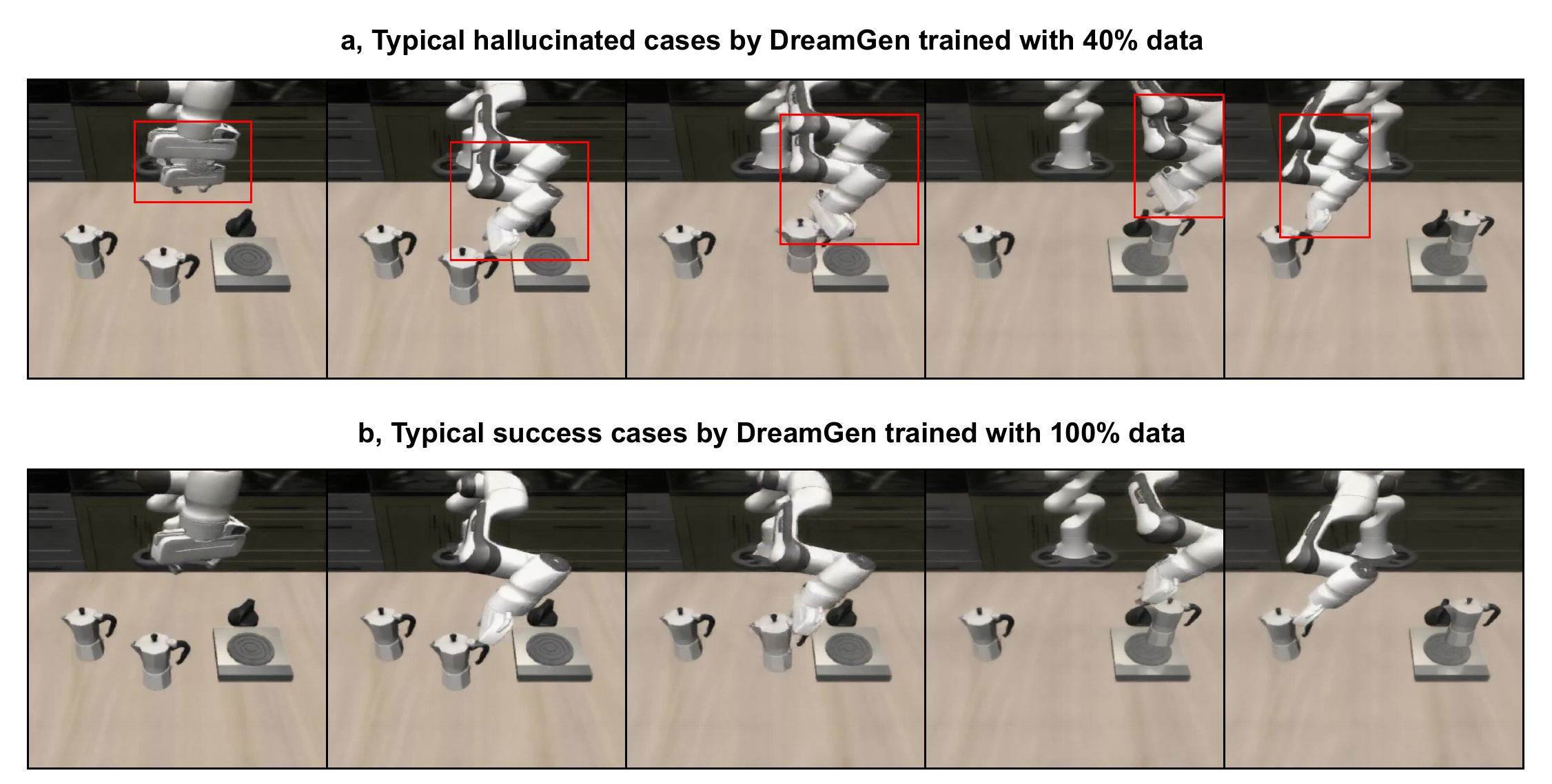}
    \caption{
        \textbf{Synthetic rollouts generated by DreamGen at different data scales.}
        (a) Hallucinated cases with 40\% training data; (b) successful cases with 100\% training data.
    } 
 \label{fig:dreamgen_hallucinated_examples}
\end{figure*}

% Part 4 =======Robocasa5tasks=============================================
\begin{figure*}[t]
        % ==================== 30 demos
    \begin{subfigure}{\textwidth}
    \begin{tikzpicture}[spy using outlines=	{rectangle, magnification=2, connect spies}]
      \begin{axis}[
        ybar, axis on top,
        height=5cm, width=16cm,
        bar width=0.2cm,
        ymajorgrids, tick align=inside,
        major grid style={draw=white},
        enlarge y limits={value=.1,upper},
        ymin=0, ymax=100,
        axis x line*=bottom,
        axis y line*=left,
        y axis line style={opacity=100},
        tickwidth=0pt,
        enlarge x limits=true,
        legend style={
            at={(0.5,1.05)},
            anchor=south,
            legend columns=-1,
            font=\footnotesize,
            /tikz/every even column/.append style={column sep=0.1cm}
        },
        ylabel={Success rate (\%)},
        symbolic x coords={
           Average,PnP from Cab to Counter,PnP from Counter to Cab,Coffee Setup Mug,Stove Turn-off,Microwave Turn-on,},
       xtick=data,
       xticklabels={},
       nodes near coords={
            \tiny
            \pgfmathprintnumber[precision=0]{\pgfplotspointmeta} 
        },
    ]
    % Groot-N1.0
    \addplot [draw=none, fill=darkgray!50] coordinates {
      (Average,12.6)
      (PnP from Cab to Counter, 1) 
      (PnP from Counter to Cab,2)
      (Coffee Setup Mug,0) 
      (Stove Turn-off,4.6) 
      (Microwave Turn-on,55.6)
      };
   % Smol-VLA
   \addplot [draw=none, fill=orange!50] coordinates {
      (Average, 15.6)
      (PnP from Cab to Counter, 4) 
      (PnP from Counter to Cab, 2)
      (Coffee Setup Mug, 4) 
      (Stove Turn-off, 4) 
      (Microwave Turn-on, 64)
      };
    % Pi-Zero
   \addplot [draw=none, fill=vinred!50] coordinates {
      (Average, 13.2)
      (PnP from Cab to Counter, 8) 
      (PnP from Counter to Cab, 16)
      (Coffee Setup Mug, 2) 
      (Stove Turn-off, 4) 
      (Microwave Turn-on, 36)
      };   
     % FOCA based Pi-Zero
   \addplot [draw=none, fill=vinred] coordinates {
      (Average, 22)
      (PnP from Cab to Counter, 10) 
      (PnP from Counter to Cab, 18)
      (Coffee Setup Mug, 2) 
      (Stove Turn-off, 6) 
      (Microwave Turn-on, 74)
      };  
  % Groot-N1.5
   \addplot [draw=none, fill=pi0blue!50] coordinates {
      (Average, 17.6)
      (PnP from Cab to Counter, 12) 
      (PnP from Counter to Cab, 18)
      (Coffee Setup Mug, 4) 
      (Stove Turn-off, 4) 
      (Microwave Turn-on, 50)
      }; 
  % FOCA based Groot-N1.5
   \addplot [draw=none, fill=pi0blue] coordinates {
      (Average, 24.8)
      (PnP from Cab to Counter, 16) 
      (PnP from Counter to Cab, 14)
      (Coffee Setup Mug, 8) 
      (Stove Turn-off, 6) 
      (Microwave Turn-on, 80)
      }; 
    \legend{Groot-N1.0,Smol-VLA,Pi-Zero, FOCA (Pi-Zero based), Groot-N1.5, FOCA (Groot-N1.5 based)  }
    \coordinate (spypoint) at ([xshift=8pt,yshift=3pt] axis cs:Average,20);
   \coordinate (magnifyglass) at ([xshift=12pt,yshift=0pt] axis cs:PnP from Cab to Counter,70);
  \end{axis}
  \spy [red!50!black, width=2.2cm, height=1.5cm] on (spypoint) in node[fill=none] at (magnifyglass);
  \node at (3.7,3.1) {\textcolor{green!50!black}{\textbf{\scriptsize $\uparrow 9\%$}}};
  \node at (5,3.1) {\textcolor{green!50!black}{\textbf{\scriptsize $\uparrow 7.2\%$}}};

  \end{tikzpicture}
    \label{fig:foca_vla_30demos}
     % \vspace{-1.5em}
    \end{subfigure}
  % ==================== 100 demo
    \begin{subfigure}{\textwidth}
    \begin{tikzpicture}[spy using outlines=	{rectangle, magnification=2, connect spies}]
      \begin{axis}[
        ybar, axis on top,
        height=5cm, width=16cm,
        bar width=0.2cm,
        ymajorgrids, tick align=inside,
        major grid style={draw=white},
        enlarge y limits={value=.1,upper},
        ymin=0, ymax=100,
        axis x line*=bottom,
        axis y line*=left,
        y axis line style={opacity=100},
        tickwidth=0pt,
        enlarge x limits=true,
        legend style={
            at={(0.5,-0.2)},
            anchor=north,
            legend columns=-1,
            /tikz/every even column/.append style={column sep=0.5cm}
        },
        ylabel={Success rate (\%)},
        symbolic x coords={
           Average,PnP from Cab to Counter,PnP from Counter to Cab,Coffee Setup Mug,
           Stove Turn-off,Microwave Turn-on,},
        xtick=data,
        xticklabel style={
            align=center,      % center-align the text
            text width=2cm,    % wrap text in a width of 2cm
            font=\scriptsize,  % adjust font size
        },       
        nodes near coords={
        \tiny
        \pgfmathprintnumber[precision=0]{\pgfplotspointmeta}
       },
    ]
    % Groot-N1.0
    \addplot [draw=none, fill=darkgray!50] coordinates {
      (Average,20.44)
      (PnP from Cab to Counter, 4) 
      (PnP from Counter to Cab,7)
      (Coffee Setup Mug,2) 
      (Stove Turn-off,15.7) 
      (Microwave Turn-on,73.5)
      };
   % Smol-VLA
   \addplot [draw=none, fill=orange!50] coordinates {
      (Average, 22)
      (PnP from Cab to Counter, 6) 
      (PnP from Counter to Cab, 6)
      (Coffee Setup Mug, 10) 
      (Stove Turn-off, 10) 
      (Microwave Turn-on, 78)
      };
    % Pi-Zero
   \addplot [draw=none, fill=vinred!50] coordinates {
      (Average, 22.4)
      (PnP from Cab to Counter, 10) 
      (PnP from Counter to Cab, 14)
      (Coffee Setup Mug, 4) 
      (Stove Turn-off, 6) 
      (Microwave Turn-on, 78)
      };   
     % FOCA based Pi-Zero
   \addplot [draw=none, fill=vinred] coordinates {
      (Average, 26.8)
      (PnP from Cab to Counter, 18) 
      (PnP from Counter to Cab, 22)
      (Coffee Setup Mug, 14) 
      (Stove Turn-off, 18) 
      (Microwave Turn-on, 62)
      };  
  % Groot-N1.5
   \addplot [draw=none, fill=pi0blue!50] coordinates {
      (Average, 24)
      (PnP from Cab to Counter, 22) 
      (PnP from Counter to Cab, 36)
      (Coffee Setup Mug, 6) 
      (Stove Turn-off, 20) 
      (Microwave Turn-on, 36)
      }; 
  % FOCA based Groot-N1.5
   \addplot [draw=none, fill=pi0blue] coordinates {
      (Average, 34.4)
      (PnP from Cab to Counter, 30) 
      (PnP from Counter to Cab, 36)
      (Coffee Setup Mug, 10) 
      (Stove Turn-off, 24) 
      (Microwave Turn-on, 72)
      }; 
    \coordinate (spypoint) at ([xshift=8pt,yshift=12pt] axis cs:Average,20);
   \coordinate (magnifyglass) at ([xshift=12pt,yshift=0pt] axis cs:PnP from Cab to Counter,70);
    % \legend{First Fix,Second Fix,Third Fix}
  \end{axis}
    \spy [red!50!black, width=2.2cm, height=1.5cm] on (spypoint) in node[fill=none] at (magnifyglass);
  \node at (3.7,2.5) {\textcolor{green!50!black}{\textbf{\scriptsize $\uparrow 4.4\%$}}};
  \node at (4.9,3.1) {\textcolor{green!50!black}{\textbf{\scriptsize $\uparrow 10.4\%$}}};
  \end{tikzpicture}
    \label{fig:foca_vla_100demos_full}
    \end{subfigure}

    \caption{FOCA’s generalizations performance across Pi-Zero and Groot N1.5 on Robocasa, with 30 demos (top) and 100 demos (bottom) on most 5 challenge tasks}
    \label{fig:foca_vla_res_full_appendix}
  \end{figure*}

\section{Implementation Details}
\label{sec:implement}

%\textcolor{red}{coordinator: Minh Duy, Nghiem Diep, Minh Duc, Gia Binh, Tran Nhiem}

\subsection{Architecture designs}
% \textcolor{red}{Provide information about the architecture of $\pi_{0}$ and $Gr00t N1.5$, design choices for parameters we added into the model, batch size, learning rate, and number of training steps for each model. Provide the number of parameters we added to each architecture.}

% We evaluate FOCA on two representative diffusion-based vision--language--action (VLA) backbones: $\pi_{0}$ and $Gr00t N1.5$, which differ in both scale and the way vision--language representations are injected into the action generation module. Both $\pi_{0}$ and $Gr00t N1.5$ employ diffusion-based policies for action generation, but differ in how vision--language representations condition the diffusion Transformer.
% $\pi_{0}$ injects intermediate VLM features into each diffusion block via layer-wise cross-attention, while $Gr00t N1.5$ conditions all layers on the final VLM embedding using a more parameter-efficient backbone.

% FOCA is applied to $\pi_{0}$ and $Gr00t N1.5$ across a diverse set of evaluation settings, including two public simulation benchmarks (\textit{LIBERO} and \textit{RoboCasa}), one private simulated robot, and two private real-world robotic platforms

\begin{figure*}[th]
    \centering
    \includegraphics[width=0.5\linewidth]{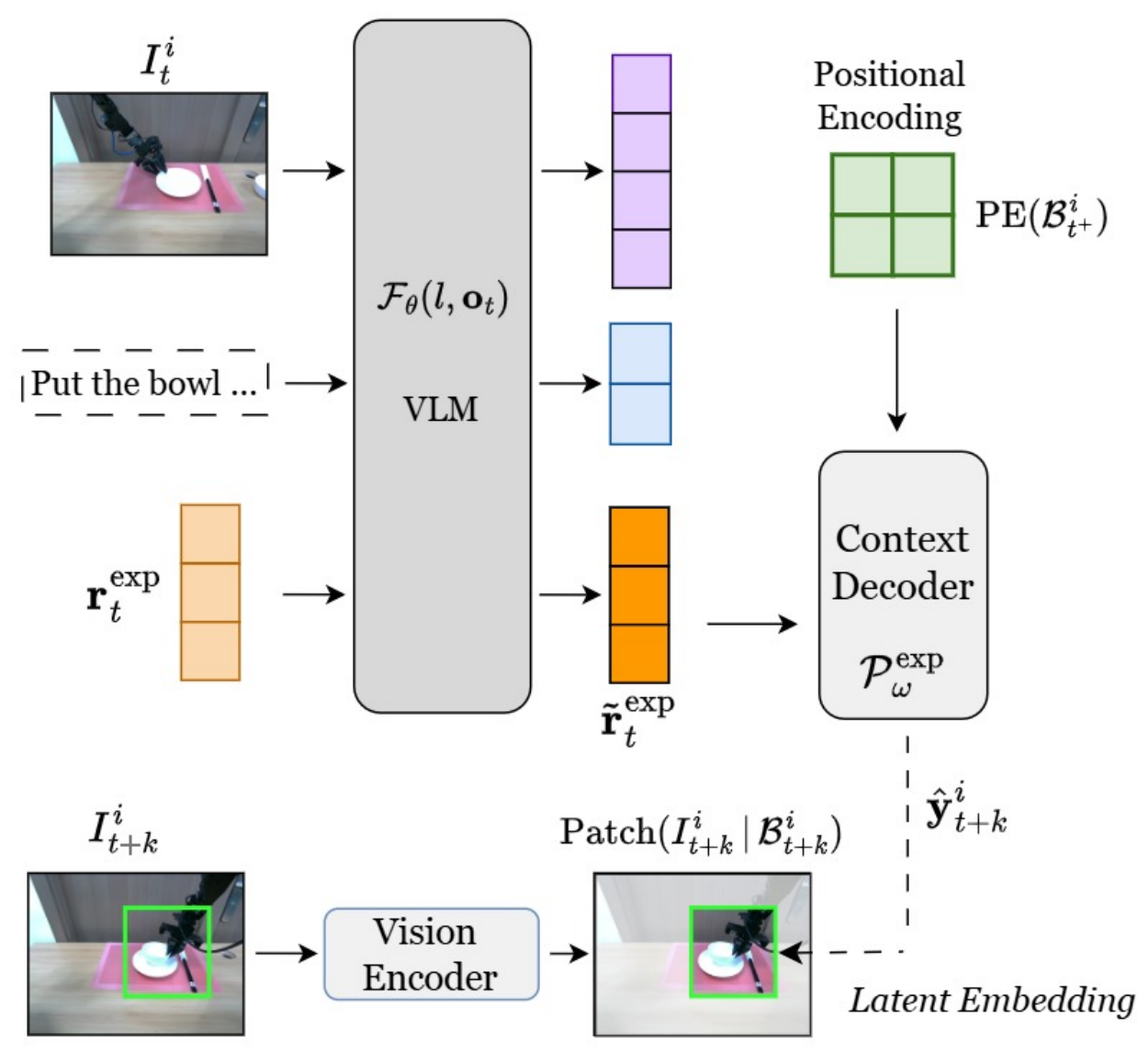}
    \caption{Architecture overview of the explicit alignment setting.}
    \label{fig:Explicit_overview}
\end{figure*}

We evaluate our proposed method, \textbf{FOCA} (explicit module is demonstrated in Figure~\ref{fig:Explicit_overview}), on two public simulation datasets - \textit{LIBERO} and \textit{RoboCasa} - as well as two distinct robotic platforms - bimanual ALOHA ROBOT and simulated HUMANOID PLATFORM, version VR-H3 designed by VinRobotics, using Inspire's 6-DOF  hands. We apply FOCA to $\pi_{0}$ on both public datasets and ALOHA ROBOT tasks. For GR00T-N1.5, FOCA is evaluated on RoboCasa and the simulated HUMANOID PLATFORM. For the $\pi_{0}$ architecture, the batch size is set to 18 for LIBERO and 12 for RoboCasa. The learning rate and maximum number of training steps are fixed to $1\times10^{-4}$ and 100k steps, respectively, for both datasets. For the GR00T-N1.5 architecture on RoboCasa, the batch size, learning rate, and maximum number of training steps are set to 32, $1\times10^{-4}$, and 60k steps, respectively. Note that both the baseline methods and our proposed FOCA use identical hyperparameter settings to ensure a fair comparison. The training process can be early-stopped once the loss curve has sufficiently converged.

For the decoder in the explicit module, we employ two Transformer blocks with 16 attention heads, followed by a linear layer that projects the outputs into the visual \emph{local} feature space. We set the query length to 4 for each observation and use a shared decoder across all observations, with the explicit loss weighted by 0.1. For the implicit module, we use a single linear layer to project the implicit tokens produced by the VLM into the visual \emph{global} feature space. The number of parameters for each model before and after applying FOCA is reported in Table~\ref{tab:params}.

\begin{table}[!hbt]
\centering
\caption{The number of parameters that FOCA integrates into $\pi_{0}$ and Gr00tN1.5.}
\label{tab:params}
\resizebox{0.6\textwidth}{!}{%
\begin{tabular}{lcc}
\toprule
Method            & Total Parameters & Trainable Parameters \\ \midrule
Pi0               & 3.5B             & 3.1B                 \\
Pi0 + FOCA        & 3.6B             & 3.2B                 \\
GR00T-N1.5        & 2.4B             & 1.1B                 \\ 
GR00T-N1.5 + FOCA & 2.5B             & 1.2B                 \\ \bottomrule
\end{tabular}%
}
\end{table}

\subsection{Task-Grounded Object-Centric Representation Generation across Timesteps}
% \textcolor{red}{coordinator: binh}

To construct the the \textit{Object-of-interest regions} used in our experiments, we derive spatial supervision differently for simulation datsets and realworld data.

For the simulation environments include LIBERO and RoboCasa, we extract segmentation masks from simulator. From this ground-truth masks combining with task descriptions, we isolate the pixels corresponding to all objects referenced in the task description and gripper then compute the bounding box that tightly encloses every object of interest as shown in Figure~\ref{fig:robocasa_ooi}. By applying this pipeline across all timesteps, we successfully prepare consistent task-grounded spatial region over entire trajectory.

\begin{figure*}[th]
    \centering
    \includegraphics[width=\linewidth]{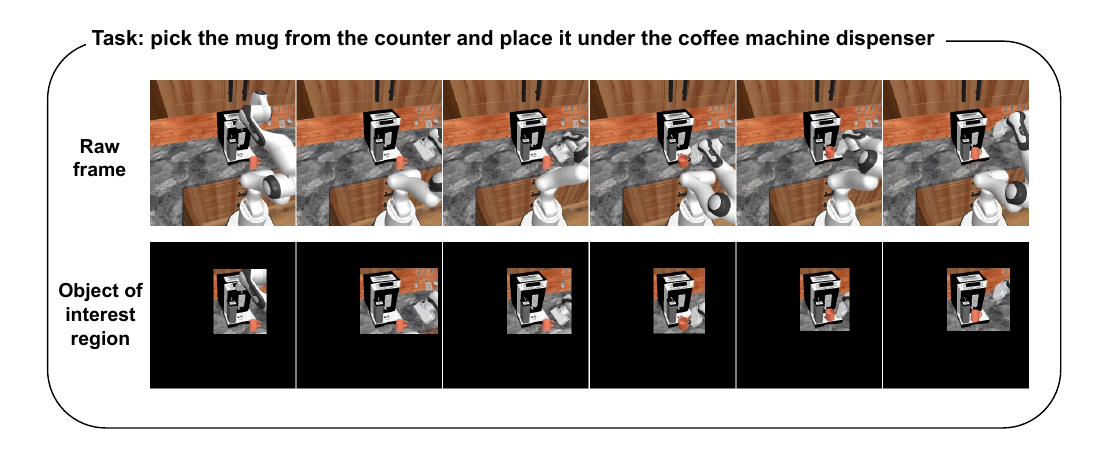}
    \caption{
   \textbf{Region of interest in RoboCasa} \\
} \label{fig:robocasa_ooi}
\end{figure*}

\begin{figure*}[th]
    \centering
    \includegraphics[width=\linewidth]{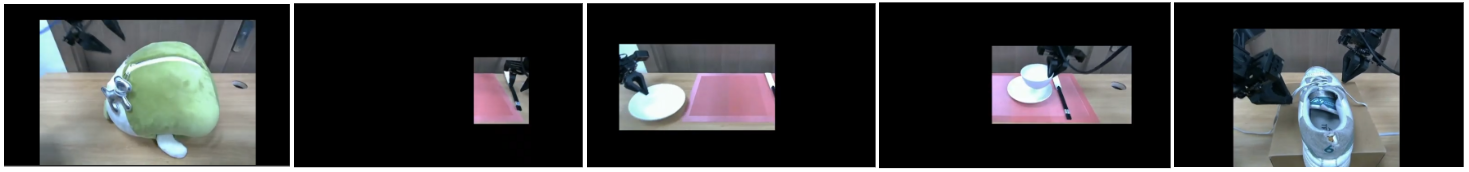}
    \caption{
   \textbf{Region of interest in real-world settings}  
   }
 \label{fig:realworld_ooi}
\end{figure*}

In contrast, for real-world experiments where ground-truth masks are unavailable, we use \textbf{Grounding-DINO} \cite{liu2024grounding} to detect all objects specified by the task instruction. The model provides bounding boxes for each queried object class, and we aggregate these detections to output the final object-of-interest areas used at each time steps. This process allows the same task-grounded methodology used in simulation to transfer effectively to real-world settings as shown in Figure~\ref{fig:realworld_ooi}. Note that these object‑of‑interest regions are only utilized during training to supervise the Explicit Future Latent Prediction and Implicit Future Alignment modules, and are not required during inference.

% \textcolor{blue}{Provide how we can extract object-centric information from simulation environments. How do we do it in real robot (just mention about using Dino-grounding, not manual as current)}

\subsection{Computational Resources}
% \textcolor{red}{GPU configuration used in our experiments. For easy, let's stick with the 4 H100 GPU from FPT Cloud. Please report
% training time for Groot N1.5 (on Robocasa) and Pi-zero (robo-casa) on the same number of iterations as FOCA (estimated from previous logs). Binh finish writing training time using FPT Cloud}

For the $\pi_{0}$ architecture, all experimental settings are trained on four NVIDIA H100 GPUs, while all settings for GR00T-N1.5 are trained on a single NVIDIA H100 GPU. For $\pi_{0}$, the total training time for the baseline and FOCA on LIBERO is around 18 and 21 hours, respectively, and 20 hours and 23 hours for 100k steps on RoboCasa. For GR00T-N1.5 on RoboCasa, the total training time for the baseline and FOCA is 11 hours and 13 hours for 60k steps, respectively. For detail computational overhead when integrating Video World Model DreamGen, see in Table \ref{tab:VWM_set}.

\begin{table}[t]
\centering
\small
\caption{Our computational overhead when integrating VWM. Importantly, our training structure is consistent with prior pipelines that incorporate video world models. FOCA replaces the pseudo-action generation stage with action-free supervision, rather than introducing an extra stage.}
\label{tab:VWM_set}
\begin{tabular}{p{0.42\linewidth} p{0.52\linewidth}}
\toprule
\textbf{Training Stages} & \textbf{Training Hours} \\
\midrule

(i) Finetune DreamGen 
&
$\sim$28 hours with 8 H100 (65k steps). This step has a similar cost to the DreamGen paper.
\\[0.6em]

(ii) Action-free supervision with FOCA (phase 1) 
&
$\sim$15 hours with 4 A100. DreamGen does not include this step, but extracting pseudo-action from generated videos in (i) takes $\sim$8 hours.
\\[0.6em]

(iii) FOCA fine-tuning with explicit and implicit tokens (phase 2) 
&
$\sim$18 hours with 4 A100. This cost is comparable to other baselines.
\\

\bottomrule
\end{tabular}
\end{table}

\subsection{DreamGen Configurations}

In this section, we provide a comprehensive overview of our implementation of the \textbf{DreamGen} \cite{jang2025dreamgen} pipeline. This framework is utilized for co-training with synthetic rollouts, video generation, and pseudo-action labeling.

\subsubsection{The Four-Stage Workflow: From Data Preparation to Inference}
Our workflow is organized into four distinct stages:

\begin{enumerate}[label=\textbf{Step \arabic*:}, leftmargin=1.5cm] 
    \item \textbf{Structured Video Annotation:} Given an original video, we utilize the Qwen 2.5 32B Vision-Language Model (VLM) \cite{Yang2024Qwen25TR} to generate a hierarchical semantic decomposition of the scene. Rather than a simple caption, the model produces a structured analysis divided into distinct categories: Setting, Robotic Agent, Task Objectives, Action Sequence, and Environmental Interaction. This structured output acts as a high-fidelity semantic bridge, explicitly isolating the visual features of the robot's morphology and the temporal dynamics of the manipulation task. The standardized system prompt used to enforce this output structure is provided in Figure \ref{fig:prompt_template}. 

    \item \textbf{World Model Post-Training (Cosmos-Predict 2.0):} We perform domain-specific post-training on the \textit{NVIDIA\_Cosmos-Predict2-2B-Video2World} model \cite{Agarwal2025CosmosWF}. We utilize  LIBERO dataset; detailed statistics regarding our data splits are shown in Table \ref{tab:datatraining}. The model is trained to predict future video frames conditioned on context frames and a text description. The specific training hyperparameters are detailed in Table \ref{tab:hyperparams}.
     
    \item \textbf{Large-Scale Inference (Synthesis):} Utilizing the fine-tuned checkpoints, we generate synthetic videos with a duration of 17 seconds. The generation is conditioned on an initial screenshot frame from the original video and the corresponding text description generated by the VLM in Step 1.
     
    \item \textbf{Pseudo-Action Labeling (IDM):} To extract pseudo-action labels for the generated videos, we train an \textbf{Inverse Dynamics Model (IDM)}. The IDM is conditioned on two image frames (start and end) and is trained to predict the action chunks occurring between them using a flow-matching loss objective, as described in the original DreamGen pipeline \cite{jang2025dreamgen}. Once trained, the IDM processes the synthetic videos from Step 3 to generate predicted actions (pseudo-actions) for every frame transition. 
    
    We adhere to the IDM configuration specified in the DreamGen framework. The model consists of a \textbf{SigLIP-2 Vision Encoder} that processes the start and end frames, and an \textbf{Action Encoder} that embeds the temporal action sequence. These features are processed by a \textbf{Diffusion Transformer} to model the joint distribution. An \textbf{Action Decoder} then recovers the final action sequence. We incorporate a feedback loop $k$ between the Action Decoder and Action Encoder to iteratively refine the consistency of the generated pseudo-actions. 
\end{enumerate}

\subsubsection{Stage 1: VLM Prompt Engineering}
The fidelity of the synthetic video is highly dependent on the quality of the conditioning text. Figure \ref{fig:prompt_template} details the standardized prompt used to extract structured metadata from the original dataset.

% --- Figure: Prompt Template ---
\begin{figure}[htbp]
    \centering
    \begin{tcolorbox}[colframe=black, colback=white, width=0.95\textwidth, title=System Prompt for Structured Video Annotation]
        \textbf{System Instruction:} \\
        You are an expert in robotic manipulation analysis. Watch the video and generate a structured description. Strictly follow this output format:

        \vspace{0.2cm}
        \textbf{1. Setting:}
        \begin{itemize}[noitemsep, topsep=0pt]
            \item Describe the environment type (e.g., kitchen, lab).
            \item List key features (furniture, lighting, surfaces).
        \end{itemize}

        \textbf{2. Robotic Arm:}
        \begin{itemize}[noitemsep, topsep=0pt]
            \item Describe the robot's appearance (color, joints) and design style.
            \item Note its position relative to target objects/containers.
        \end{itemize}

        \textbf{3. Task:}
        \begin{itemize}[noitemsep, topsep=0pt]
            \item Define the specific operation (e.g., reaching, retrieving).
            \item Identify the target object and its location.
        \end{itemize}

        \textbf{4. Action Sequence:}
        \begin{itemize}[noitemsep, topsep=0pt]
            \item Detail the maneuvering of joints and end-effector interaction.
            \item Describe the smoothness and precision of the movement.
        \end{itemize}

        \textbf{5. Environment Interaction:}
        \begin{itemize}[noitemsep, topsep=0pt]
            \item Analyze how the robot navigates constraints (e.g., open doors, confined spaces).
        \end{itemize}

        \textbf{6. Focus on Details:}
        \begin{itemize}[noitemsep, topsep=0pt]
            \item Emphasize fine motor control and specific design elements.
        \end{itemize}

        \textbf{7. Purpose:}
        \begin{itemize}[noitemsep, topsep=0pt]
            \item Summarize the broader capability demonstrated (e.g., home automation).
        \end{itemize}
        
        \vspace{0.2cm}
        \begin{tcolorbox}[colframe=black!75!white, colback=white, sharp corners, boxrule=0.5pt, title=\textbf{Assistant's Generated Output (Example Segment):}]
            \small
            \textbf{1. Setting:} The video is set in a kitchen environment... \\
            \textbf{2. Robotic Arm:} The robotic arm is white with black joints... \\
            \textbf{3. Task:} The arm extends its joints to reach inside the cabinet...
            ...
        \end{tcolorbox}
    \end{tcolorbox}
    \caption{Standardized prompt template for Stage 1: The VLM is instructed to decompose the video into a 7-point structured attribute analysis.}
    \label{fig:prompt_template}
\end{figure}

% \subsection{Implementation Details}

\begin{table}[htbp]
    \centering
    \caption{Libero Training Data Splits for DreamGen Implementation.}
    \label{tab:datatraining}
    \begin{tabular}{lcccc}
        \toprule
        \textbf{Dataset} & \textbf{Length (Frames)} & \textbf{Duration (hr)} & \textbf{FPS} & \textbf{Category} \\
        \midrule
        Libero (10\% Tasks)          & 33,587 & 0.46 & 20 & Simulation \\
        Libero (40\% Tasks)        & 133,851 & 1.86 & 20 & Simulation \\
        Libero (100\% Tasks)       & 269,918 & 3.75 & 20 & Simulation \\
        \bottomrule
    \end{tabular}
\end{table}

\begin{table}[htbp]
    \centering
    \caption{Hyperparameters and Model Configurations for DreamGen Implementation.}
    \label{tab:hyperparams}
    \begin{tabular}{ll}
        \toprule
        \textbf{Parameter} & \textbf{Value} \\
        \midrule
        Base World Model & Cosmos-Predict2-2B-Video2World \\
        Resolution & $1280 \times 720$ (720p) \\
        Post-Training LR & $1 \times 10^{-4}$ \\
        LoRA Config & Rank 4, Alpha 4 \\
        RoboCasa and Libero Training & 100 Epochs, Batch Size 32 \\
        \bottomrule
    \end{tabular}
\end{table}

\section{Simulation Environment and Real-Robot Tasks}
\label{sec:real_robot}

In this section, we describe in detail the tasks and datasets used in our experiments, as well as the procedure for generating object-centric bounding boxes for FOCA training. Note that object-centric boxes are generated and used only during training and are not required at inference time.

% \textcolor{red}{coordinator: Trong-Bao, Quang Tan, Thien Loc, An}

% {\color{red}
% \begin{itemize}
%     \item Providing information for Aloha robot setup. Data collection on three Aloha tasks (See figure 6 in \cite{koo2025hamlet} and relevant paragraph).
%     \item providing information about VR-H3. \textcolor{red}{Chance to advertise VR product}
%     \item Providing information about H3 simulation setup environments.
% \end{itemize}
% }
\subsection{Libero Simulation}
% \textcolor{red}{coordinator: Binh}

\textbf{LIBERO} \cite{liu2023libero} dataset consists of 4 subsets: LIBERO-10, LIBERO-Object, LIBERO-Goal, LIBERO-Spatial designed to evaluate different aspects of generalization. Each subset contains 10 tasks, and each task provides 50 demostrations episodes, totaling 500 episodes per subset and 2000 episodes in original dataset. Each episode includes observations from two camera viewpoints: a fixed static \texttt{agentview\_camera} and an onboard gripper \texttt{eye\_in\_hand\_camera}, as illustrated in Figure \ref{fig:simulation_camera_views}.

To obtain segmentation masks to build Task-grounded Object-centric areas, we re-render the entire dataset using simulator. For each episode, we reset the enviroment to its saved initial state and replay each action from the demotration labels. During re-rendering, we record observations that now include segmentation masks, which are not provided in the original datasets. Moreover, to ensure the constructed dataset maintains high quality, we perform additional filtering to remove no-op actions, since the produce no change in the robot state as well as discard unsuccessful demostrations, which could introduce inconsistent or misleading supervision. After this process, the final dataset reduces from 2000 episodes to 1676 valid demostrations. For experiments with different scale datasets, we construct 10\% and 40\% subsets by random sampling 5 episodes and 20 episodes per tasks for each subset respectively. After training, we evaluate models using 50 trials per task, and the results are summarized in Figure \ref{fig:foca_full_sota}.

% \textcolor{blue}{Provide around 2 short paragraphs about libero datasets. Type of tasks used in our experiments. Info about the number of demonstrations, etc.}

\begin{figure*}[t]
  \centering

  \begin{subfigure}[t]{0.485\linewidth}
    \centering
    \includegraphics[width=\linewidth, height=0.36\linewidth, keepaspectratio]{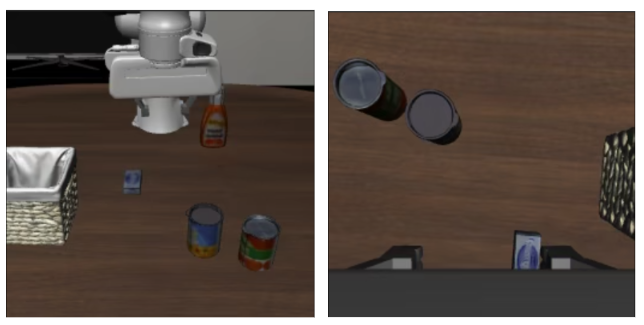}
    \caption{LIBERO camera views}
  \end{subfigure}
  \hfill
  \begin{subfigure}[t]{0.485\linewidth}
    \centering
    \includegraphics[width=\linewidth, height=0.36\linewidth, keepaspectratio]{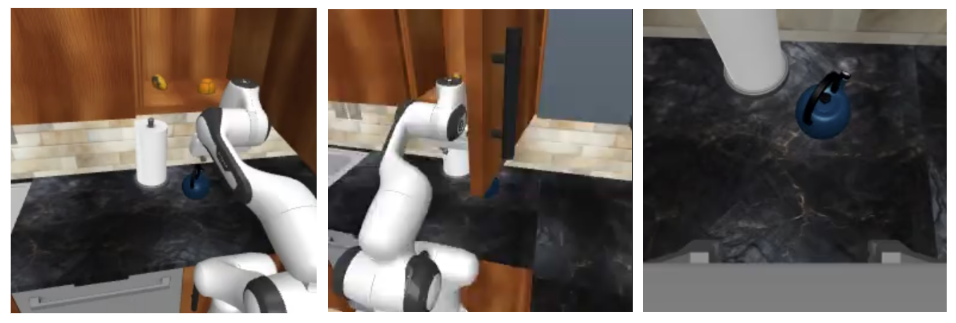}
    \caption{RoboCasa camera views}
  \end{subfigure}

  \caption{\textbf{Camera views in LIBERO and RoboCasa}}
  \label{fig:simulation_camera_views}
\end{figure*}

\subsection{Robocasa Simulation}
\label{subsec:robocasa}
% \textcolor{red}{coordinator: Binh}

\textbf{RoboCasa Kitchen} \cite{nasiriany2024robocasa} is a large-scale simulation benchmark designed to model everyday manipulation tasks with different kitchen environments. It includes 24 tasks, each providing appriximately 3000 demostrations, along with numerous predefined subsets. Each demonstration provides three synchronized visual viewpoints from 3 different cameras include: \texttt{agentview\_left\_camera}, \texttt{agentview\_right\_camera}, \texttt{eye\_in\_hand\_camera} as shown in Figure \ref{fig:simulation_camera_views}. In our experiments, we adopt the 30-demos and 100-demos configuration. Besides, the tasks in RoboCasa can be categorize into different types which are pick-and-place, turn on/off, open/close, and others. From the full set of 24 tasks, we select 5 representative tasks for our experiments include \texttt{PnPCabToCounter}, \texttt{PnPCounterToCab}, \texttt{CoffeeSetupMug}, \texttt{TurnOffStove}, \texttt{TurnOnMicrowave}. These tasks cover a range of obkect interaction and scene configurations, making them suitable for evaluating task-grounded object-centric modeling. 

To contruct the consistent object of interest regions accross all timesteps for selected RoboCasa demostrations, we apply the same re-rendering and segmentation-mask extraction process used for LIBERO mentioned above. We perform 50 evaluation trials for each selected task and report the outcomes in Figure \ref{fig:foca_vla_100demos}.

% \textcolor{blue}{Provide an overview of the Robocasa simulation. Describe 5 tasks we chose in our experiments, which we used two versions of 30 and 100 demo settings.  }

\subsection{ALOHA Tasks}
We evaluate our method on three real-robot manipulation tasks using the Mobile ALOHA platform. All tasks require bimanual manipulation and multi-view visual observations. The experiments are executed on a single workstation equipped with an NVIDIA RTX 5090 GPU. Task definitions, initialization, and the number of demonstrations are described below.

% \subsubsection{Tasks}

\textbf{Open Bag.}
The robot holds the bag with its left arm and grasps the zipper keychain with its right arm. The task is completed successfully when the bag is fully opened. The bag is initialized near the center of the table with a uniformly random planar position offset of 0 to 5 cm and a random in-plane rotation of 0 to 10°. Each demonstration consists of 240 control steps (16 seconds). We collect 100 demonstrations.

\textbf{Tie Shoelaces.}
The robot grasps the two shoelaces with both arms, forms a loop, and tightens the knot by pulling the laces in opposite directions. The task is completed when the knot is tightened. The shoe is initialized near the center of the workspace with a uniformly random planar position offset of 0 to 5 cm and a random in-plane rotation of 0 to 10°. Each demonstration consists of 300 control steps (20 seconds). We collect 300 demonstrations.

\textbf{Set Table.}
The robot places chopsticks onto the table mat using the right arm, places a plate at the center of the mat using the left arm, and places a bowl at the center of the plate using the right arm. The table mat is initialized with a uniformly random horizontal offset of 0 to 5 cm. The chopsticks and bowl are initialized randomly on the left-arm side of the workspace, and the plate is initialized randomly on the right-arm side, subject to reachability constraints. Each demonstration consists of 600 control steps (40 seconds). We collect 300 demonstrations.

\subsection{UR5 Tasks}
We further evaluate our method on an additional robotic embodiment, the UR5. We employ two different variations UR5 arms, one UR5e equipped with a Robotiq Hand-E gripper, and one UR5 is equipped with a OnRobot-RG2 gripper. Two camera views are provided: one mounted on the table and the other on the robot wrist. On this real-robot platform, we design three handcrafted tabletop tasks, as illustrated in Figure~\ref{fig:ur5_exp_dtu} and Figure~\ref{fig:ur5_demos}.

\textbf{Dispense Napkin}: The robot dispenses a napkin from a container. The napkin box is uniformly distributed between these environments, where the napkin box is moved around within an area of 50x30 cm and rotated between 0-90 degrees. The task is considered successful if the napkin is fully dispensed.  The trajectories for this task are on average about 95 frames long, we evaluate each trial with a maximum limit of 60 seconds: if the task has not been completed by then, the episode is counted as a failure.

\textbf{Handle Bulk Material}: The robot picks up a bowl containing bulk material and pours the material into a designated target container.

\textbf{Short Chemistry Test Tube}: The robot picks up the chemistry test tube with the prompted content color and place into a designated target. 

For each task, we collect 100 demonstrations for training (correspond to 100\% training data regimes). The experiments are executed on a single workstation equipped with an NVIDIA RTX 5090 GPU. 
\begin{figure}[t]
    \centering
    \includegraphics[width=.6\linewidth]{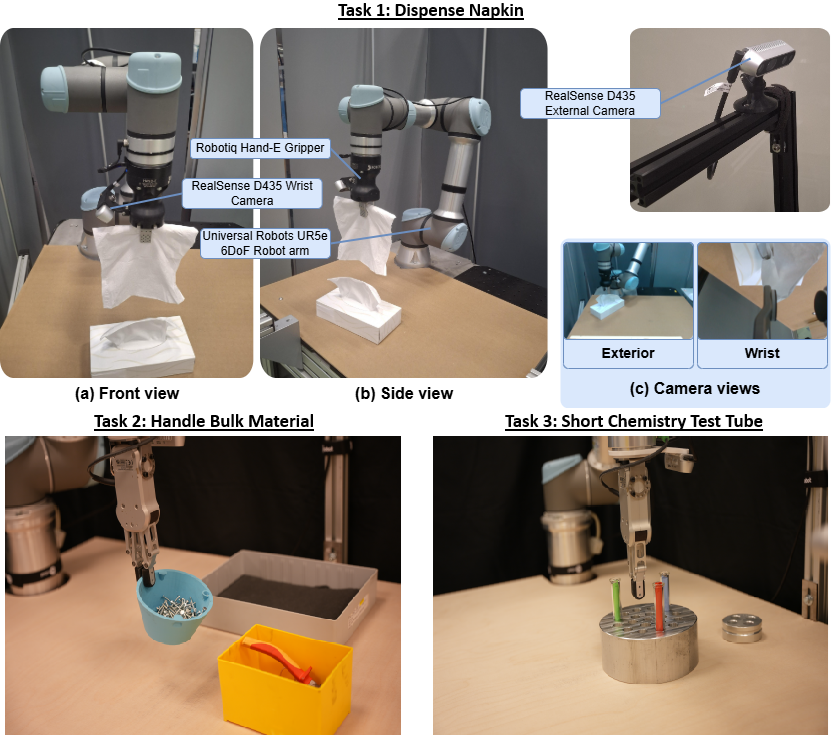}
    \caption{The three handcrafted tabletop tasks with UR5.}
    \label{fig:ur5_exp_dtu}
\end{figure}

\begin{figure}[t]
    \centering
    \includegraphics[width=1.0\linewidth]{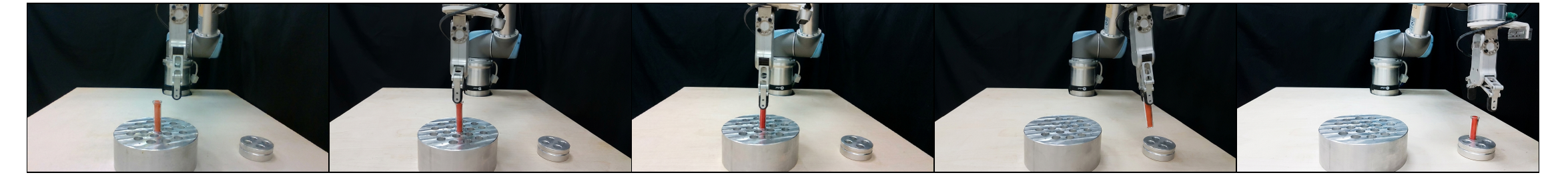}
    \caption{Example of FOCA rollout episode from UR5.}
    \label{fig:ur5_demos}
\end{figure}

\textbf{Results}
\begin{figure}[h]
 \centering
      \begin{tikzpicture}
          \begin{axis}[
            ybar, axis on top,
            height=5cm, width=.5\linewidth,
            bar width=.2cm,
            ymajorgrids, tick align=inside,
            major grid style={draw=none},
            enlarge y limits={value=.1,upper},
            ymin=0, ymax=100,
            axis x line*=bottom,
            axis y line*=left,
            y axis line style={opacity=100},
            tickwidth=0pt,
            enlarge x limits=0.15,
            legend style={
                at={(0.5,0.85)},
                anchor=south,
                align=left,
                font=\scriptsize,
                legend columns=2,
                /tikz/every even column/.append style={column sep=1pt}
            },
            ylabel={},
            symbolic x coords={Dispense napkin (UR5e), Handle bulk material (UR5), Short chemistry test tube (UR5)},
            xtick=data,
            xticklabel style={
                align=center,      % center-align the text
                text width=1.5cm,    % wrap text in a width of 2cm
                font=\scriptsize,  % adjust font size
            },       
            nodes near coords={
            \tiny
            \pgfmathprintnumber[precision=0]{\pgfplotspointmeta}
           },
        ]
        \addplot [draw=none, fill=pi0blue!50] coordinates {
          (Dispense napkin (UR5e), 10)
          (Handle bulk material (UR5), 0)
          (Short chemistry test tube (UR5),5)
          };
        \addplot [draw=none, fill=vinred!50] coordinates {
          (Dispense napkin (UR5e), 25)
          (Handle bulk material (UR5), 30)
          (Short chemistry test tube (UR5), 30)
          };
        \addplot [draw=none, fill=pi0blue] coordinates {
          (Dispense napkin (UR5e), 20)
          (Handle bulk material (UR5), 40)
          (Short chemistry test tube (UR5), 15)
          };
        \addplot [draw=none, fill=vinred] coordinates {
          (Dispense napkin (UR5e), 45)
          (Handle bulk material (UR5), 50)
          (Short chemistry test tube (UR5), 25)
          };
        \legend{Base-VLA(40\%),FOCA(40\%),Base-VLA(100\%),FOCA(100\%)}
      \end{axis}
    \end{tikzpicture}
    \caption{Real-world results of the three manipulation tasks on UR5}
    \label{fig:real_ur5_result}
\end{figure}
Figure~\ref{fig:real_ur5_result} highlights the effectiveness of FOCA in real-world robotic manipulation tasks on the UR5 platform. Across all three evaluated tasks, FOCA consistently outperforms the baseline VLA model under both partial-data (40\%) and full-data (100\%) training settings. The improvement is especially noticeable in the more challenging manipulation scenarios:
\begin{itemize}
    \item Handle bulk material (UR5): FOCA achieves the largest gain, improving success rates from 30\% $\rightarrow$ 50\% under full-data training, while the baseline completely fails under the 40\% setting (0\%). This suggests that FOCA learns more robust action representations and generalizes better to complex contact-rich interactions.
    \item Dispense napkin (UR5e): FOCA improves performance from 20\% → 45\% using 100\% data and from 10\% $\rightarrow$ 25\% with only 40\% data. The consistent gap indicates stronger policy reliability even when training data is limited.
    \item Short chemistry test tube (UR5): FOCA again surpasses the baseline in both settings, reaching 25\% success compared to 15\% for Base-VLA under full-data training. Although the absolute scores are lower due to the precision required in this task, FOCA still demonstrates better fine-grained manipulation capability.
\end{itemize}
All in all, these results show two key strengths of FOCA: (1) higher task success rates across diverse real-world manipulation tasks; and (2) better data efficiency, since FOCA maintains clear advantages even when trained with only 40\% of the dataset. This indicates that FOCA improves both the robustness and generalization ability of the manipulation policy, particularly in difficult real-world settings where perception noise, contact dynamics, and limited training data are major challenges.

\subsection{Place Part}

\begin{figure*}[t]
  \centering
  \begin{minipage}[t]{0.49\linewidth}
    \centering
    \includegraphics[height=0.356\textwidth]{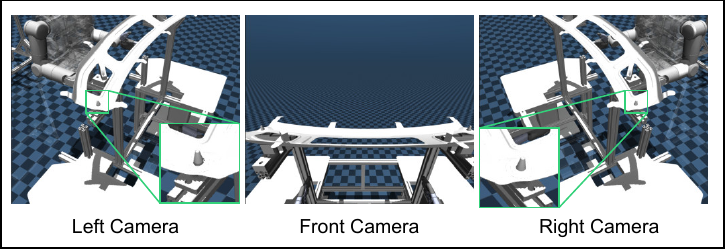}
    \caption*{(a) Camera views}
  \end{minipage}
  \hfill
  \begin{minipage}[t]{0.49\linewidth}
    \centering
    \includegraphics[height=0.356\textwidth]{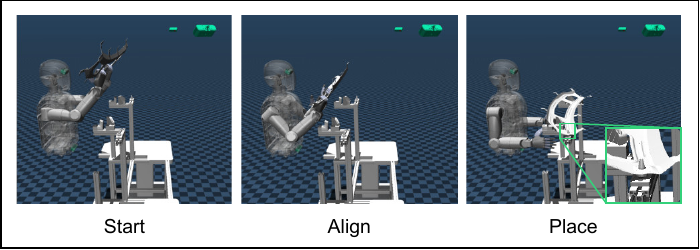}
    \caption*{(b) Task execution stages}
  \end{minipage}

  \vspace{1em}

  % ---------- Bottom row ----------
  \begin{minipage}[t]{0.49\linewidth}
    \centering
    \includegraphics[height=0.356\textwidth]{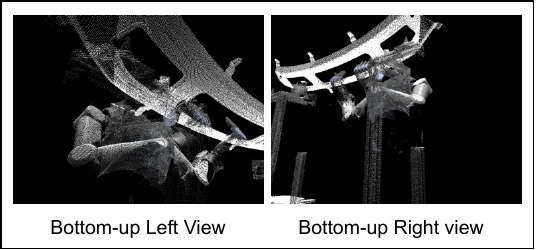}
    \caption*{(c) Virtual views}
  \end{minipage}

  \caption{
  \textbf{Place Part in MuJoCo:
  camera views, key execution stages, and virtual views}.
(a) Left, front, and right camera views.
(b) Task stages: Start, Align, and Place.
(c) Bottom-up virtual views.
\textcolor{green}{Green annotations} illustrate the strict spatial alignment constraints of the task, requiring the two holes on either side of the part to be precisely aligned with and seated onto the two small pins on either side of the jig.}
  \label{fig:vrh3_camera}
\end{figure*}

We implement this task in MuJoCo~\citep{todorov2012mujoco} using a humanoid robot. % \textbf{Multi-view camera setup.}
Figure~\ref{fig:vrh3_camera}a illustrates our multi-view camera setup, providing observations to the policy. This task evaluates the robot’s ability to perceive, align, and insert parts under tight constraints.
We record synchronized observations from three RGB cameras that provide complementary viewpoints, using a teleoperation method in simulation.

% \textbf{Dataset statistics.}
% After data collection, the dataset contains 494 water-bottle instances and 476 Coca-Cola-bottle instances.

% \textbf{Observations and preprocessing.}
% RGB images from \texttt{cam\_left} and \texttt{cam\_right} are recorded at $640 \times 480$ resolution. Images from \texttt{cam\_front} are originally recorded at $640 \times 360$ and are resized to $640 \times 480$ during preprocessing to ensure consistent spatial dimensions across all views.
Robot proprioception is represented by a 26-dimensional joint-state vector, comprising:
\begin{small}
\begin{verbatim}
1  left_shoulder_pitch_joint      14 right_wrist_pitch_joint
2  left_shoulder_roll_joint       15 L_thumb_swing_joint
3  left_shoulder_yaw_joint        16 L_thumb_1_joint
4  left_elbow_pitch_joint         17 L_index_1_joint
5  left_wrist_yaw_joint           18 L_middle_1_joint
6  left_wrist_roll_joint          19 L_ring_1_joint
7  left_wrist_pitch_joint         20 L_pinky_1_joint
8  right_shoulder_pitch_joint     21 R_thumb_swing_joint
9  right_shoulder_roll_joint      22 R_thumb_1_joint
10 right_shoulder_yaw_joint       23 R_index_1_joint
11 right_elbow_pitch_joint        24 R_middle_1_joint
12 right_wrist_yaw_joint          25 R_ring_1_joint
13 right_wrist_roll_joint         26 R_pinky_1_joint
\end{verbatim}
\end{small}

Figure~\ref{fig:vrh3_camera}b illustrates the key stages of a successful sequence. At first, the robot is initialized in a raised, overhead grasping pose while holding the target part. The model is then started to control the robot to adjust the part’s position and orientation relative to the jig. Successful completion is achieved when the part is placed and stably seated in the jig, with the two holes on either side of the part precisely aligned with and seated onto the two small pins on either side of the jig.

Additionally, we use a render in RVT-2~\citep{goyal2024rvt} to generate two virtual views (Figure~\ref{fig:vrh3_camera}c) in order to reduce occlusions and provide more information. Specifically, we reconstruct point clouds from RGB-D images and the camera intrinsics, and transform them into a shared world coordinate frame using the camera extrinsics. Virtual RGB-D images are then rendered by projecting these points onto two 2D image planes.

\section{Theoretical Investigation}
\label{app:proofs}
\subsection{Problem setting and notation}
\label{subsec:setting}

We model the environment as a discounted MDP
$\mathcal{M}=(\mathcal{S},\mathcal{A},P,\gamma)$ with $\gamma\in(0,1)$.
A \emph{state} $s\in\mathcal{S}$ represents the VLA input at a time step (e.g., image observation,
robot state, and instruction), and $a\in\mathcal{A}$ is an action.
We are given an offline dataset of expert trajectories
$\mathcal{D}=\{\tau\}$, where $\tau=(s_0,a_0,s_1,a_1,\dots,s_T)$ is generated by a behavior policy
$\mu(a\mid s)$.
We write expectations over $\mathcal{D}$ as $\E_{\tau\sim\mathcal{D}}[\cdot]$.

\paragraph{Mapping to main-text notation.}
In the main paper, the context for implicit future alignment is the projected implicit token $x_t := z_t$ (Eq.~\eqref{eq:context_z}),
which is a deterministic function of the VLA input $s_t$.
The goal variable is the future interaction embedding $g_{t+k} := y_{t+k} = h(s_{t+k})$ (Eq.~\eqref{eq:context_y}).
Negatives are drawn from a proposal $p_{\mathrm{neg}}$, instantiated in FOCA by the task-mismatched pool
$\mathcal{N}_{\neg\mathrm{task}}$ (Eq.~\eqref{eq:mismatch_pool}).
For readability, the analysis below is written in terms of $s_t$ (or $(s_t,a_t)$); the same results apply when conditioning on
$x_t$ by treating the representation as part of the observation.
We use $\mathcal{L}_{\mathrm{fm}}$ to denote the action-supervised loss (flow matching for diffusion policies) and
$\mathcal{L}_{\mathrm{imp}}$ for the implicit InfoNCE loss.

\paragraph{Measure-theoretic note.}
For clarity, proofs are written for the \emph{discrete} case where probabilities are well-defined.
For continuous/high-dimensional observations, interpret $\rho^\pi(\cdot\mid x)$ and $p_{\mathrm{neg}}(\cdot)$
as densities w.r.t.\ a common base measure (or as measures with Radon--Nikodym derivatives).
In particular, statements involving $\log \rho^\pi(g\mid x)-\log p_{\mathrm{neg}}(g)$ require
absolute continuity of $\rho^\pi(\cdot\mid x)$ w.r.t.\ $p_{\mathrm{neg}}$ on the support of interest.

\paragraph{Object-centric structure.}
Let $\xi(s)$ be an object extractor returning $m$ object regions (e.g., masks/boxes):
$\xi(s)=\{r^{(i)}(s)\}_{i=1}^m$.
We consider an encoder producing a global embedding and object embeddings:
\begin{equation}
    \psi(s) = \big(\psi^{\mathrm{g}}(s), \psi^{(1)}(s), \dots, \psi^{(m)}(s)\big)\in\R^{d\times(1+m)}.
\end{equation}
We do not assume $\xi$ is perfect; its errors will enter as a misspecification term.

\paragraph{Goal space.}
Let $\mathcal{G}$ denote a goal space and let $h:\mathcal{S}\to\mathcal{G}$ be a measurable \emph{goal map}.
We write $g_t := h(s_t)$.
In our setting, $h$ may be the identity map $\mathrm{id}$ (full-state goals) or an object-centric extractor
(e.g., a cropped region/mask, or a learned token derived from $r^{(i)}(s_t)$).
When $h=\mathrm{id}$ we have $\mathcal{G}=\mathcal{S}$ and we may write $g_t=s_t$ for convenience.

\subsection{Discounted future occupancy and Bellman recursions}
\label{subsec:occupancy}

We define the \emph{discounted future-occupancy (reachability)} distribution under policy $\pi$ \citep{dayan1993sr,schaul2015uvfa}:
\begin{definition}[Discounted future occupancy]
\label{def:occupancy}
Fix a policy $\pi$ and write $g_t:=h(s_t)\in\mathcal{G}$.
For a state $s$, define the \emph{state-conditional} discounted future-occupancy (reachability) distribution:
\begin{equation}
\rho^\pi(g \mid s)
~:=~
(1-\gamma)\sum_{k=1}^\infty \gamma^{k-1}
\Pr_\pi(g_{t+k}=g \mid s_t=s),
\end{equation}
and, if we additionally condition on the initial action, define the \emph{action-conditional} occupancy:
\begin{equation}
\rho^\pi(g \mid s,a)
~:=~
(1-\gamma)\sum_{k=1}^\infty \gamma^{k-1}
\Pr_\pi(g_{t+k}=g \mid s_t=s, a_t=a).
\end{equation}
\end{definition}

\begin{proposition}[Occupancy Bellman equations \citep{dayan1993sr,schaul2015uvfa}]
\label{prop:bellman}
Fix a policy $\pi$ and a goal $g\in\mathcal{G}$.
Define the state value $V_g^\pi(s):=\rho^\pi(g\mid s)$ and the action value $Q_g^\pi(s,a):=\rho^\pi(g\mid s,a)$.
Let $P^\pi(s'\mid s):=\E_{a\sim\pi(\cdot\mid s)}[P(s'\mid s,a)]$ denote the policy-induced state transition kernel.
Then the state-conditional occupancy satisfies
\begin{align} \label{eq:bellman-v}
V_g^\pi(s)
&=
(1-\gamma)\Pr_\pi(g_{t+1}=g \mid s_t=s)
\;+ \nonumber\\
&\gamma~\E_{s'\sim P^\pi(\cdot\mid s)}\!\left[V_g^\pi(s')\right],
\end{align}
and the action-conditional occupancy satisfies
\begin{align} \label{eq:bellman}
Q_g^\pi(s,a)
&=
(1-\gamma)\Pr(g_{t+1}=g \mid s_t=s,a_t=a)
\;+ \nonumber\\
&\gamma~\E_{s'\sim P(\cdot\mid s,a),\,a'\sim\pi(\cdot\mid s')}\!\left[Q_g^\pi(s',a')\right].
\end{align}
Equivalently, $V_g^\pi$ and $Q_g^\pi$ are the state-/action-value functions for the sparse goal-reaching reward
$r_g(s,a,s')=(1-\gamma)\mathbf{1}\{h(s')=g\}$.
\end{proposition}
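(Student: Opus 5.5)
The plan is a first-step decomposition of the geometric series defining $\rho^\pi$, followed by the Markov property to recognize the tail as the occupancy started one step later. I treat the discrete case, as the measure-theoretic note allows. Finiteness is no issue: each term is a probability bounded by $1$, and $(1-\gamma)\sum_k\gamma^{k-1}=1$, so the series converges absolutely and can be split and re-indexed freely.

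\textbf{State-conditional equation.} Starting from Definition~\ref{def:occupancy}, I isolate the $k=1$ term and get $(1-\gamma)\Pr_\pi(g_{t+1}=g\mid s_t=s)$. I then re-index the remainder with $k=j+1$, $j\ge 1$, which gives
\[
\gamma\,(1-\gamma)\sum_{j\ge1}\gamma^{j-1}\Pr_\pi(g_{t+1+j}=g\mid s_t=s).
\]
Next I condition each summand on $s_{t+1}=s'$ using the law of total probability:
\[
\Pr_\pi(g_{t+1+j}=g\mid s_t=s)=\sum_{s'}P^\pi(s'\mid s)\,\Pr_\pi(g_{t+1+j}=g\mid s_{t+1}=s',s_t=s).
\]
The key point is that under a stationary Markov policy the trajectory after $t+1$ depends only on $s_{t+1}$. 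So the inner probability equals $\Pr_\pi(g_{t'+j}=g\mid s_{t'}=s')$ with $t'=t+1$, and by time-homogeneity it does not depend on $t'$. Swapping the sum over $j$ with the expectation over $s'$ (justified by Tonelli, since all terms are nonnegative) turns the tail into $\gamma\,\E_{s'\sim P^\pi(\cdot\mid s)}[V^\pi_g(s')]$. This is exactly \eqref{eq:bellman-v}.

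\textbf{Action-conditional equation.} I use the same decomposition, but now condition on $(s_t,a_t)=(s,a)$. The first term is $(1-\gamma)\Pr(g_{t+1}=g\mid s,a)$; here the probability depends only on $P(\cdot\mid s,a)$ and $h$, which is why the statement writes it without a subscript $\pi$. In the tail I condition on the pair $(s_{t+1},a_{t+1})=(s',a')$, whose joint law given $(s,a)$ is $P(s'\mid s,a)\,\pi(a'\mid s')$. The Markov property then gives
\[
\Pr(g_{t+1+j}=g\mid s,a,s',a')=\Pr_\pi(g_{t'+j}=g\mid s_{t'}=s',a_{t'}=a').
\]
Summing over $j$ reconstructs $Q^\pi_g(s',a')$, which yields \eqref{eq:bellman}.

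\textbf{Reward interpretation.} Set $r_g(s,a,s')=(1-\gamma)\mathbf 1\{h(s')=g\}$. Then
\[
\E_\pi\Big[\textstyle\sum_{k\ge0}\gamma^k r_g(s_{t+k},a_{t+k},s_{t+k+1})\,\Big|\,s_t=s\Big]=(1-\gamma)\sum_{k\ge0}\gamma^k\Pr_\pi(g_{t+k+1}=g\mid s_t=s).
\]
Substituting $k\mapsto k-1$ shows this is exactly $\rho^\pi(g\mid s)$, and the analogous computation holds with $(s,a)$. So $V^\pi_g$ and $Q^\pi_g$ are the standard value functions for this reward. The equations just derived are their usual Bellman equations, and since the Bellman operator is a $\gamma$-contraction in sup norm, each is the unique bounded solution.

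\textbf{Main obstacle.} The only delicate step is the Markov/time-homogeneity step. The dataset distribution must be the law of a stationary Markov chain generated by $\pi$, which needs a stationary $\pi$ and an infinite or absorbing horizon. With finite episodes of length $T$, I would make the horizon effectively infinite by treating termination as an absorbing state whose $h$-value differs from $g$. For continuous spaces, the sums become integrals against $P$, with $\Pr(g_{t+k}=g)$ replaced by densities.
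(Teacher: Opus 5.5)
Your proposal is correct and follows essentially the same argument as the paper: split off the $k=1$ term, condition on the next state or state--action pair via the law of total probability and the Markov property, and re-index with $j=k-1$ to recognize the occupancy started one step later. The differences are minor: the paper proves the $Q_g^\pi$ recursion first and gets the $V_g^\pi$ recursion by averaging over $a\sim\pi(\cdot\mid s)$ using $V_g^\pi(s)=\E_{a\sim\pi(\cdot\mid s)}[Q_g^\pi(s,a)]$, whereas you derive both directly, and you also verify the reward interpretation (which the paper's proof only asserts) and note uniqueness via the contraction property.
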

\begin{proof}
We first prove the action-conditional recursion~\eqref{eq:bellman} for $Q_g^\pi(s,a):=\rho^\pi(g\mid s,a)$.

By Definition~\ref{def:occupancy},
\[
Q_g^\pi(s,a)=\rho^\pi(g\mid s,a)
=(1-\gamma)\sum_{k=1}^\infty \gamma^{k-1}\Pr_\pi(g_{t+k}=g \mid s_t=s,a_t=a).
\]
Separate the $k=1$ term:
\[
Q_g^\pi(s,a)
=(1-\gamma)\Pr_\pi(g_{t+1}=g\mid s_t=s,a_t=a)
+(1-\gamma)\sum_{k=2}^\infty \gamma^{k-1}\Pr_\pi(g_{t+k}=g\mid s_t=s,a_t=a).
\]
For $k\ge 2$, apply the law of total expectation over $(s_{t+1},a_{t+1})$:
\[
\Pr_\pi(g_{t+k}=g\mid s_t=s,a_t=a)
=
\E_{s'\sim P(\cdot\mid s,a),\,a'\sim\pi(\cdot\mid s')}
\Big[\Pr_\pi(g_{t+k}=g \mid s_{t+1}=s',a_{t+1}=a')\Big].
\]
Let $j=k-1\ge 1$. Then $g_{t+k}=g_{t+1+j}$. Hence
\begin{align*}
(1-\gamma)\sum_{k=2}^\infty \gamma^{k-1}\Pr_\pi(g_{t+k}=g\mid s_t=s,a_t=a)
&=
(1-\gamma)\sum_{j=1}^\infty \gamma^{j}\,
\E_{s',a'}\Big[\Pr_\pi(g_{t+1+j}=g\mid s_{t+1}=s',a_{t+1}=a')\Big]\\
&=
\gamma\,\E_{s',a'}\Big[(1-\gamma)\sum_{j=1}^\infty \gamma^{j-1}
\Pr_\pi(g_{t+1+j}=g\mid s_{t+1}=s',a_{t+1}=a')\Big]\\
&=
\gamma\,\E_{s',a'}\big[\rho^\pi(g\mid s',a')\big]
=\gamma\,\E_{s',a'}\big[Q_g^\pi(s',a')\big],
\end{align*}
where $s'\sim P(\cdot\mid s,a)$ and $a'\sim\pi(\cdot\mid s')$.
Combining terms yields Eq.~\eqref{eq:bellman}.

For the state-conditional recursion~\eqref{eq:bellman-v}, note that by the law of total probability,
$V_g^\pi(s)=\rho^\pi(g\mid s)=\E_{a\sim\pi(\cdot\mid s)}[\rho^\pi(g\mid s,a)]=\E_{a\sim\pi(\cdot\mid s)}[Q_g^\pi(s,a)]$.
Taking expectation of~\eqref{eq:bellman} over $a\sim\pi(\cdot\mid s)$ and using
$P^\pi(s'\mid s)=\E_{a\sim\pi(\cdot\mid s)}[P(s'\mid s,a)]$ yields Eq.~\eqref{eq:bellman-v}.
\end{proof}

\begin{remark}[What is actually learned?]
\label{rem:q-vs-v}
FOCA's implicit future alignment operates on the model's \emph{representations} of the current observation/instruction,
and does \emph{not} condition on action outputs. Therefore, the implicit objective targets the
state-conditional occupancy/value $V_g^\mu(s):=\rho^\mu(g\mid s)$ under the demonstration distribution.
If one augments the context to include an action embedding (an optional extension), the same analysis would instead target
the action-conditional occupancy $Q_g^\mu(s,a):=\rho^\mu(g\mid s,a)$.
Both satisfy Bellman-style recursions (Proposition~\ref{prop:bellman}), but only the latter has a direct $Q$-function interpretation.
\end{remark}

\subsection{Implicit alignment as density-ratio estimation (InfoNCE $\Rightarrow$ occupancy/value)}
\label{subsec:implicit}

We formalize the implicit module as learning a score $f_\theta$ between a \emph{context}
(current state or state-action) and a \emph{goal} (future state), trained with negatives from a
proposal distribution $p_{\mathrm{neg}}$.

\paragraph{Positive sampling scheme.}
A critical detail is how we sample future positives. Let $k\sim \mathrm{Geom}(1-\gamma)$ over
$\{1,2,\dots\}$, i.e., $\Pr(k)=(1-\gamma)\gamma^{k-1}$, and let $g=g_{t+k}=h(s_{t+k})$.
If we take the context to be the current state $x=s_t$ (as in FOCA's implicit alignment), then the induced
conditional distribution of $g$ is exactly $\rho^\mu(\cdot\mid s_t)$.
(For completeness, if the context includes the initial action $x=(s_t,a_t)$, the same sampling yields $\rho^\mu(\cdot\mid s_t,a_t)$.)

\begin{lemma}[Geometric sampling matches discounted occupancy]
\label{lem:geometric}
If $k\sim\mathrm{Geom}(1-\gamma)$ and $g=g_{t+k}$, then
$\Pr(g=\tilde g \mid x_t)=\rho^\mu(\tilde g\mid x_t)$ for all $\tilde g\in\mathcal{G}$.
\end{lemma}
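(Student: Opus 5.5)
The plan is to compute $\Pr(g=\tilde g\mid x_t)$ directly by conditioning on the sampled offset $k$ and applying the law of total probability. The one structural fact needed is that $k$ is drawn independently of the trajectory, and hence of both the context $x_t$ and the future goals $\{g_{t+j}\}_{j\ge 1}$. With this, conditioning on $x_t$ leaves the law of $k$ unchanged.

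The steps, in order, are as follows.

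First, I will write
\[
\Pr(g=\tilde g\mid x_t)=\sum_{k=1}^\infty \Pr(k\mid x_t)\,\Pr(g_{t+k}=\tilde g\mid x_t,k).
\]

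Second, I will use independence to replace $\Pr(k\mid x_t)$ by $(1-\gamma)\gamma^{k-1}$. The same independence replaces $\Pr(g_{t+k}=\tilde g\mid x_t,k)$ by $\Pr(g_{t+k}=\tilde g\mid x_t)$, where the probability is taken over the demonstration distribution and the goal map $h$.

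Third, the resulting series is term-by-term identical to the definition of $\rho^\mu(\tilde g\mid x_t)$ in Eq.~\eqref{eq:occupancy-main} (equivalently, Definition~\ref{def:occupancy} with $\pi=\mu$). The interchange of sum and probability is justified because all terms are nonnegative, and $\sum_k(1-\gamma)\gamma^{k-1}=1$ guarantees the mixture is a proper distribution. The identical argument with context $(s_t,a_t)$ yields the action-conditional occupancy.

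The only real obstacle is bookkeeping rather than mathematics, and it comes from finite episodes. With $T<\infty$, the offset $t+k$ can exceed $T$, so $g_{t+k}$ is undefined. I will handle this by one of two conventions. The first is an absorbing-terminal convention, padding with $s_T$ so that $g_{t+k}=h(s_T)$ for $t+k>T$; the identity then holds exactly. The second applies to the truncated geometric used in practice. There, the sampled distribution equals the occupancy with geometric weights renormalized over $\{1,\dots,T-t\}$. It differs from $\rho^\mu$ by at most $\gamma^{T-t}$ in total variation, which I will remark on rather than prove in detail.

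In the continuous case, the same computation goes through with densities with respect to a common base measure, by Tonelli's theorem.
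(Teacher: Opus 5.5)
Your proposal is correct and matches the paper's proof. Like you, the paper conditions on the offset $k$, uses $\Pr(k)=(1-\gamma)\gamma^{k-1}$ (relying implicitly on the independence of $k$ from the trajectory, which you make explicit), and recognizes the resulting series as Definition~\ref{def:occupancy} with $\pi=\mu$; it merely writes this for the context $(s_t,a_t)$ rather than a generic $x_t$, and your finite-episode remarks (absorbing padding, or the $\gamma^{T-t}$ total-variation gap under truncated sampling) address a point the paper silently ignores by treating trajectories as infinite.
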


\begin{proof}
Fix $(s_t,a_t)=(s,a)$.
Let $k\sim\mathrm{Geom}(1-\gamma)$ on $\{1,2,\dots\}$ so that $\Pr(k)=(1-\gamma)\gamma^{k-1}$,
and let $g=g_{t+k}$.
Then for any $\tilde g\in\mathcal{G}$,
\begin{align*}
\Pr(g=\tilde g\mid s,a)
&=\sum_{k=1}^\infty \Pr(k)\Pr_\mu(g_{t+k}=\tilde g\mid s_t=s,a_t=a)\\
&=(1-\gamma)\sum_{k=1}^\infty \gamma^{k-1}\Pr_\mu(g_{t+k}=\tilde g\mid s_t=s,a_t=a)
=\rho^\mu(\tilde g\mid s,a),
\end{align*}
where the last equality is Definition~\ref{def:occupancy} with $\pi=\mu$.
\end{proof}

Since FOCA uses state-only context $x=s_t$, the same sampling scheme yields
$p^+(g\mid s)=\rho^\mu(g\mid s)$. (If one conditions on $(s,a)$ instead, Lemma~\ref{lem:geometric} gives $p^+(g\mid s,a)=\rho^\mu(g\mid s,a)$.)

\paragraph{Contrastive objective.}
Let $x$ denote the context (either $(s,a)$ or $s$). Let $f_\theta(x,g)\in\R$ be a score; in practice
this is implemented by dot-product similarity between learned embeddings.
We sample a positive $g^+\sim p^+(g\mid x)$ and $N$ negatives $g^-_1,\dots,g^-_N \;\iid\; p_{\mathrm{neg}}$ \citep{eysenbach2022contrastive,zheng2023stabilizing}.
We optimize the InfoNCE loss \citep{gutmann2012nce, oord2018cpc}:
\small{\begin{equation}
\label{eq:infonce}
\mathcal{L}_{\mathrm{imp}}(\theta)
=
-\E\bigg[
\log \frac{\exp(f_\theta(x,g^+))}
{\exp(f_\theta(x,g^+)) + \sum_{j=1}^N \exp(f_\theta(x,g^-_j))}
\bigg].
\end{equation}}

\begin{lemma}[Bayes-optimal logit is a density ratio]
\label{lem:bayes}
Consider binary classification between samples from $p^+(g\mid x)$ (positive) and
$p_{\mathrm{neg}}(g)$ (negative). The Bayes-optimal logit $f^*(x,g)$ satisfies
\begin{equation}
f^*(x,g) = \log p^+(g\mid x) - \log p_{\mathrm{neg}}(g) + c(x)
\end{equation}
for an arbitrary function $c$ that does not depend on $g$.
\end{lemma}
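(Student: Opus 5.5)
We prove Lemma~\ref{lem:bayes} by making the binary classification problem explicit and computing the posterior with Bayes' rule. Fix a context $x$ and a class prior $\pi_+\in(0,1)$ for the positive label, with $\pi_-=1-\pi_+$. The joint distribution is: draw a label $y\in\{1,0\}$ with probabilities $\pi_+$ and $\pi_-$, then draw $g\sim p^+(\cdot\mid x)$ if $y=1$ and $g\sim p_{\mathrm{neg}}$ if $y=0$. In the discrete setting of the measure-theoretic note (or with densities w.r.t.\ a common base measure), Bayes' rule gives
\[
\Pr(y=1\mid x,g)=\frac{\pi_+\,p^+(g\mid x)}{\pi_+\,p^+(g\mid x)+\pi_-\,p_{\mathrm{neg}}(g)} .
\]

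The first step is to note that the Bayes-optimal classifier under logistic (cross-entropy) loss outputs exactly this posterior. Pointwise in $(x,g)$, the conditional expected loss is $-q\log\sigma(f)-(1-q)\log(1-\sigma(f))$ with $q=\Pr(y=1\mid x,g)$. This is strictly convex in $\sigma(f)$ and is minimized at $\sigma(f)=q$, i.e.\ $f^*=\log\frac{q}{1-q}$. Substituting the posterior gives
\[
f^*(x,g)=\log p^+(g\mid x)-\log p_{\mathrm{neg}}(g)+\log(\pi_+/\pi_-).
\]
The last term does not depend on $g$, so it can be absorbed into $c(x)$.

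The second step explains why $c(x)$ is arbitrary rather than pinned down. For the multi-class InfoNCE form~\eqref{eq:infonce}, which is the form actually used, the loss is invariant to $f\mapsto f+c(x)$. The optimum is therefore only identified up to such a shift. I would verify this by writing the population InfoNCE optimum: the posterior that index $0$ among $\{g^+,g_1^-,\dots,g_N^-\}$ is the true positive is proportional to $p^+(g_0\mid x)/p_{\mathrm{neg}}(g_0)$. The softmax is matched exactly by $f=\log(p^+/p_{\mathrm{neg}})+c(x)$, and the softmax determines $f$ only up to an additive function of $x$.

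The main obstacle is the support and absolute-continuity issue. Where $p_{\mathrm{neg}}(g)=0$ but $p^+(g\mid x)>0$, the logit is $+\infty$. So I would state the result on the set where $p_{\mathrm{neg}}>0$, invoking the absolute continuity assumption from the measure-theoretic note. I would also remark that identification of $f^*$ holds only $p_{\mathrm{neg}}$- and $p^+$-almost everywhere.

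Given this lemma, Theorem~\ref{thm:infonce-q} follows immediately from Lemma~\ref{lem:geometric}, which gives $p^+=\rho^\mu(\cdot\mid x_t)$.
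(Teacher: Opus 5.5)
Your proof is correct and takes the paper's approach: both apply Bayes' rule to get $\Pr(Y=+\mid x,g)$, read off the log-odds $\log p^+(g\mid x)-\log p_{\mathrm{neg}}(g)+\log(\pi_+/\pi_-)$, and absorb the prior ratio into $c(x)$. Your additions only make explicit what the paper leaves implicit: the pointwise cross-entropy argument that the optimal logit equals the posterior log-odds, and the point that the real freedom in $c(x)$ comes from the shift invariance of the InfoNCE softmax (essentially the paper's proof of Theorem~\ref{thm:infonce-q}).
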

\begin{proof}
Consider binary labels $Y\in\{+,-\}$ with conditional distributions
$G\mid(X=x,Y=+)\sim p^+(\cdot\mid x)$ and $G\mid(Y=-)\sim p_{\mathrm{neg}}(\cdot)$.
Let $\pi_+(x)=\Pr(Y=+\mid X=x)$ and $\pi_-(x)=1-\pi_+(x)$.
Bayes' rule gives
\[
\Pr(Y=+\mid x,g)=\frac{\pi_+(x)p^+(g\mid x)}{\pi_+(x)p^+(g\mid x)+\pi_-(x)p_{\mathrm{neg}}(g)}.
\]
Hence the Bayes-optimal log-odds is
\begin{align*}
\log\frac{\Pr(Y=+\mid x,g)}{\Pr(Y=-\mid x,g)}
&=\log\frac{\pi_+(x)p^+(g\mid x)}{\pi_-(x)p_{\mathrm{neg}}(g)}
=\log p^+(g\mid x)-\log p_{\mathrm{neg}}(g)+\log\frac{\pi_+(x)}{\pi_-(x)}.
\end{align*}
Setting $c(x):=\log\frac{\pi_+(x)}{\pi_-(x)}$ yields the claim.
\end{proof}

\begin{proof}[Proof of Theorem~\ref{thm:infonce-q}]
We use a standard ``one-positive among negatives'' generative model for InfoNCE.
Fix a context $x$. Sample an index $I$ uniformly from $\{0,1,\dots,N\}$.
Sample $g_I\sim p^+(\cdot\mid x)$ and $g_j\iid p_{\mathrm{neg}}(\cdot)$ for $j\neq I$.
Let $\mathcal{G}=\{g_0,\dots,g_N\}$ denote the resulting multiset.

The InfoNCE loss in~\eqref{eq:infonce} is the expected multiclass cross-entropy for predicting $I$
from $\mathcal{G}$ using the softmax model
\[
q_\theta(I=i\mid x,\mathcal{G})=\frac{\exp(f_\theta(x,g_i))}{\sum_{j=0}^N \exp(f_\theta(x,g_j))}.
\]
For any fixed $(x,\mathcal{G})$, cross-entropy is uniquely minimized (a.s.) when the model matches
the true posterior:
$q_\theta(\cdot\mid x,\mathcal{G})=\Pr(\cdot\mid x,\mathcal{G})$.

We therefore compute $\Pr(I=i\mid x,\mathcal{G})$. By Bayes' rule and conditional independence,
\begin{align*}
\Pr(\mathcal{G}\mid x,I=i)
&=p^+(g_i\mid x)\prod_{j\neq i} p_{\mathrm{neg}}(g_j)
=\Big(\frac{p^+(g_i\mid x)}{p_{\mathrm{neg}}(g_i)}\Big)\prod_{j=0}^N p_{\mathrm{neg}}(g_j).
\end{align*}
Since the factor $\prod_{j=0}^N p_{\mathrm{neg}}(g_j)$ does not depend on $i$ and the prior on $I$ is uniform,
we obtain
\[
\Pr(I=i\mid x,\mathcal{G})
\propto \frac{p^+(g_i\mid x)}{p_{\mathrm{neg}}(g_i)}
\quad\Longrightarrow\quad
\Pr(I=i\mid x,\mathcal{G})
=
\frac{\exp(\log p^+(g_i\mid x)-\log p_{\mathrm{neg}}(g_i))}
{\sum_{j=0}^N \exp(\log p^+(g_j\mid x)-\log p_{\mathrm{neg}}(g_j))}.
\]
Thus a sufficient condition for $q_\theta(\cdot\mid x,\mathcal{G})$ to match the posterior for all sets $\mathcal{G}$ is
\[
f_\theta(x,g) = \log p^+(g\mid x)-\log p_{\mathrm{neg}}(g)+c(x),
\]
where $c(x)$ is any function independent of $g$ (it cancels in the softmax). This proves the density-ratio form.

Finally, by Lemma~\ref{lem:geometric}, geometric future sampling implies $p^+(g\mid x)=\rho^\mu(g\mid x)$,
yielding~\eqref{eq:logratio}.
\end{proof}

\begin{remark}[Optional connection to (soft) $Q$-learning and RL-as-inference]
\label{rem:softq}
Equation~\eqref{eq:logratio} implies that $\exp(f_\theta)$ estimates an \emph{unnormalized}
goal-conditioned occupancy \citep{ziebart2008maxentirl}. This already yields a value-like reachability score in the
state-conditioned setting used by FOCA.

If one instead uses an \emph{action-conditioned} score $f_\theta(s,a,g)$ (an optional extension, not required by FOCA's implicit alignment),
one may interpret $f_\theta$ as an energy over actions for a fixed goal $g$ and define an energy-based policy
$\pi_\theta(a\mid s,g)\propto \exp(f_\theta(s,a,g))$, which is structurally aligned with maximum-entropy RL.
Making this connection rigorous in high-dimensional action spaces requires additional assumptions and a gauge-fixing/normalization
to remove additive terms that depend on $(s,a)$; otherwise the InfoNCE optimum is identifiable only up to $c(s,a)$.
\end{remark}

\begin{remark}[Critical caveat: negative sampling matters]
\label{rem:neg}
If $p_{\mathrm{neg}}$ is far from the marginal goal distribution induced by the dataset,
the ratio in~\eqref{eq:logratio} may prioritize ``easy negatives'' and distort the learned metric \citep{zheng2023stabilizing}.
Empirically, this manifests as unstable or task-irrelevant alignment unless negatives are carefully chosen.
In FOCA we draw negatives from the task-mismatched pool $\mathcal{N}_{\neg\mathrm{task}}$ (Eq.~\eqref{eq:mismatch_pool}),
which can be viewed as choosing a task-conditioned proposal that yields informative (hard) negatives.
\end{remark}

\subsection{Multi-frame futures and $n$-step structure}
\label{subsec:multiframe}

Theorems above hold exactly for geometric sampling. If instead one chooses a fixed horizon $k$,
then $p^+(g\mid x)$ becomes the $k$-step transition distribution, not the discounted occupancy.
This is still useful, but it no longer corresponds to a Bellman fixed point for a single discount.

\begin{proposition}[Fixed-horizon positives learn $k$-step transitions]
\label{prop:kstep}
If positives are sampled as $g = g_{t+k}=h(s_{t+k})$ with deterministic $k\ge 1$, then
$f_\theta(x,g)$ estimates $\log \Pr_\mu(g_{t+k}=g\mid x)-\log p_{\mathrm{neg}}(g)$ up to $c(x)$.
\end{proposition}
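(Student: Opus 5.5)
The plan is to reuse the proof of Theorem~\ref{thm:infonce-q} essentially verbatim. The only ingredient that changes is the positive distribution: we swap the geometric-sampling identity of Lemma~\ref{lem:geometric} for the trivial fixed-horizon identity. Concretely, fix a context $x$ (either $s_t$ or $(s_t,a_t)$) and a deterministic offset $k\ge 1$. The induced positive distribution is then
\[
p^+_k(g\mid x) := \Pr_\mu\big(h(s_{t+k})=g \mid x\big),
\]
which follows directly from the definition of the sampling scheme. There is no mixture over $k$ here, so no discounted sum appears.

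Second, I would invoke the ``one-positive among negatives'' argument from the proof of Theorem~\ref{thm:infonce-q}. Place the positive at a uniformly random index $I\in\{0,\dots,N\}$, draw $g_I\sim p^+_k(\cdot\mid x)$, and draw the remaining $g_j$ i.i.d.\ from $p_{\mathrm{neg}}$. The InfoNCE loss~\eqref{eq:infonce} is the expected cross-entropy for predicting $I$, so at the population optimum $q_\theta$ must match the posterior. That posterior satisfies
\[
\Pr(I=i\mid x,\mathcal{G})\propto p^+_k(g_i\mid x)/p_{\mathrm{neg}}(g_i),
\]
because the product $\prod_j p_{\mathrm{neg}}(g_j)$ cancels. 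Softmax invariance to $g$-independent shifts then gives
\[
f_\theta(x,g)=\log p^+_k(g\mid x)-\log p_{\mathrm{neg}}(g)+c(x).
\]
Substituting $p^+_k$ yields the claimed expression $\log\Pr_\mu(g_{t+k}=g\mid x)-\log p_{\mathrm{neg}}(g)$ up to $c(x)$. Lemma~\ref{lem:bayes} provides the same conclusion from the binary-classification viewpoint, and I would cite it as a consistency check.

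The main obstacle is not algebraic but one of well-posedness. The log-ratio is defined only where $p_{\mathrm{neg}}(g)>0$, so I would state the absolute-continuity requirement $p^+_k(\cdot\mid x)\ll p_{\mathrm{neg}}$ from the measure-theoretic note. Off the support of $p_{\mathrm{neg}}$ the optimal $f_\theta$ is unconstrained and only the on-support identification holds.

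A second subtlety is the step ``matching the posterior for all $\mathcal{G}$ forces this form.'' For necessity, and not just sufficiency, I would compare two sets that differ in one element. This shows that $f_\theta(x,g)-\log(p^+_k/p_{\mathrm{neg}})(g)$ is constant in $g$ on the support, which is exactly what ``up to $c(x)$'' means.

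Finally, I would remark that for $x=s_t$ the result is the $k$-step kernel $(P^\mu)^k$ pushed through $h$, in contrast to the discounted occupancy. In particular, this target does not satisfy the single-discount Bellman recursion of Proposition~\ref{prop:bellman}.
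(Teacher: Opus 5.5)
Your proposal is correct and follows the paper's proof: you identify the fixed-horizon positive distribution $p^+(g\mid x)=\Pr_\mu(g_{t+k}=g\mid x)$ and rerun the one-positive-among-negatives posterior argument from the proof of Theorem~\ref{thm:infonce-q}. Your extra remarks on absolute continuity and on the necessity (not just sufficiency) of the density-ratio form are sound refinements of points the paper leaves implicit.
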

\begin{proof}
Under fixed-horizon sampling $g=g_{t+k}$ (deterministic $k\ge 1$), the positive conditional distribution is
\[
p^+(g\mid x)=\Pr_\mu(g_{t+k}=g\mid x),
\]
where $x$ denotes the chosen context (state or state-action).
Applying the argument in the proof of Theorem~\ref{thm:infonce-q} with this $p^+$ gives
\[
f_\theta(x,g)=\log \Pr_\mu(g_{t+k}=g\mid x)-\log p_{\mathrm{neg}}(g)+c(x),
\]
as claimed. In the special case $h=\mathrm{id}$, then $g_{t+k}=s_{t+k}$ and the positive distribution reduces to the standard $k$-step state transition.
\end{proof}

\begin{remark}[Practical implication]
Using a mixture over multiple horizons corresponds to learning a mixture of $k$-step transition
distributions \citep{oord2018cpc}. Choosing a geometric distribution is the unique memoryless choice (up to constants) that yields
a single discounted occupancy target with a Bellman-style fixed point (Proposition~\ref{prop:bellman}).
\end{remark}

\subsection{Explicit object-centric prediction as restricted world modeling}
\label{subsec:explicit}

FOCA's explicit head predicts a future interaction embedding in latent space (Sec.~3.2), focusing supervision on task-grounded dynamic regions.
Abstractly, let $x$ denote the current context (in FOCA, $x=x_t=z_t$) and let $g^+=h(s_{t+k})$ denote the sampled future interaction embedding.
The explicit loss in the main paper (Eq.~\eqref{eq:exp_future}) is a deterministic regression objective:
\begin{equation}
\label{eq:explicit-l2}
\mathcal{L}_{\mathrm{exp}}(\theta)
=
\E\big[\|\widehat g_\theta(x)-g^+\|_2^2\big],
\qquad g^+\sim p^+(g\mid x).
\end{equation}

\begin{proof}[Proof of Proposition~\ref{prop:exp-mean}]
Fix a context $x$. The conditional objective is
$
\mathcal{L}_{\mathrm{exp}}(x;\widehat g)=\E[\|\widehat g(x)-g^+\|_2^2 \mid x].
$
Expanding the square and dropping terms independent of $\widehat g(x)$ gives
$
\mathcal{L}_{\mathrm{exp}}(x;\widehat g)
=
\|\widehat g(x)\|_2^2 - 2\,\widehat g(x)^\top \E[g^+\mid x] + \mathrm{const}.
$
This quadratic is minimized uniquely by $\widehat g^*(x)=\E[g^+\mid x]$.
Under geometric future sampling (Lemma~\ref{lem:geometric}), $g^+\mid x \sim \rho^\mu(\cdot\mid x)$, hence
$\widehat g^*(x)=\E_{g\sim\rho^\mu(\cdot\mid x)}[g]$.
\end{proof}

\begin{remark}[Connection to likelihood modeling]
The squared loss~\eqref{eq:explicit-l2} is equivalent to a Gaussian negative log-likelihood with fixed variance,
so Proposition~\ref{prop:exp-mean} can be interpreted as maximum-likelihood estimation of the Gaussian mean.
However, unlike the implicit contrastive objective (Theorem~\ref{thm:infonce-q}), this regression loss does not estimate the full
occupancy distribution in multimodal futures; it summarizes it through a point estimate, which is why it complements (rather than replaces)
implicit alignment in practice.
\end{remark}

\subsection{Hybrid objective and object-centric factorization}
\label{subsec:hybrid}

We combine action supervision (flow matching / imitation) with future-oriented auxiliary losses:
\begin{equation}
\label{eq:total-loss}
\mathcal{L}(\theta)
=
\mathcal{L}_{\mathrm{fm}}(\theta)
+ \lambda_{\mathrm{imp}}\mathcal{L}_{\mathrm{imp}}(\theta)
+ \lambda_{\mathrm{exp}}\mathcal{L}_{\mathrm{exp}}(\theta).
\end{equation}

\begin{assumption}[Log-domain factorization gaps]
\label{ass:factor}
There exists a decomposition $g=(g^{(1)},\dots,g^{(m)})$ and $x=(x^{(1)},\dots,x^{(m)})$ and
candidate factors $\{\rho_i^\mu(\cdot\mid x^{(i)})\}_{i=1}^m$ and $\{p_{\mathrm{neg},i}\}_{i=1}^m$ such that the
log-densities admit bounded residuals:
\begin{align}
\Delta_\rho(x,g)
&:= \log \rho^\mu(g\mid x) - \sum_{i=1}^m \log \rho_i^\mu(g^{(i)}\mid x^{(i)}),
\label{eq:delta-rho}\\
\Delta_{\mathrm{neg}}(g)
&:= \log p_{\mathrm{neg}}(g) - \sum_{i=1}^m \log p_{\mathrm{neg},i}(g^{(i)}),
\label{eq:delta-neg}
\end{align}
and there exist constants $\varepsilon_\rho,\varepsilon_{\mathrm{neg}}\ge 0$ such that
\begin{equation}
|\Delta_\rho(x,g)| \le \varepsilon_\rho
\text{ and }
|\Delta_{\mathrm{neg}}(g)| \le \varepsilon_{\mathrm{neg}}
\end{equation}
on the support of the training pairs $(x, g)$.
\end{assumption}

\begin{theorem}[Additive decomposition with explicit approximation error]
\label{thm:factor}
Under Assumption~\ref{ass:factor} and the conditions of Theorem~\ref{thm:infonce-q},
there exists a function $c(x)$ independent of $g$ such that any population-optimal score satisfies
\begin{align}
f^*(x,g)
=
&\sum_{i=1}^m\Big(\log \rho_i^\mu(g^{(i)}\mid x^{(i)})-\log p_{\mathrm{neg},i}(g^{(i)})\Big)
~+ \nonumber \\
&c(x)~+~\delta(x,g),
\label{eq:factor-decomp}
\end{align}
where the residual $\delta(x,g)$ obeys the uniform bound
\begin{equation}
|\delta(x,g)| \le \varepsilon_\rho + \varepsilon_{\mathrm{neg}}.
\end{equation}
\end{theorem}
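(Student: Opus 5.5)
The plan is to reduce Theorem~\ref{thm:factor} to Theorem~\ref{thm:infonce-q} and then rewrite the two log-densities using the residuals of Assumption~\ref{ass:factor}. The only work is an algebraic rearrangement followed by a triangle inequality.

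First, invoke Theorem~\ref{thm:infonce-q}. Its proof, via Lemma~\ref{lem:geometric}, shows that any population-optimal score has the form
\[
f^*(x,g)=\log \rho^\mu(g\mid x)-\log p_{\mathrm{neg}}(g)+c(x)
\]
for some $c(x)$ independent of $g$.

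A caveat on the word ``any'': the proof of Theorem~\ref{thm:infonce-q} establishes sufficiency. Necessity follows from the same posterior-matching argument. Matching the softmax posterior for all candidate sets forces the difference $f^*(x,g)-\log\frac{\rho^\mu(g\mid x)}{p_{\mathrm{neg}}(g)}$ to be constant in $g$, on the support and almost surely. I will state this explicitly so that the same $c(x)$ can be reused.

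Next, substitute the definitions \eqref{eq:delta-rho} and \eqref{eq:delta-neg}:
\[
\log\rho^\mu(g\mid x)=\sum_{i=1}^m\log\rho_i^\mu(g^{(i)}\mid x^{(i)})+\Delta_\rho(x,g),
\qquad
\log p_{\mathrm{neg}}(g)=\sum_{i=1}^m\log p_{\mathrm{neg},i}(g^{(i)})+\Delta_{\mathrm{neg}}(g).
\]
These are identities by definition. Plugging them in and grouping the per-factor terms gives exactly \eqref{eq:factor-decomp}, with the same $c(x)$ and the residual
\[
\delta(x,g):=\Delta_\rho(x,g)-\Delta_{\mathrm{neg}}(g).
\]

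Finally, the triangle inequality together with the bounds of Assumption~\ref{ass:factor} gives
\[
|\delta(x,g)|\le|\Delta_\rho(x,g)|+|\Delta_{\mathrm{neg}}(g)|\le\varepsilon_\rho+\varepsilon_{\mathrm{neg}}
\]
on the support of training pairs. This is where the uniform bound is claimed.

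The main obstacle is bookkeeping about supports rather than any inequality. The logarithms must be finite, which requires the absolute-continuity condition from the measure-theoretic note. Population optimality only pins $f^*$ down on the support of the positive and negative distributions, so the statement must be restricted to that support; the assumption already restricts itself there. Off-support values of $f^*$ are unconstrained, so I will phrase the conclusion as holding on the support and not claim more.
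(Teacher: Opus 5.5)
Your proposal is correct and matches the paper's proof step for step: invoke Theorem~\ref{thm:infonce-q} for the log-ratio form, substitute the definitions \eqref{eq:delta-rho} and \eqref{eq:delta-neg}, set $\delta(x,g)=\Delta_\rho(x,g)-\Delta_{\mathrm{neg}}(g)$, and apply the triangle inequality. Your two additions are sound refinements that the paper leaves implicit: the necessity argument (posterior matching forces $f^*(x,g)-\log\rho^\mu(g\mid x)+\log p_{\mathrm{neg}}(g)$ to be constant in $g$, which justifies the word ``any''), and restricting the conclusion to the support.
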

\begin{proof}
By Theorem~\ref{thm:infonce-q}, the population-optimal score has the form
\[
f^*(x,g)=\log \rho^\mu(g\mid x)-\log p_{\mathrm{neg}}(g)+c(x),
\]
for some $c(x)$ independent of $g$.
Add and subtract the factorized log terms:
\begin{align*}
f^*(x,g)
&=
\Big(\sum_{i=1}^m \log \rho_i^\mu(g^{(i)}\mid x^{(i)}) + \Delta_\rho(x,g)\Big)
-
\Big(\sum_{i=1}^m \log p_{\mathrm{neg},i}(g^{(i)}) + \Delta_{\mathrm{neg}}(g)\Big)
+ c(x)\\
&=
\sum_{i=1}^m\Big(\log \rho_i^\mu(g^{(i)}\mid x^{(i)})-\log p_{\mathrm{neg},i}(g^{(i)})\Big)
+ c(x) + \underbrace{\big(\Delta_\rho(x,g)-\Delta_{\mathrm{neg}}(g)\big)}_{=:~\delta(x,g)}.
\end{align*}
This proves the decomposition~\eqref{eq:factor-decomp}.
Under Assumption~\ref{ass:factor}, we have
$|\Delta_\rho(x,g)|\le \varepsilon_\rho$ and $|\Delta_{\mathrm{neg}}(g)|\le \varepsilon_{\mathrm{neg}}$
on the support of training pairs, hence by the triangle inequality
\[
|\delta(x,g)| = |\Delta_\rho(x,g)-\Delta_{\mathrm{neg}}(g)|
\le |\Delta_\rho(x,g)| + |\Delta_{\mathrm{neg}}(g)|
\le \varepsilon_\rho+\varepsilon_{\mathrm{neg}}.
\]
\end{proof}

\begin{remark}[Interpretation and a design knob]
Assumption~\ref{ass:factor} is intentionally explicit: it isolates \emph{why} object-centric additivity can fail
in manipulation. The residual $\Delta_\rho$ captures inter-object coupling and any information loss induced by
the object-centric projection $x\mapsto(x^{(i)})$; $\Delta_{\mathrm{neg}}$ captures non-factorized negative sampling.
In particular, if negatives are constructed to factorize across objects (so that
$p_{\mathrm{neg}}(g)=\prod_i p_{\mathrm{neg},i}(g^{(i)})$), then $\varepsilon_{\mathrm{neg}}=0$ and the decomposition
error is controlled purely by the dynamics/representation coupling term $\varepsilon_\rho$.
\end{remark}

\begin{remark}[Expected factorization gap (KL / multi-information)]
If one chooses $\rho_i^\mu(\cdot\mid x)$ such that $\prod_i \rho_i^\mu(\cdot\mid x)$ forms a product distribution
over $\mathcal{G}$ for each $x$, then
$\E_{g\sim\rho^\mu(\cdot\mid x)}[\Delta_\rho(x,g)] = \mathrm{KL}(\rho^\mu(\cdot\mid x)\,\|\,\prod_i\rho_i^\mu(\cdot\mid x^{(i)}))$.
Thus, $\Delta_\rho$ can be interpreted as a conditional multi-information term measuring inter-object coupling
and information loss from the object-centric projection. This suggests a principled diagnostic:
estimate the KL gap (or a lower bound) to quantify when additive object-wise critics are justified.
\end{remark}

Assumption~\ref{ass:factor} is violated in manipulation due to contacts, occlusions, and coupled dynamics.
The value of the object-centric architecture is therefore not that factorization is exact, but that attention over
object tokens can approximate sparse interaction graphs (a structured critic), potentially improving credit
assignment relative to monolithic global embeddings.

\subsection{DreamGen (video-only) augmentation}
\label{subsec:dreamgen}

Let $\widetilde{\mathcal{D}}$ denote synthetic videos generated by a world model (e.g., DreamGen \citep{jang2025dreamgen}), which provide
state sequences but no actions. Training the implicit objective on $\widetilde{\mathcal{D}}$ corresponds to learning
a goal-occupancy under a different rollout distribution $\tilde\rho$. 

\begin{proposition}[Mixture bias from synthetic rollouts]
\label{prop:mixture}
If $\mathcal{L}_{\mathrm{imp}}$ is trained on a mixture of real demonstrations and action-free synthetic rollouts
with mixing weight $\alpha\in[0,1]$, then the learned score corresponds to
$f_\theta(x_t,g)=\log \rho_{\mathrm{mix}}(g\mid x_t)-\log p_{\mathrm{neg}}(g)+c(x_t)$,
where $\rho_{\mathrm{mix}}=(1-\alpha)\rho^\mu+\alpha \tilde\rho$ and $\tilde\rho$ is the discounted occupancy induced by the synthetic rollouts.
\end{proposition}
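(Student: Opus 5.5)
The plan is to reduce Proposition~\ref{prop:mixture} to Theorem~\ref{thm:infonce-q} (equivalently, the posterior-matching argument in its proof together with Lemma~\ref{lem:bayes}). That argument only uses the identity of the positive conditional $p^+(g\mid x)$ and the fact that negatives come from a proposal $p_{\mathrm{neg}}$ independent of $x$. So the whole task is to identify the positive conditional induced by mixed training.

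\textbf{Step 1: model the mixed data stream.} Each positive pair is generated as follows. Draw a source label $B\sim\mathrm{Bernoulli}(\alpha)$. If $B=0$, take a real trajectory from $\mathcal{D}$; if $B=1$, take a synthetic rollout from $\widetilde{\mathcal{D}}$. In either case, draw the context at time $t$ and a future offset $k\sim\mathrm{Geom}(1-\gamma)$, and set $g^+=h(s_{t+k})$. Negatives are still drawn i.i.d.\ from $p_{\mathrm{neg}}$, regardless of $B$ and $x$. This is the standing assumption of Theorem~\ref{thm:infonce-q}, instantiated via $\mathcal{N}_{\neg\mathrm{task}}$.

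\textbf{Step 2: apply Lemma~\ref{lem:geometric} within each source.} Conditionally on $B=0$, Lemma~\ref{lem:geometric} gives $\Pr(g^+=g\mid x_t,B=0)=\rho^\mu(g\mid x_t)$. The proof of Lemma~\ref{lem:geometric} uses only the geometric weights and the definition of occupancy. Repeating it verbatim for the synthetic rollout law gives $\Pr(g^+=g\mid x_t,B=1)=\tilde\rho(g\mid x_t)$. Marginalizing over $B$ then yields
\[
p^+(g\mid x_t)=\Pr(B=0\mid x_t)\,\rho^\mu(g\mid x_t)+\Pr(B=1\mid x_t)\,\tilde\rho(g\mid x_t).
\]

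\textbf{Step 3: conclude.} Substitute this $p^+$ into the proof of Theorem~\ref{thm:infonce-q}. The population optimum is then $\log p^+(g\mid x_t)-\log p_{\mathrm{neg}}(g)+c(x_t)$. For this to equal the stated expression, $p^+$ must be $\rho_{\mathrm{mix}}$, which requires $\Pr(B=1\mid x_t)=\alpha$.

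\textbf{Main obstacle.} The difficulty lies in Step 3. The posterior weight $\Pr(B=1\mid x_t)$ equals $\alpha$ only if the context marginals of the two sources coincide, or if mixing is performed per context. In general,
\[
\Pr(B=1\mid x_t)=\frac{\alpha\,\tilde p(x_t)}{(1-\alpha)\,p^\mu(x_t)+\alpha\,\tilde p(x_t)},
\]
which depends on $x_t$.

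I would handle this by stating the interpretation under which the proposition holds exactly. The mixture weight $\alpha$ is read as the conditional mixing probability given the context, as is natural when synthetic videos are generated from the same initial frames as the real episodes. Otherwise, $\alpha$ is replaced by the $x_t$-dependent weight $\alpha(x_t)$ above. Since any $x$-dependent factor multiplying both components cannot be absorbed into $c(x_t)$ when the weights differ, I would note the general form $\rho_{\mathrm{mix}}=(1-\alpha(x_t))\rho^\mu+\alpha(x_t)\tilde\rho$ as a remark. With that clarification, the rest is immediate from Theorem~\ref{thm:infonce-q}, and absolute continuity of $\rho_{\mathrm{mix}}$ with respect to $p_{\mathrm{neg}}$ follows from that of each component.
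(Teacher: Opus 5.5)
Your proposal is correct and follows the paper's proof: identify the positive conditional of the mixed stream as $(1-\alpha)\rho^\mu+\alpha\tilde\rho$ by the law of total probability over the source, then invoke Theorem~\ref{thm:infonce-q} with this $p^+$. Your caveat about $\Pr(B=1\mid x_t)$ is well taken. The paper's one-line total-probability step silently assumes the source is chosen independently of the context (per-context mixing), and you correctly note that otherwise an $x_t$-dependent weight appears, which cannot be absorbed into $c(x_t)$.
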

\begin{proof}
Let $p^+_{\mathrm{real}}(g\mid x)=\rho^\mu(g\mid x)$ and $p^+_{\mathrm{syn}}(g\mid x)=\tilde\rho(g\mid x)$.
If each positive is drawn from the real source w.p.\ $1-\alpha$ and from the synthetic source w.p.\ $\alpha$,
then by the law of total probability,
\[
p^+_{\mathrm{mix}}(g\mid x)=(1-\alpha)\rho^\mu(g\mid x)+\alpha \tilde\rho(g\mid x)=:\rho_{\mathrm{mix}}(g\mid x).
\]
Applying Theorem~\ref{thm:infonce-q} with $p^+=p^+_{\mathrm{mix}}$ yields
\[
f_\theta(x,g)=\log \rho_{\mathrm{mix}}(g\mid x)-\log p_{\mathrm{neg}}(g)+c(x),
\]
which proves the claim.
\end{proof}

\begin{theorem}[A simple robustness bound (relative error regime)]
\label{thm:robust}
Assume $\alpha\epsilon < 1$ and $\tilde\rho(g\mid x)$ satisfies a uniform relative-error bound
$(1-\epsilon)\rho^\mu(g\mid x) \le \tilde\rho(g\mid x) \le (1+\epsilon)\rho^\mu(g\mid x)$ for all $(x,g)$.
Then the mixture score bias is bounded as
\begin{equation}
\big|\log \rho_{\mathrm{mix}}(g\mid x) - \log \rho^\mu(g\mid x)\big|
\le \log\big(1+\alpha\epsilon\big) - \log\big(1-\alpha\epsilon\big).
\end{equation}
\end{theorem}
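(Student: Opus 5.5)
The plan is to bound the ratio $\rho_{\mathrm{mix}}(g\mid x)/\rho^\mu(g\mid x)$ pointwise and then take logarithms. By Proposition~\ref{prop:mixture}, $\rho_{\mathrm{mix}}=(1-\alpha)\rho^\mu+\alpha\tilde\rho$. Fix $(x,g)$ with $\rho^\mu(g\mid x)>0$. The relative-error assumption forces $\tilde\rho(g\mid x)=0$ whenever $\rho^\mu(g\mid x)=0$; that case makes both log terms $-\infty$ and is excluded, as in the measure-theoretic note on supports.

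First, write
\[
\frac{\rho_{\mathrm{mix}}(g\mid x)}{\rho^\mu(g\mid x)} = 1+\alpha\Big(\frac{\tilde\rho(g\mid x)}{\rho^\mu(g\mid x)}-1\Big).
\]
The assumption places the inner ratio in $[1-\epsilon,1+\epsilon]$, so the whole ratio lies in $[1-\alpha\epsilon,\,1+\alpha\epsilon]$. Since $\alpha\epsilon<1$, the lower endpoint is strictly positive, and both logarithms are finite.

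Second, take logarithms:
\[
\log(1-\alpha\epsilon)\le \log\rho_{\mathrm{mix}}-\log\rho^\mu\le \log(1+\alpha\epsilon).
\]
The absolute value is therefore bounded by $\max\{\log(1+\alpha\epsilon),\,-\log(1-\alpha\epsilon)\}$. Each of these two quantities is nonnegative. Hence their maximum is at most their sum, which equals $\log(1+\alpha\epsilon)-\log(1-\alpha\epsilon)$. This is the stated (slightly loose, symmetric) bound.

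Finally, I would remark that, by Theorem~\ref{thm:infonce-q} and Proposition~\ref{prop:mixture}, the $-\log p_{\mathrm{neg}}(g)$ and $c(x)$ terms are shared. The same bound therefore controls the deviation of the mixture-optimal score from the real-data-optimal score, up to the $g$-independent gauge $c(x)$.

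There is no real obstacle here. The only step needing care is the support and positivity bookkeeping: $\alpha\epsilon<1$ ensures $1-\alpha\epsilon>0$, and the pointwise relative bound handles the zero-density case.
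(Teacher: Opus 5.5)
Your proposal is correct and follows essentially the same route as the paper: you sandwich $\rho_{\mathrm{mix}}$ between $(1-\alpha\epsilon)\rho^\mu$ and $(1+\alpha\epsilon)\rho^\mu$, take logarithms using $\alpha\epsilon<1$, and bound the resulting maximum by the sum of the two nonnegative terms. Your explicit handling of the zero-density case is a small extra bit of care that the paper leaves implicit.
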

\begin{proof}
Assume $(1-\epsilon)\rho^\mu(g\mid x)\le \tilde\rho(g\mid x)\le (1+\epsilon)\rho^\mu(g\mid x)$ for all $(x,g)$.
Then the mixture occupancy $\rho_{\mathrm{mix}}=(1-\alpha)\rho^\mu+\alpha\tilde\rho$ satisfies
\[
(1-\alpha)\rho^\mu + \alpha(1-\epsilon)\rho^\mu
\le \rho_{\mathrm{mix}} \le
(1-\alpha)\rho^\mu + \alpha(1+\epsilon)\rho^\mu,
\]
i.e.,
\[
(1-\alpha\epsilon)\rho^\mu(g\mid x)\le \rho_{\mathrm{mix}}(g\mid x)\le (1+\alpha\epsilon)\rho^\mu(g\mid x).
\]
Assuming $\alpha\epsilon<1$ (so the lower bound is positive), taking logs gives
\[
\log(1-\alpha\epsilon)\le \log \rho_{\mathrm{mix}}(g\mid x)-\log\rho^\mu(g\mid x)\le \log(1+\alpha\epsilon).
\]
Therefore,
\[
\big|\log \rho_{\mathrm{mix}}(g\mid x)-\log\rho^\mu(g\mid x)\big|
\le \max\{-\log(1-\alpha\epsilon),~\log(1+\alpha\epsilon)\}.
\]
Finally, note that
$\max\{-\log(1-\alpha\epsilon),\log(1+\alpha\epsilon)\}
\le \log(1+\alpha\epsilon)-\log(1-\alpha\epsilon)$,
which yields the (looser) bound stated in the theorem.
\end{proof}

\begin{remark}[Critical caveat: hallucinations destroy uniform bounds]
Theorem~\ref{thm:robust} requires a strong uniform approximation; in practice video generators can hallucinate \citep{jang2025dreamgen},
especially in low-data regimes. This motivates conservative weighting schemes that downweight synthetic positives
unless they are validated by a discriminator or consistency check. Such conservatism is analogous in spirit to
offline RL methods that penalize out-of-distribution value estimates, but extending those guarantees to
representation-level occupancy estimation remains open.
\end{remark}

In the video-only setting, the implicit objective is applied with state-only context $x=s$, hence estimating $\rho(g \mid s)$; learning $\rho(g \mid s, a)$ would require pseudo-actions (IDM/IGM/LAPA).

\subsection{Universal evaluation and multi-positive futures}
\label{subsec:extensions}

\begin{proposition}[Universal evaluation from occupancy]
\label{prop:universal-eval}
Let $u:\mathcal{G}\to\R$ be bounded, and define the reward
$r_u(s,a,s'):=u(h(s'))=u(g_{t+1})$.
Let $Q_u^\pi(s,a)$ denote the standard discounted action-value function under $\pi$ for reward $r_u$.
Then
\begin{equation}
Q_u^\pi(s,a)
~=~
\frac{1}{1-\gamma}\,\E_{g\sim \rho^\pi(\cdot\mid s,a)}\big[u(g)\big].
\end{equation}
Similarly, $V_u^\pi(s)=\frac{1}{1-\gamma}\E_{g\sim \rho^\pi(\cdot\mid s)}[u(g)]$.
\end{proposition}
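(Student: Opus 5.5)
The plan is to expand $Q_u^\pi$ as a discounted sum of expected rewards and then recognize the sum as an expectation under the occupancy of Definition~\ref{def:occupancy}. By definition,
\[
Q_u^\pi(s,a)=\E_\pi\Big[\sum_{k=1}^\infty \gamma^{k-1} u(g_{t+k})\,\Big|\,s_t=s,a_t=a\Big].
\]
Here the reward received after step $t+k-1$ is $u(h(s_{t+k}))=u(g_{t+k})$. I index from $k=1$ with weight $\gamma^{k-1}$, matching the convention used in Definition~\ref{def:occupancy} and Proposition~\ref{prop:bellman}.

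First I would justify exchanging the expectation and the infinite sum. Since $|u|\le \|u\|_\infty<\infty$, the series is dominated by $\|u\|_\infty\sum_k\gamma^{k-1}=\|u\|_\infty/(1-\gamma)$, so Fubini/dominated convergence applies. This gives
\[
Q_u^\pi(s,a)=\sum_{k=1}^\infty \gamma^{k-1}\,\E_\pi[u(g_{t+k})\mid s_t=s,a_t=a].
\]
Next, in the discrete setting adopted in the measure-theoretic note, I would write each term as $\sum_{g\in\mathcal{G}} u(g)\Pr_\pi(g_{t+k}=g\mid s,a)$. I would then swap the sums over $k$ and $g$, which is again justified by absolute summability from boundedness of $u$. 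That leaves
\[
\sum_g u(g)\sum_k \gamma^{k-1}\Pr_\pi(g_{t+k}=g\mid s,a)=\frac{1}{1-\gamma}\sum_g u(g)\,\rho^\pi(g\mid s,a),
\]
which is the claim. For general spaces the same argument goes through by integrating $u$ against the pushforward measures and using that $\rho^\pi(\cdot\mid s,a)$ is a countable convex combination of those measures.

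For $V_u^\pi$ I would either repeat the argument with conditioning only on $s_t=s$, or average the $Q$ identity over $a\sim\pi(\cdot\mid s)$. The second route uses $\rho^\pi(g\mid s)=\E_{a\sim\pi}[\rho^\pi(g\mid s,a)]$, as noted in the proof of Proposition~\ref{prop:bellman}. An alternative check is that Proposition~\ref{prop:bellman}, with $u=\mathbf{1}\{\cdot=g\}$, already gives the identity for indicator rewards, and linearity in $u$ extends it.

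The only real obstacle is the bookkeeping of the reward-timing convention, namely whether rewards are discounted as $\gamma^{k-1}$ for $g_{t+k}$. A mismatch there would produce an extra factor of $\gamma$. The interchange of sums is routine given that $u$ is bounded.
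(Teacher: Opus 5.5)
Your proposal is correct and follows the same route as the paper: expand $Q_u^\pi$ as $\sum_{k\ge 1}\gamma^{k-1}\E_\pi[u(g_{t+k})\mid s,a]$, exchange the sums over $k$ and $g$, recognize $(1-\gamma)^{-1}\rho^\pi(\cdot\mid s,a)$, and get the $V$ case by marginalizing over $a\sim\pi(\cdot\mid s)$. Your use of the boundedness of $u$ to justify the interchanges is a detail the paper leaves implicit.
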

\begin{proof}
By definition of $Q_u^\pi$ with reward $r_u(s_t,a_t,s_{t+1})=u(g_{t+1})$,
\[
Q_u^\pi(s,a)=\E_\pi\!\left[\sum_{t=0}^\infty \gamma^t u(g_{t+1}) \,\middle|\, s_0=s,a_0=a\right].
\]
On the other hand,
\begin{align*}
\E_{g\sim \rho^\pi(\cdot\mid s,a)}[u(g)]
&=\sum_{g\in\mathcal{G}} u(g)\,(1-\gamma)\sum_{k=1}^\infty \gamma^{k-1}\Pr_\pi(g_{k}=g\mid s_0=s,a_0=a)\\
&=(1-\gamma)\sum_{k=1}^\infty \gamma^{k-1}\E_\pi\!\left[u(g_k)\mid s_0=s,a_0=a\right]\\
&=(1-\gamma)\sum_{t=0}^\infty \gamma^{t}\E_\pi\!\left[u(g_{t+1})\mid s_0=s,a_0=a\right]\\
&=(1-\gamma)\,Q_u^\pi(s,a),
\end{align*}
which implies $Q_u^\pi(s,a)=\frac{1}{1-\gamma}\E_{g\sim\rho^\pi(\cdot\mid s,a)}[u(g)]$.
The state-value case follows by marginalizing over $a\sim\pi(\cdot\mid s)$.
\end{proof}

\begin{remark}[Why this matters]
Proposition~\ref{prop:universal-eval} shows $\rho^\pi(\cdot\mid s,a)$ is a \emph{universal critic} for any reward
that depends on future goal variables \citep{dayan1993sr, schaul2015uvfa}. The sparse goal-reaching case in Proposition~\ref{prop:bellman} is recovered (up to the $(1 - \gamma)$ normalization) by
choosing $u(\tilde g)=\mathbf{1}\{\tilde g=g\}$; equivalently, choosing $u(\tilde g)=(1 - \gamma)\mathbf{1}\{\tilde g=g\}$ matches the reward in Proposition~\ref{prop:bellman} exactly.
\end{remark}

\begin{proposition}[Multi-positive (multi-frame) sampling learns a mixture occupancy]
\label{prop:mixture-positives}
Let $\nu$ be a distribution over horizons $\{1,2,\dots\}$.
Define the induced positive distribution
\begin{equation}
\rho_{\nu}^\mu(g\mid x)
~:=~
\sum_{k=1}^\infty \nu(k)\Pr_\mu(g_{t+k}=g\mid x),
\end{equation}
where $x$ is the chosen context (either $s$ or $(s,a)$).
If the implicit contrastive objective samples $k\sim\nu$ and uses $g=g_{t+k}$ as the positive,
then the population-optimal InfoNCE score satisfies
\begin{equation}
f_\theta(x,g)=\log \rho_\nu^\mu(g\mid x)-\log p_{\mathrm{neg}}(g)+c(x),
\end{equation}
for some $c(x)$ independent of $g$.
In particular, if $\nu=\mathrm{Geom}(1-\gamma)$ on $\{1,2,\dots\}$, then $\rho_\nu^\mu=\rho^\mu$ in
Definition~\ref{def:occupancy}.
\end{proposition}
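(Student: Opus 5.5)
The proof reduces to two earlier results: first identify the positive distribution produced by the sampling scheme, then invoke the density-ratio argument from the proof of Theorem~\ref{thm:infonce-q}.

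\textbf{Step 1 (positive distribution).} Fix a context $x$ (either $s_t$ or $(s_t,a_t)$). The sampling draws $k\sim\nu$ independently of the trajectory and sets $g=g_{t+k}$. By the law of total probability, conditioning on $k$,
\[
p^+(g\mid x)=\sum_{k=1}^\infty \nu(k)\Pr_\mu(g_{t+k}=g\mid x)=\rho_\nu^\mu(g\mid x).
\]
This is exactly the computation in Lemma~\ref{lem:geometric}, with the geometric weights replaced by $\nu(k)$.

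Two technical points need care here. First, the interchange of the sum and the conditional probability is justified by nonnegativity (Tonelli). Second, $\rho_\nu^\mu(\cdot\mid x)$ is a genuine probability distribution on $\mathcal{G}$, since $\sum_k\nu(k)=1$ and each $\Pr_\mu(g_{t+k}=\cdot\mid x)$ is a distribution.

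For finite episodes, the truncation issue (what happens when $t+k>T$) is absorbed by the convention already used in the main text. Either $\nu$ is read as the truncated and renormalized horizon distribution, or terminal states are treated as absorbing. I would state this convention explicitly rather than treat it as an obstacle.

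\textbf{Step 2 (density ratio).} Apply the ``one-positive among negatives'' argument from the proof of Theorem~\ref{thm:infonce-q} with $p^+=\rho_\nu^\mu$. The posterior over the index of the positive is
\[
\Pr(I=i\mid x,\mathcal{G})\propto \frac{p^+(g_i\mid x)}{p_{\mathrm{neg}}(g_i)},
\]
and cross-entropy is minimized when the softmax matches this posterior. The population optimum is therefore
\[
f_\theta(x,g)=\log\rho_\nu^\mu(g\mid x)-\log p_{\mathrm{neg}}(g)+c(x).
\]
The argument is the same as in Proposition~\ref{prop:kstep} and Proposition~\ref{prop:mixture}, which are in fact the special cases $\nu=\delta_k$ and a two-component mixture of sources.

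The step needing the most care is the same identifiability caveat as in Theorem~\ref{thm:infonce-q}. The score is determined only on the support of $p_{\mathrm{neg}}$ and only up to $c(x)$, so we must assume absolute continuity of $\rho_\nu^\mu(\cdot\mid x)$ with respect to $p_{\mathrm{neg}}$, as in the measure-theoretic note. Under that assumption the uniqueness of the cross-entropy minimizer gives the claimed form almost surely.

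\textbf{Step 3 (geometric case).} Set $\nu(k)=(1-\gamma)\gamma^{k-1}$. Substituting into the definition gives
\[
\rho_\nu^\mu(g\mid x)=(1-\gamma)\sum_{k=1}^\infty\gamma^{k-1}\Pr_\mu(g_{t+k}=g\mid x),
\]
which is verbatim Definition~\ref{def:occupancy} with $\pi=\mu$. Hence $\rho_\nu^\mu=\rho^\mu$, completing the proof.
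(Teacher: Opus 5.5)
Your proposal is correct and follows the paper's proof essentially verbatim. You identify $p^+=\rho_\nu^\mu$ by the law of total probability over $k\sim\nu$, invoke the density-ratio argument of Theorem~\ref{thm:infonce-q}, and substitute the geometric weights to recover Definition~\ref{def:occupancy}; your added care about $k$ being independent of the trajectory, Tonelli, and absolute continuity w.r.t.\ $p_{\mathrm{neg}}$ makes explicit what the paper leaves implicit. One small inaccuracy in a side remark: Proposition~\ref{prop:mixture} is not a special case of this statement, since it mixes two different occupancies $\rho^\mu$ and $\tilde\rho$ rather than horizons under the single law $\Pr_\mu$, although it rests on the same argument.
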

\begin{proof}
Under the stated sampling scheme, the positive conditional distribution is exactly
$p^+(g\mid x)=\rho_\nu^\mu(g\mid x)$ by the law of total probability over $k\sim\nu$.
Applying Theorem~\ref{thm:infonce-q} with this $p^+$ yields
$f_\theta(x,g)=\log \rho_\nu^\mu(g\mid x)-\log p_{\mathrm{neg}}(g)+c(x)$.
For geometric $\nu$ on $\{1,2,\dots\}$ with $\nu(k)=(1-\gamma)\gamma^{k-1}$, we have
$\rho_\nu^\mu=\rho^\mu$ by Definition~\ref{def:occupancy}.
\end{proof}

\begin{remark}[Interpreting ``multi-frame'' alignment]
Sampling multiple future frames (or multiple offsets) corresponds to learning $\log \rho_\nu^\mu$ for a
\emph{mixture} over horizons \citep{oord2018cpc}. Geometric $\nu$ is the special choice that recovers a discounted Bellman fixed point.
\end{remark}

\paragraph{Summary.}
Under geometric future sampling, the implicit future-alignment objective is equivalent (at the population
optimum) to estimating the log density ratio of a goal-conditioned discounted occupancy (a value-like reachability
quantity). In the state-conditioned setting used by FOCA, this corresponds to the state occupancy/value $\rho^\mu(g\mid s)$;
an action-conditioned variant would instead recover $\rho^\mu(g\mid s,a)$ and a more direct $Q$-function interpretation
for sparse goal-reaching rewards (Proposition~\ref{prop:bellman}). Explicit object-centric prediction instead summarizes this occupancy via regression (Proposition~\ref{prop:exp-mean}), estimating a conditional mean that is directly useful for control. Their combination can be interpreted as a hybrid estimator with complementary
failure modes; however, claims about policy improvement beyond the dataset behavior require additional offline RL machinery
and careful distribution-shift analysis, especially when incorporating synthetic rollouts.

\end{document}